%% file: main.tex
\documentclass{article}

\PassOptionsToPackage{numbers, compress}{natbib}
\usepackage[preprint]{neurips_2026}

\usepackage[utf8]{inputenc} 
\usepackage[T1]{fontenc}    
\usepackage{hyperref}       
\usepackage{url}            
\usepackage{booktabs}       
\usepackage{amsfonts}       
\usepackage{nicefrac}       
\usepackage{microtype}      
\usepackage{xcolor}         

\usepackage{microtype}
\usepackage{graphicx}
\usepackage{subcaption}
\usepackage{booktabs} 
\usepackage{threeparttable} 
\usepackage{multirow}
\usepackage{amsmath}
\usepackage{amssymb}
\usepackage{mathtools}
\usepackage{amsthm}
\usepackage{siunitx}
\usepackage{placeins}
\usepackage{wrapfig}
\usepackage{xcolor}
\usepackage{tabularx}
\usepackage{xcolor}    
\usepackage[dvipsnames]{xcolor}
\usepackage{tcolorbox} 
\usepackage{longtable}
\usepackage{cleveref}
\usepackage{algorithm}
\usepackage{algorithmic}
\usepackage{makecell}
\usepackage{enumitem}
\input{macro}

\usepackage[nottoc]{tocbibind}
\usepackage{minitoc}
\usepackage{bbm}

\title{Language-Critique Imitation Learning from Suboptimal Demonstrations}

\author{
  Chih-Han Yang$^{1}$ \quad
  Dai-Jie Wu$^{2}$\thanks{Equal contribution} \quad
  Yun-Ping Huang$^{1}$\footnotemark[1]
  \quad
  Ping-Chun Hsieh$^{3}$ \\
  \textbf{Kenneth Marino$^{2}$ \quad
  Shao-Hua Sun$^{1, 4}$} \\
  $^{1}$Graduate Institute of Communication Engineering, National Taiwan University (NTU)\\
  $^{2}$University of Utah\\
  $^{3}$National Yang Ming Chiao Tung University\\
  $^{4}$NTU Artificial Intelligence Center of Research Excellence
}

\begin{document}
\doparttoc 
\faketableofcontents 

\maketitle

\input{tex/0_abstract}
\input{tex/1_intro}
\input{tex/2_related}
\input{tex/3_preliminary}

\input{tex/4_approach}
\input{tex/5_experiments}
\input{tex/6_discussion}

\input{tex/ack}

\begingroup
\hypersetup{colorlinks=false, linkcolor=black}
\hypersetup{pdfborder={0 0 0}}
\renewcommand{\bibname}{References}
\bibliographystyle{unsrtnat}
\bibliography{reference}
\endgroup


\appendix
\input{tex/7_appendix}



\end{document}

%% file: macro.tex
\newcommand{\ie}{\textit{i}.\textit{e}.,\ }
\newcommand{\eg}{\textit{e}.\textit{g}.,\ }

\newcommand{\vspacesection}[1]{\vspace{-0.00cm}
\section{#1}
\vspace{-0.00cm}}

\newcommand{\vspacesubsection}[1]{\vspace{-0.00cm}
\subsection{#1}
\vspace{-0.00cm}}

\newcommand{\method}{LC-BC}
\newcommand{\methoddp}{LC-DP}
\newcommand{\methodcls}{LC-BC-classifier}
\newcommand{\methoddpcls}{LC-DP-classifier}
\newcommand{\rbc}{LC-BC-reward}
\newcommand{\rdp}{LC-DP-reward}
\newcommand{\lang}{language label}
\newcommand{\Lang}{Language label}
\newcommand{\llm}{LLM-Captioner}

\newcommand{\lcloss}{LC-loss}

\newcommand{\sweep}{\textsc{Sweep}}
\newcommand{\boxclose}{\textsc{Box-close}}
\newcommand{\metaworld}{\textsc{Metaworld}}
\newcommand{\blockpushing}{\textsc{BlockPush}}
\newcommand{\peginsertionside}{\textsc{PegInsert}}
\newcommand{\parking}{\textsc{Parking}}
\newcommand{\maze}{\textsc{Maze}}
\newcommand{\hammer}{\textsc{Hammer}}
\newcommand{\relocate}{\textsc{Relocate}}
\newcommand{\adroit}{\textsc{Adroit}}

\newcommand{\mytable}[1]{Table~\ref{#1}}

\newcommand{\mysecref}[1]{Section~\ref{#1}}
\newcommand{\myalgo}[1]{Algorithm~\ref{#1}}

\theoremstyle{plain}
\newtheorem{theorem}{Theorem}[section]

\newtheorem{lemma}[theorem]{Lemma}

\theoremstyle{definition}

\newtheorem{assumption}[theorem]{Assumption}
\theoremstyle{remark}

\definecolor{UserColor}{rgb}{0.6, 0.6, 0.6}
\newtcolorbox{myblock}[1]{colback=UserColor!10, colframe=UserColor, coltitle=white, title=#1, fonttitle=\bfseries}
\newtcolorbox{contblock}{colback=UserColor!10, colframe=UserColor, coltitle=white}

%% file: tex/0_abstract.tex
\begin{abstract}

Prior work on imitation learning from suboptimal demonstrations typically relies on compressed supervision signals such as confidence estimates, discriminator scores, or importance weights. These scalar signals are inherently limited, as they cannot explicitly express intermediate reasoning about task progress, failure modes, or corrective actions.
We propose a language-critique framework for imitation learning from suboptimal demonstrations that instead leverages natural language as a structured supervision signal, avoiding the collapse of expressive feedback into scalars.
Our method first constructs language labels from demonstrations that explicitly describe current progress, identify suboptimal behaviors, and provide fine-grained corrective guidance. 
We then introduce a language-critique loss that directly trains policies using these structured signals without reducing them to scalars, and instantiate it for both behavior cloning and diffusion policies, yielding LC-BC and LC-DP. 
We further provide a theoretical result showing that the proposed objective upper-bounds the expert performance gap under standard assumptions. 
Empirically, we evaluate on diverse continuous control tasks spanning navigation, manipulation, and gameplay, where our methods consistently outperform strong imitation learning and offline reinforcement learning baselines. These results demonstrate that language can serve as a powerful and structured form of supervision for learning robust policies from suboptimal data.

\end{abstract}

%% file: tex/1_intro.tex
\vspacesection{Introduction}

Imitation learning (IL; \citep{morgan1988bc, schaal1996learning, hussein2017imitation, mandlekar2020gti, chen2024diffusion}) has achieved strong results in navigation, driving, robotic control, game playing, and embodied decision making, but policies trained on limited demonstrations often suffer from compounding error~\citep{ross2010efficient} and distribution shift~\citep{ross2011reduction}. A natural remedy is to collect more expert data, yet expert demonstrations are costly and often difficult to scale~\citep{osa2018algorithmic}. Online imitation methods, such as adversarial and distribution-matching approaches~\citep{ho2016generative, fu2017learning, lee2021generalizable, kostrikov2019imitation, lai2024diffusion, huang2024diffusion}, can reduce this gap by querying the environment or an expert during training, but such interaction is often unsafe, expensive, or impractical in real-world settings. This has motivated imitation learning from suboptimal demonstrations~\citep{sasaki2021behavioral, yu2023offline, kim2022demodice, wang2021learning, zhang2021confidence, xu2022discriminator, wang2023imitation, yue2024leverage, yang2022trail, wu2019imitation, 11168119}, where a small amount of expert data is supplemented with abundant imperfect trajectories that are cheaper and easier to collect. When properly exploited, these suboptimal demonstrations can improve state-action coverage and policy robustness without requiring additional environment interaction.

The key challenge is how to extract useful supervision from mixed-quality offline data. Existing methods typically reduce supervision to scalar signals such as confidence estimates~\citep{zhang2021confidence, wu2019imitation}, discriminator scores~\citep{xu2022discriminator, wang2023imitation, yue2024leverage, yang2022trail}, importance weights~\citep{sasaki2021behavioral, yu2023offline, kim2022demodice, wang2021learning}, or rewards (\ie offline RL~\citep{levine2020offline, kumar2020conservative, chen2021decision, fujimoto2021minimalist, kang2023efficient}). While these scalar signals can rank trajectories, they remain highly compressed: they say little about what went wrong, what subgoal should come next, or how behavior should be corrected. As a result, they often fail to capture the structured information that is present in demonstrations, especially in long-horizon or multimodal tasks where progress depends on subtle decisions and stage-specific corrections.

Natural language offers a richer alternative. Instead of collapsing behavior into a single score, language can describe task progress, subgoals, action quality, failure modes, and corrective guidance~\citep{fidler2017teaching, zhang2023bootstrap, cheng2024llfbench, chen2024learning, feng2025natural, lee2025feedback, liu2025synthesizing}.
For example, in a multi-stage manipulation task (\blockpushing{} in \Cref{fig:5-4-blockpushing}) where a robot arm must push two blocks into target regions, language labels can indicate which object to approach, which target to prioritize, and how to adjust the motion, rather than merely assigning a higher or lower reward. This makes language particularly attractive as a supervision signal for learning from suboptimal demonstrations. However, two challenges remain: (1) language labels are not naturally available in offline continuous control datasets, and (2) existing methods often convert language back into scalar rewards or preferences, discarding much of its descriptive power~\citep{cao2024enhancing, wang2024rl, jian2025lapp, urcelay2025from}.

To address these challenges, we introduce language-critique imitation learning from suboptimal demonstrations, an offline IL framework that adds language guidance to learning from expert and suboptimal demonstrations and explicitly uses that guidance to train policies. Our method first constructs structured language labels that describe task progress, action optimality, and movement guidance from offline trajectories. We then design a language-critique loss that uses these labels directly to supervise policy learning, without reducing them to scalar rewards or weights. This loss is applied to both behavior cloning~\citep{morgan1988bc} and diffusion policies~\citep{chi2023diffusionpolicy}, yielding language-critique behavior cloning (LC-BC) and language-critique diffusion policy (LC-DP) as end-to-end offline learning algorithms. We also provide a theoretical justification showing that the proposed language-critique loss upper-bounds the expert-policy performance gap under standard assumptions~\citep{Kakade2002ApproximatelyOA, jin2023provably}.

We evaluate our framework on eight diverse continuous-control tasks spanning navigation, gameplay, and manipulation, including multimodal and precision-manipulation settings. Across these tasks, the proposed language-guided methods, LC-BC and LC-DP, consistently outperform prior imitation learning approaches (\eg DemoDICE~\citep{kim2022demodice}, DWBC~\citep{xu2022discriminator}) and offline RL methods (\eg CQL~\citep{kumar2020conservative}, DT~\citep{chen2021decision}, TD3+BC~\citep{fujimoto2021minimalist}), with especially large gains in multimodal, long-horizon, and precision-control settings. Ablation studies further show that both the structured language design and the language-critique loss are essential, and that the method remains effective even when the expert partition is imperfect. Together, these results show that language is not merely a descriptive aid: it is a powerful supervision signal for learning from suboptimal demonstrations.

%% file: tex/2_related.tex
\section{Related works}

\textbf{Imitation learning (IL)} enables agents to mimic behaviors from expert demonstrations~\citep{schaal1996learning, shafiullah2022behavior}. Online IL methods~\citep{ross2011reduction, ho2016generative, fu2017learning} improve robustness through online environment interaction and expert correction, which incurs high costs and may involve unsafe interactions during training. Offline IL methods based on behavioral cloning~\citep{morgan1988bc, sasaki2021behavioral} instead learn from static datasets, avoiding additional interaction but often suffering from compounding errors~\citep{ross2010efficient} and distribution shift~\citep{ross2011reduction}. To address these limitations, recent work augments limited expert data with abundant suboptimal demonstrations~\citep{yu2023offline, yang2022trail, wu2019imitation, brown2019better}, leveraging techniques such as discriminator- or confidence-based reweighting and behavior purification~\citep{kim2022demodice, wang2021learning, zhang2021confidence,  xu2022discriminator, wang2023imitation, yue2024leverage, florence2021implicit}. Complementary approaches formulate offline IL as inverse reinforcement learning, recovering rewards or world models from mixed-quality data~\citep{watson2023coherent, zeng2023demonstrations, brown2019extrapolating, chen2020learning}. More recently, diffusion policies have further improved robustness and multimodal behavior modeling~\citep{chi2023diffusionpolicy, pearce2023imitating,reuss2023goal, reuss2024multimodal}. Our method follows this offline IL setting with expert and suboptimal demonstrations, while further incorporating \lang{} supervision directly from offline data.

\textbf{Offline reinforcement learning (offline RL)}~\citep{levine2020offline, lange2012batch} learns policies from static datasets without additional environment interaction, but suffers from distribution shift, out-of-distribution actions, and value overestimation. Existing methods address these issues through conservative or implicit value learning~\citep{kumar2020conservative, kostrikov2022offline}, uncertainty estimation~\citep{an2021uncertainty}, value-guided behavior cloning~\citep{jiang2025value}, and conservative model-based planning~\citep{sun2023model, park2024model}. More recently, generative approaches formulate offline RL as sequence modeling~\citep{chen2021decision} or parameterize policies with expressive models such as VAEs and diffusion policies~\citep{kang2023efficient, qing2024a2po, wang2023diffusion, gao2025behavior}, including Q-guided diffusion and diffusion-based planning~\citep{hansen2023idql, lu2023contrastive, chen2024simple, ki2025prior}. However, these methods ultimately rely on scalar reward supervision, whereas our approach instead leverages more expressive \lang{} supervision for offline policy learning.

\textbf{Learning from language.} Natural language has been widely used in decision-making and control, primarily as task instructions or conditioning signals for improving generalization~\citep{shi2024yell, Lynch2021LanguageCI, stepputtis2020language, sun2018neural, sun2020program, liu2023hierarchical, lin2024hierarchical, liu2025synthesizing}. However, most existing methods focus on discrete or text-based domains, with limited study in continuous control~\citep{hermann2017grounded, dai2025racer, xu2025provably, xi2024teaching}. Other approaches convert language into scalar rewards or preferences~\citep{cao2024enhancing, wang2024rl, jian2025lapp, urcelay2025from, yu2023language}, losing much of its structural information. In contrast, our work uses \lang{}s as direct supervision for offline policy learning, preserving the expressiveness of language to enable robust learning from suboptimal demonstrations.

%% file: tex/3_preliminary.tex
\vspacesection{Preliminaries}
\label{prelim}

\textbf{Markov decision process.} A finite-horizon Markov decision process (MDP) is defined by the tuple $\mathcal{M}:= (\mathcal{S}, \mathcal{A}, P, r, \rho_0, \gamma, T)$, where $\mathcal{S}$ and $\mathcal{A}$ are the state and action spaces, $P(s' \mid s, a)$ is the transition function, $r: \mathcal{S} \times \mathcal{A} \to \mathbb{R}$ is the reward function, $\rho_0$ is the initial state distribution, $\gamma \in (0, 1)$ is the discount factor, and $T$ is the horizon. 
A policy $\pi:\mathcal{S}\rightarrow \Delta(\mathcal{A})$ induces a trajectory distribution 
with $s_0\sim\rho_0$, and we denote the expected discounted return by $J_r(\pi):= \mathbb{E}_{\tau \sim \pi}\left[\sum_{t=0}^{T-1} \gamma^t r(s_t, a_t)\right]$.

\textbf{Offline imitation learning from suboptimal demonstrations.}
We consider offline imitation learning with an expert dataset
$\mathcal{D}_E=\{\tau^E_i\}_{i=1}^{M}$ of $M$ trajectories from expert policy $\pi_E$, where
$\tau^E_i=\{(s_{i,t},a^E_{i,t})\}_{t=0}^{|\tau^E_i|-1}$.
The learner also has access to a general dataset
$\mathcal{D}_G=\{\tau^\beta_i\}_{i=1}^{N}$ of $N$ state-action trajectories
$\tau^\beta_i=\{(s_{i,t},a_{i,t})\}_{t=0}^{|\tau^\beta_i|-1}$ collected by behavior policies $\pi_\beta$~\citep{levine2020offline,fu2020d4rl}, spanning expert-like, suboptimal, and random behaviors, with $\mathcal{D}_E \subset \mathcal{D}_G$.
The goal is to learn a policy $\pi_\theta$ that imitates expert behavior while exploiting useful behaviors in $\mathcal{D}_G$.
This motivates supervision signals that identify useful behaviors in heterogeneous offline data.
Additional details are in \Cref{appx:env_details}.

%% file: tex/4_approach.tex
\vspacesection{Approach}
\label{approach}

We present language-critique imitation learning from suboptimal demonstrations, an offline framework that supervises policy learning with structured language labels instead of scalar signals.
\Cref{approach:lc_imitation} formulates the language-critique objective and performance guarantee, \Cref{approach:lang_design} instantiates a structured \lang{} generator $\mu_g$, and \Cref{approach:lc-bc} distills $\mu_g$ into a differentiable \llm{} $\mu_\phi$ with the language-critique loss to train BC and diffusion policies.

\vspacesubsection{From action imitation to language-critique imitation}
\label{approach:lc_imitation}

We use \lang{s} to describe state-action features, distinguish expert from suboptimal behaviors, and exploit them through our language-critique imitation objective. Let $\mu$ denote a \lang{} function conditioned on a state-action pair. For expert state $s_t\sim d_E^t$, action $a_t^E\sim\pi_E(\cdot\mid s_t)$, and label $l_t^E \sim \mu(\cdot \mid s_t, a_t^E)$, we score a policy action $\hat a_t\sim\pi_\theta(\cdot\mid s_t)$ by the likelihood it assigns to $l_t^E$:
\begin{align}
    J_\text{LC}(\pi_\theta) := 
    \sum_{t=0}^{T-1}
        \gamma^t
        \mathbb{E}_{
            s_t\sim d^t_E, 
            a^E_t\sim \pi_E(\cdot\mid s_t),
            l_t^E \sim \mu(\cdot \mid s_t, a_t^E), 
            \hat{a}_t\sim \pi_\theta(\cdot\mid s_t)
        }
        \left[
            \log \mu (l_t^E \mid s_t, \hat{a}_t)
        \right].
    \label{eq:lang_critique_objective}
\end{align}

Mirroring the performance difference formulation in Lemma~\ref{lemma:pdl}, we define the language-critique gap as $\Delta_\text{LC}(\pi_\theta) \coloneqq J_\text{LC}(\pi_E)-J_\text{LC}(\pi_\theta)$.
Since $J_\text{LC}(\pi_E)$ is independent of $\pi_\theta$, minimizing $\Delta_\text{LC}(\pi_\theta)$ is equivalent to maximizing $J_\text{LC}(\pi_\theta)$. Expanding the gap gives
\begin{align}
    \Delta_\text{LC}(\pi_\theta)
    =\sum_{t=0}^{T-1}
        \gamma^t
        \mathbb{E}_{
            s_t\sim d^t_E,
            a^E_t\sim \pi_E(\cdot| s_t),
            l_t^E\sim \mu,
            \hat{a}_t\sim \pi_\theta(\cdot| s_t)
        }
            \left[
                \log \mu(l_t^E | s_t, a_t^E) -
                \log \mu(l_t^E | s_t, \hat{a}_t)
            \right].
    \label{eq:lc_gap_expanded}
\end{align}

Intuitively, the gap penalizes policy actions that make the expert critique less likely than the expert action. In \Cref{appx:theory}, we show that under standard linear-feature realizability and a \textbf{language-sufficiency condition} on $\mu$, minimizing $\Delta_\text{LC}(\pi_\theta)$ controls the expert-state performance gap and recovers expert behavior. The condition rules out uninformative labels and requires $\mu$ to preserve distinctions in state-action features $\psi(s, a):\mathcal{S}\times\mathcal{A}\rightarrow \mathbb{R}^d$: if two actions $a, a'$ at the same state differ, $\mu(\cdot\mid s, a)$ and $\mu(\cdot\mid s, a')$ should be distinguishable. We realize this principle with a structured generator $\mu_g$ (\Cref{approach:lang_design}) and differentiable distillation $\mu_\phi$ for policy optimization (\Cref{approach:lc-bc}).

\vspacesubsection{\Lang{} generation}
\label{approach:lang_design}

To satisfy the language-critique sufficiency in \Cref{approach:lc_imitation}, we define a structured generator $\mu_g$ over state-action pairs. Compared with other $\mu$ candidates (\eg human or free-form LLM/VLM annotations), this design is cheaper, more consistent, and less prone to prompt sensitivity and control-irrelevant variation.

Motivated by reward design principles~\citep{zhang2025rewind, ng1999policy}, each label has three components: task progress \texttt{<T>}, action optimality \texttt{<A>}, and movement guidance \texttt{<M>}. They capture the current stage, whether an action is desirable, and how it should be corrected, while suppressing irrelevant linguistic variation. As shown in \Cref{fig:lc_generate_pipeline} and detailed in \Cref{appx:lang_details}, $\mu_g$ extracts task-relevant heuristics (\eg object distance and gripper status) from a state-action pair $(s_t,a_t)$ without privileged environmental information, and routes them to three component-specific selectors, each sampling a snippet from a pool. Concatenating the snippets yields the final \lang{} $l_t$. Applying $\mu_g$ to each state-action pair in $\mathcal{D}_E$ and $\mathcal{D}_G$ gives language-labeled datasets, where $l_t$ denotes the generated \lang{} for $(s_t,a_t)$:
\begin{align}
    \mathcal{D}^{\text{lang}}_E 
    &= \{(s_t, a^E_t, l^E_t) \mid (s_t, a^E_t) \in \mathcal{D}_E,\; l^E_t \sim \mu_g(\cdot \mid s_t, a^E_t)\}, \\
    \mathcal{D}^{\text{lang}}_G 
    &= \{(s_t, a_t, l_t) \mid (s_t, a_t) \in \mathcal{D}_G,\; l_t \sim \mu_g(\cdot \mid s_t, a_t)\}.
\end{align}

\input{figures/lc_generate_pipeline}

\vspacesubsection{Language-critique imitation learning}
\label{approach:lc-bc}

To enable end-to-end policy optimization, we instantiate the theoretical gap $\Delta_{\text{LC}}(\pi_\theta)$ with $\mathcal{L}_\text{LC}$, a token-level cross-entropy loss computed through the differentiable \lang{} function $\mu_\phi$, namely the \llm{}. The full method is illustrated in \Cref{fig:lc_main_figure}.

\input{figures/lc_main_figure}

\textbf{\llm{}.} We distill the \lang{} generator $\mu_g$ into a differentiable \llm{} $\mu_\phi$ that maps state-action pairs to structured \lang{} outputs. Because $\mu_\phi$ is differentiable, gradients from the language objective in \Cref{eq:lc_gap_expanded} can back-propagate to $\pi_\theta$, enabling joint optimization instead of a staged pipeline that collapses language to scalar targets. Architecturally, $\mu_\phi$ bridges continuous control and \lang{}s by combining a pretrained LLM with an MLP projector that maps state-action pairs into the transformer's hidden space; details appear in \Cref{appx:llm_captioner_details} and \ref{appx:llm_ablation}. We fine-tune $\mu_\phi$ on $\mathcal{D}^\text{lang}_G$ with token-level cross entropy:
\begin{align}
    \ell^\phi_\text{CE}(l_t, s_t, a_t)
    =
    -\frac{1}{N_t}\sum_{n=1}^{N_t}
        \log \mu_\phi(l_{t, n}\mid l_{t, <n}, s_t,a_t)
    \label{eq:ce_loss}
\end{align}
where $N_t$ denotes the tokenized sequence length of $l_t$. Both the LLM-backbone and the projector are updated according to this objective:
\begin{align}
    \mathcal{L}_\text{CE}
    =
    \mathbb{E}_{
        (s_t, a_t, l_t) \sim \mathcal{D}^\text{lang}_G
    }
    \left[
        \ell^\phi_\text{CE}(l_t, s_t,a_t)
    \right]
    \label{eq:llm_captioner_objective}
\end{align}
After fine-tuning, we freeze $\mu_\phi$ and use it for downstream policy optimization. Since the general dataset contains both expert and suboptimal demonstrations, $\mu_\phi$ learns a discriminative mapping from state-action pairs to language. Each component of the predicted \lang{} exposes behavior-quality discrepancies, providing structured supervision that can be exploited during policy learning.

\textbf{Language-critique loss.} We then instantiate the practical language-critique loss through $\mu_\phi$. For a policy action $\hat{a}_t\sim\pi_\theta(\cdot\mid s_t)$ and expert action $a^E_t$, we define
\begin{align}
    \mathcal{L}_{\text{CE}}(\pi_\theta)
    &=
    \mathbb{E}_{
        (s_t,l_t)\sim \mathcal{D}^\text{lang}_E,
        \hat{a}_t\sim \pi_\theta
    }
    \left[
        \ell^\phi_\text{CE}(l_t, s_t, \hat{a}_t)
    \right].
    \label{eq:policy_ce_loss}
    \\
    \mathcal{L}_{\text{CE}}(\pi_E)
    &=
    \mathbb{E}_{
        (s_t,a^E_t,l_t)\sim \mathcal{D}^\text{lang}_E
    }
    \left[
        \ell^\phi_\text{CE}(l_t, s_t, a^E_t)
    \right].
    \label{eq:expert_ce_loss}
\end{align}
The \lcloss{} clips the policy loss at the expert loss:
\begin{align}
    \mathcal{L}_\text{LC}(\pi_\theta, \pi_E)=
    \left[
        \mathcal{L}_\text{CE}(\pi_\theta)-
        \mathrm{sg}(\mathcal{L}_\text{CE}(\pi_E))
    \right]_+,
    \label{eq:lc_loss}
\end{align}
where $[x]_+=\max(x, 0)$ and $\mathrm{sg}(\cdot)$ denotes stop-gradient.
This loss is a differentiable surrogate for $\Delta_{\text{LC}}(\pi_\theta)$: it penalizes policy actions that make the expert \lang{} less likely than the corresponding expert actions, and vanishes once the policy matches the expert under $\mu_\phi$.
Each update transfers the behavior-quality distinctions learned by $\mu_\phi$ from both expert and suboptimal data into the policy.
In \Cref{appx:lc_gap_to_lc_loss}, we show that $\mathcal{L}_\text{LC}$ upper-bounds the performance gap between $\pi_\theta$ and $\pi_E$ up to the captioner distillation residual, extending \Cref{theo:lc_gap_performance_bound} to the practical objective.

\textbf{Language-critique imitation learning.} We apply the \lcloss{} to behavior cloning (BC) and diffusion policy (DP)~\citep{chi2023diffusionpolicy}. Training uses the expert-labeled dataset $\mathcal{D}^\text{lang}_E$, while the \lcloss{} transfers the behavior distinctions learned by $\mu_\phi$ from the broader labeled dataset $\mathcal{D}^\text{lang}_G$, utilizing suboptimal demonstrations into auxiliary supervision.
The \llm{} is used only during training and incurs no test-time cost.
First, for BC, a feedforward MLP policy $\pi_\theta$ is updated with
\begin{align}
    \mathcal{L}_\text{BC}
    = \mathbb{E}_{(s_t,a^E_t)\sim\mathcal{D}_E, \hat{a}_t\sim\pi_\theta}
    \left[
        \left\| a^E_t - \hat{a}_t \right\|_2^2
    \right].
    \label{eq:bc_loss}
\end{align}
Language-Critique Behavior Cloning (\method{}) optimizes \Cref{eq:bc_loss} with the \lcloss{}: $\mathcal{L}_\text{BC}+\lambda\mathcal{L}_\text{LC}$, where $\lambda$ controls \lcloss{} strength.
For DP, the policy $\epsilon_\theta$ is a UNet~\cite{ronneberger2015u} noise-prediction model, yielding Language-Critique Diffusion Policy (\methoddp{}). Its standard objective is
\begin{align}
    \mathcal{L}_\text{DP}
    = \mathbb{E}_{(s_t, a^E_t)\sim\mathcal{D}_E, \epsilon\sim \mathcal{N}(0,\mathbf{I}),k}
    \left[
        \left\|
            \epsilon - \hat{\epsilon}
        \right\|^2_2
    \right]
    \label{eq:dp_loss}
\end{align}
with $\hat{\epsilon}\sim\epsilon_\theta(\cdot\mid s_t,a_t^k,k)$ and $a^k_t=\sqrt{\bar{\alpha}^k}a^E_t+\sqrt{1-\bar{\alpha}^k}\epsilon$. Since DP predicts noise rather than actions, we apply the \lcloss{} to one-step reconstructed action $\hat{a}^0_t$ from the sample $a_t^k$ at diffusion step $k$:
\begin{align}
    \hat{a}_t^0
    =
    \frac{1}{\sqrt{\bar{\alpha}^k}}a_t^k
    -
    \frac{\sqrt{1-\bar{\alpha}^k}}{\sqrt{\bar{\alpha}^k}}\hat{\epsilon},
    \qquad
    \hat{\epsilon}\sim\epsilon_\theta(\cdot\mid s_t,a_t^k,k).
    \label{eq:dp_1step_reconstruct}
\end{align}

Since reconstructions from larger $k$ are noisier, we weight each diffusion-step samples by $\omega^k=\frac{\bar{\alpha}^k}{1-\bar{\alpha}^k}$ and define the reweighted \lcloss{} as
\begin{align}
    &\tilde{\mathcal{L}}_\text{CE}(\pi_\theta)=
    \mathbb{E}_{
        (s_t,l_t)\sim\mathcal{D}^\text{lang}_E, 
        \hat{a}_t^0\sim\epsilon_\theta(s_t),
        k
    }
    \left[
        \omega^k
        \ell^\phi_\text{CE}(l_t, s_t, \hat{a}_t^0)
    \right]
    \label{eq:reweight_agent_ce_loss}\\
    &\tilde{\mathcal{L}}_\text{CE}(\pi_E)=
    \mathbb{E}_{
        (s_t, a^E_t, l_t)\sim\mathcal{D}^\text{lang}_E,
        k
    }
    \left[
        \omega^k
        \ell^\phi_\text{CE}(l_t, s_t, a^E_t)
    \right]
    \label{eq:reweight_expert_ce_loss}\\
    &\tilde{\mathcal{L}}_\text{LC}(\pi_\theta, \pi_E)=
    \left[
        \tilde{\mathcal{L}}_\text{CE}(\pi_\theta)-
        \mathrm{sg}(\tilde{\mathcal{L}}_\text{CE}(\pi_E))
    \right]_+
    \label{eq:reweight_lc_loss}
\end{align}
The \methoddp{} objective, $\mathcal{L}_\text{DP}+\lambda\tilde{\mathcal{L}}_\text{LC}$, encourages reconstructed actions to stay close to expert actions while inducing expert-consistent \lang{}s under $\mu_\phi$'s learned expert-suboptimal distinctions. We provide \methoddp{} details in \Cref{appx:lcdp_details} and full algorithms for both methods in \Cref{appx:algo}.

%% file: figures/lc_generate_pipeline.tex
\begin{figure}[t]
    \includegraphics[width=\linewidth]{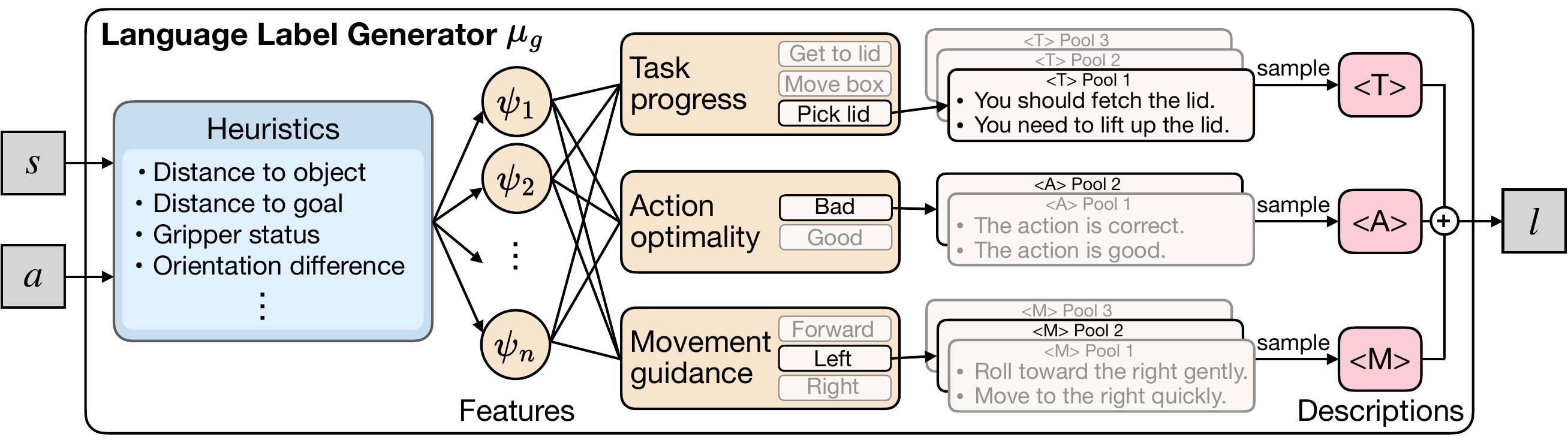}
    \caption{\textbf{\Lang{} generator $\mu_g$.} Given a state-action pair $(s, a)$, $\mu_g$ extracts task-specific features (\eg object distance, gripper status) and routes them to three selectors: task progress \texttt{<T>}, action optimality \texttt{<A>}, and movement guidance \texttt{<M>}. Each selector samples a description snippet, concatenated into the final \lang{} $l$ for scalable and structured supervision.}
    \label{fig:lc_generate_pipeline}
    \vspace{-0.15in}
\end{figure}

%% file: figures/lc_main_figure.tex
\begin{figure}[t]
    \centering
    \def\imgheight{3.6cm} 
    
    \begin{subfigure}[b]{0.28\textwidth}
        \centering
        \includegraphics[height=\imgheight]{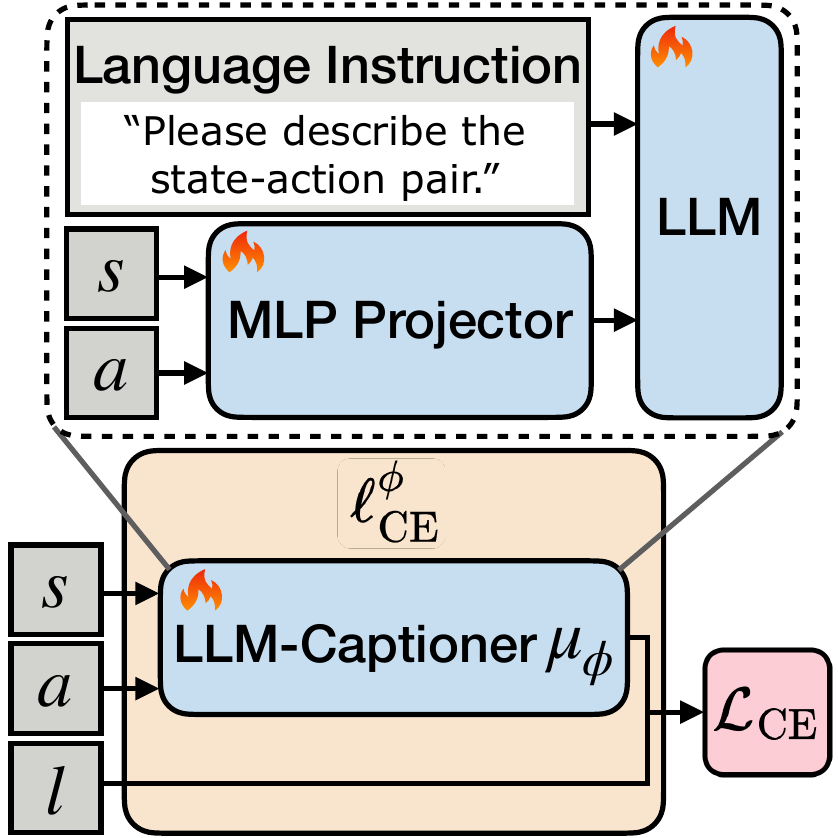}
        \caption{LLM-Captioner}
        \label{fig:llm-cationer}
    \end{subfigure}
    \hfill
    \begin{subfigure}[b]{0.32\textwidth}
        \centering
        \includegraphics[height=\imgheight]{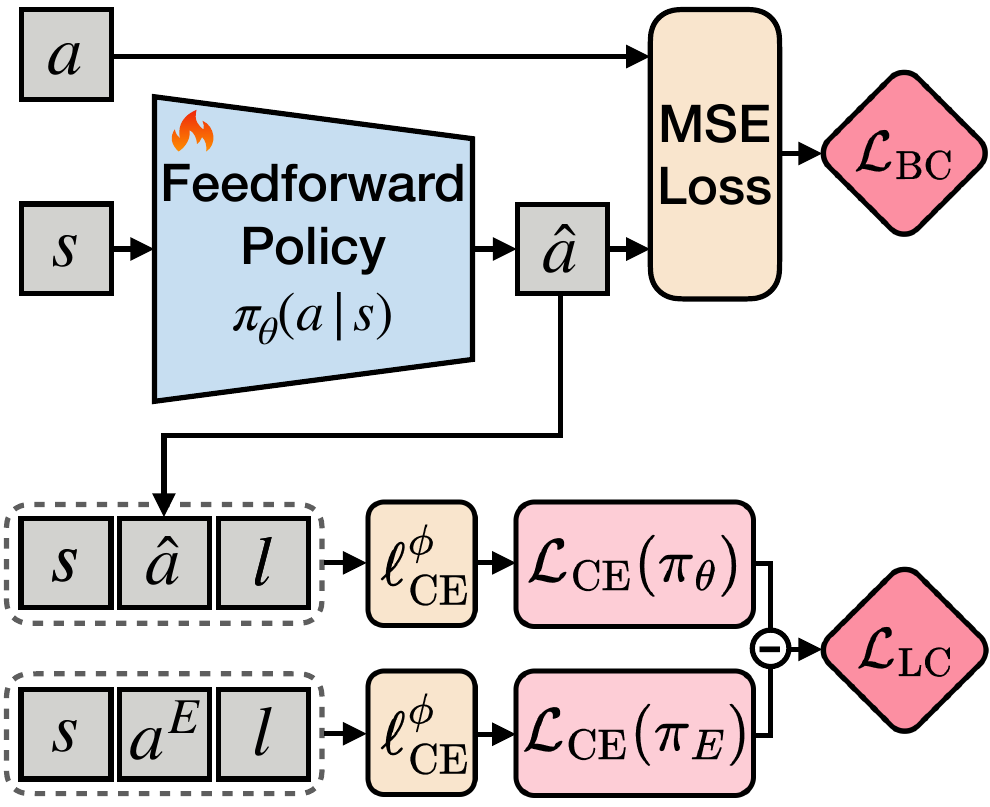}
        \caption{LC-BC}
        \label{fig:lcbc} 
    \end{subfigure}
    \hfill
    \begin{subfigure}[b]{0.36\textwidth}
        \centering
        \includegraphics[height=\imgheight]{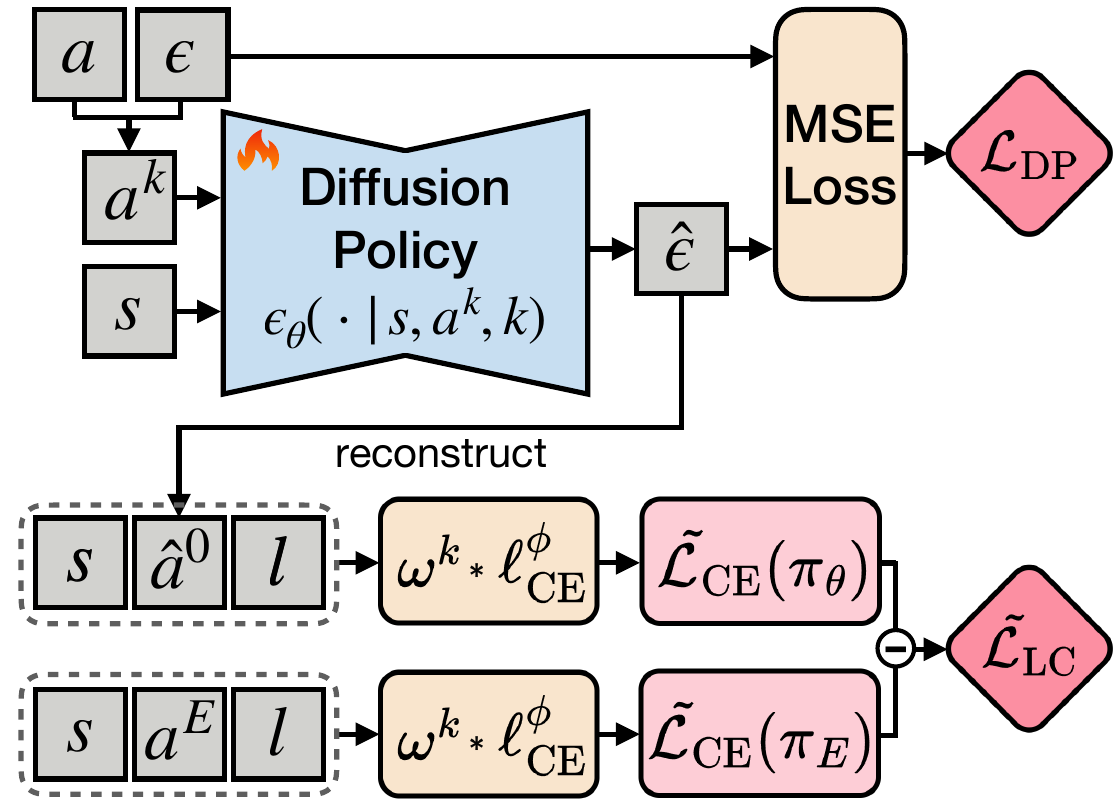}
        \caption{LC-DP}
        \label{fig:lcdp} 
    \end{subfigure}
    
    \caption{\textbf{(a) \llm{} architecture and fine-tuning.} \llm{} uses an LLM backbone and MLP projector to map state-action pairs into the transformer's hidden space. We fine-tune $\mu_\phi$ on $\mathcal{D}^\text{lang}_G$ to distill $\mu_g$.
    \textbf{(b) Language-critique behavior cloning (\method{}).} A feedforward policy $\pi_\theta$ is trained with $\mathcal{L}_\text{BC}$ and \lcloss{} $\mathcal{L}_\text{LC}$. The \lcloss{} applies cross-entropy $\ell^\phi_\text{CE}$ to policy and expert actions via a frozen \llm{}.
    \textbf{(c) Language-critique diffusion policy (\methoddp{}).} A diffusion policy $\epsilon_\theta$ is trained with $\mathcal{L}_\text{DP}$ and reweighted $\tilde{\mathcal{L}}_\text{LC}$. A step-dependent reweighting factor $\omega^k$ handles varying diffusion steps, and \lcloss{} applies to clean actions $\hat{a}^0$ reconstructed from predicted noise $\hat{\epsilon}$.}
    \label{fig:lc_main_figure}
    \vspace{-0.1in}
\end{figure}

%% file: tex/5_experiments.tex
\vspacesection{Experiments}
\label{exp}

\vspacesubsection{Environments}
\label{exp:env_setup}

For evaluation, we use diverse continuous-control environments spanning navigation, driving, and manipulation, with varying horizon length, action precision, multimodality, and state-action dimensionality.
\Cref{fig:environment} shows the experimental tasks.
For each task, we collect an expert dataset $\mathcal{D}_E$ and a general dataset $\mathcal{D}_G$; details are provided in \Cref{appx:env_details} and \ref{appx:training_details}.

\begin{itemize}[leftmargin=2em,itemsep=0pt,topsep=2pt]
    \item \textbf{\maze}: A navigation task ~\cite{fu2020d4rl} where an agent applies $x$/$y$ forces to traverse a maze toward a sampled goal, observing its position, velocity, and the goal location.
    \item \textbf{\parking{}}: A driving task~\citep{highway-env} where an agent parks in a target spot via throttle and steering.
    \item \textbf{\sweep{}} and \textbf{\boxclose{}}: MetaWorld~\citep{yu2019meta} manipulation tasks where a Sawyer robot sweeps a block to a target and places a lid onto a box, respectively.
    \item \textbf{\blockpushing{}}: A multistage task~\citep{florence2021implicit} requiring the robot to push two blocks into two target zones. Multiple valid strategies make this task inherently multimodal.
    \item \textbf{\peginsertionside{}}: A high-precision task~\citep{tao2025maniskill3} requiring joint-level control to insert a peg into a tightly matched side hole, demanding fine-grained corrective feedback.
    \item \textbf{\hammer{}} and \textbf{\relocate{}}: High-dimensional Adroit dexterous hand tasks~\citep{rajeswaran2018learning} involving hammering a nail and relocating a ball.
\end{itemize}

\input{figures/envs}

\vspacesubsection{Task performance}
\label{exp:task_performance}

We first evaluate whether additional \lang{} supervision can better leverage general demonstrations in the suboptimal IL setting. Given a small expert dataset $\mathcal{D}_E$ and a broader general dataset $\mathcal{D}_G$, the key challenge is to identify which suboptimal behaviors are useful for imitation. This experiment tests our central claim that \lang{} provides more expressive behavioral feedback than scalar signals such as rewards, expert-likeness weights, or discriminator scores. We compare \method{} and \methoddp{} against representative IL from suboptimal demonstrations:

\begin{itemize}[leftmargin=2em,itemsep=0pt,topsep=2pt]
    \item \textbf{BC}~\citep{morgan1988bc}: Behavior cloning that maximizes expert action likelihood using expert data only.
    \item \textbf{DWBC}~\citep{xu2022discriminator}: A discriminator-weighted variant of BC that learns a discriminator to distinguish expert from non-expert data and reweights the imitation loss accordingly.
    \item \textbf{DemoDICE}~\citep{kim2022demodice}: An offline imitation method that matches stationary distributions with a KL regularizer toward general data, favoring expert-like state-action visitations.
    \item \textbf{ILID}~\citep{yue2024leverage}: An offline imitation learning method selecting demonstrations whose resulting states lie within the expert state manifold, and then performs BC on both the expert and selected data.
    \item \textbf{DP}~\citep{chi2023diffusionpolicy}: A diffusion-policy baseline that models the action distribution with a conditional denoising process, trained on the expert dataset only.
    \item \textbf{LPB-Offline}: An offline adaptation of LPB~\citep{sun2025latent}, which leverages a dynamic model trained on general data to guide the DP near demonstrated behaviors.
    \item \textbf{\rbc{}} and \textbf{\rdp{}}: Reward-critique variants that reduce \lang{} to scalar rewards under a \lcloss{}-style objective, collapsing all linguistic structure into a single numerical signal. See \Cref{appx:rbc_rdp_details} for details.
    \item \textbf{\methodcls{} and \methoddpcls{}}: Class-critique variants that reduce \lang{} to discrete categorical labels mirroring the \texttt{<T>}, \texttt{<A>}, \texttt{<M>} structure, preserving multi-dimensionality but discarding natural-language expressiveness. See \Cref{appx:mhcbc_mhcdp_details} for details.
\end{itemize}

As shown in \Cref{tab:main_results}, our \method{} and its variants outperform or match feedforward baselines across most environments, with the largest gains on multimodal tasks and those with larger test-time distribution shift, \eg out-of-distribution object or target positions. On \blockpushing{}, \method{} improves over BC from $34.4\%$ to $47.2\%$, indicating that \lang{} distinguishes multimodal demonstrations that scalar signals cannot separate. On manipulation tasks, \method{} outperforms BC and matches or exceeds DWBC, DemoDICE, and ILID, suggesting that language-critique supervision performs better under distribution shift than the scalar signals used by prior methods.

The same pattern carries over to diffusion--our \methoddp{} and its variants match or improve over DP and LPB-Offline on all tasks, with notable gains on manipulation tasks.
On simpler tasks such as \parking{} and \maze{}, most methods already achieve near-saturated performance, limiting the gains from language-critique supervision. DemoDICE performs strongly on several tasks but degrades on \blockpushing{}, suggesting that expert-likeness weighting can be brittle in multimodal or multistage settings. On the precision-control task \peginsertionside{}, \method{} shows only modest improvement over BC, whereas \methoddp{} maintains a clear advantage over DP. This suggests that language labels are more effective for high-level behavioral guidance and are better integrated by diffusion policies through iterative denoising than by feedforward policies for fine-grained control.

Language-critique methods consistently outperform \rbc{} and \rdp{}, confirming that \lang{} carries a richer signal than scalar rewards. The classifier variants reveal a subtler trade-off: \methodcls{} and \methoddpcls{} encode the same information as \lang{} via categorical labels over \texttt{<T>}, \texttt{<A>}, \texttt{<M>}, but discard natural-language expressiveness. While \lang{} captures finer-grained distinctions in behavior quality, it also introduces lexical noise irrelevant to the action signal. Diffusion policies, whose denoising process is robust to such noise, fully exploit the added expressiveness—hence \methoddp{} consistently outperforms \methoddpcls{}. Feedforward policies are more sensitive to this noise, so \methodcls{} occasionally outperforms \method{}.

Overall, the results show that our proposed language-critique imitation learning provides an effective supervision signal for learning from suboptimal demonstrations across various domains.
The qualitative results in \Cref{appx:lc_loss_rollouts} (\Cref{fig:pegInsertion_rollout} \& \Cref{fig:box-close_rollout}) show policy rollouts on \peginsertionside{} and \boxclose{}, comparing BC and \method{} and the corresponding \lang{} and \lcloss{}.

\input{tables/main_table}

\vspacesubsection{Learning from mixed-quality demonstrations without expert partition}

\input{tables/mix_quality_data}

We further relax the assumption of access to a clean expert partition, \ie separate expert and general datasets $\mathcal{D}_E$ and $\mathcal{D}_G$. Specifically, we consider the mixed-quality setting~\citep{levine2020offline, liu2021curriculum}, where only a single offline dataset $\mathcal{D}_G$ containing both expert and suboptimal trajectories is available, analogous to the offline RL setting without reward signals. We evaluate this setting on the \sweep{} and \boxclose{}.

We construct pseudo-expert datasets by selecting sequences with $N$ consecutive positive \lang{}s (where \texttt{<A>} labels each action as positive); larger $N$ ensures stricter alignment with ground-truth expert behavior. Although these datasets contain mixed-quality demonstrations, they are treated as the expert distribution $\mathcal{D}_E$ during imitation learning. As shown in \Cref{tab:mix_quality_data}, we evaluate $N \in \{2, 3, 5, 10\}$, where expert accuracy—the proportion of ground-truth expert samples—serves as a quality proxy ranging from $22.7\%$ to $86.8\%$. For consistency, all methods are trained on identical filtered partitions, though only our approach utilizes \lang{} supervision during policy learning.

\Cref{tab:mix_quality_data} demonstrates that \method{} excels with low-to-medium quality data, outperforming baselines when expert accuracy is lowest. This indicates that \lang{} provides critical fine-grained supervision when demonstrations are suboptimal. 
While DWBC and DemoDICE perform worse with noisy expert partitions, ILID remains competitive in several settings, and our method proves the most robust by learning directly from the \lang{}s.
For diffusion-based policies, \methoddp{} achieves state-of-the-art or competitive results, particularly on \boxclose{}. Ultimately, these results confirm that the proposed language-critique loss enables effective learning from mixed-quality data without requiring a clean expert split.

\vspacesubsection{\Lang{} ablation studies}
\label{exp:lang_ablation_comp}

We conduct a series of ablation studies to identify the contribution of each proposed component and compare our \lang{} generator $\mu_g$ against a general-purpose VLM-based labeling pipeline. 

\textbf{Compositional ablation of \lang{}s.} We study how each proposed component of \lang{}s contributes to policy learning on the task \boxclose{}. As shown in \Cref{tab:lang_ablation} (left), using all three components achieves the best average performance for both \method{} and \methoddp{}, confirming that task progress \texttt{<T>}, action optimality \texttt{<A>}, and movement guidance \texttt{<M>} provide an effective signal. For \method{}, \texttt{<A>} and \texttt{<M>} are individually more useful than \texttt{<T>} alone, as they are more closely related to action selection, while \methoddp{} is well supervised by stage-level signals (\texttt{<T>}) alone; nevertheless, the full label remains the best to distinguish behaviors among the general demonstrations.

\textbf{VLM-generated \lang{}s.}
We investigate whether general-purpose vision-language models (VLMs) can replace our structured language generator $\mu_g$ on \boxclose{}. \Cref{tab:lang_ablation} (right) shows that replacing $\mu_g$ with \texttt{o4-mini} prompting yields limited gains when paired with a small \texttt{SmolLM2-135M-Instruct} captioner: \method{} reaches only $55.2\%$ and \methoddp{} only $63.6\%$, comparable to semantically shuffled $\mu_g$ labels ($55.6\%$). 

We attribute this gap to two distinct factors. First, open-ended VLM labels are verbose, diverse, and redundant, which makes their distribution harder for the captioner $\mu_\phi$ to model. Second, treating $\mu_g$ as a feature-aligned reference reveals substantial semantic inaccuracy: \texttt{o4-mini} agrees with $\mu_g$ on only $51.8\%$ of \texttt{<T>}, $46.1\%$ of \texttt{<A>}, and just $18.8\%$ of \texttt{<M>} descriptions. The sharp drop on fine-grained movement guidance \texttt{<M>} indicates that current VLMs struggle with the precise spatial reasoning over state-action effects that accurate movement labels demand. 

We identify two strategies targeting each factor. First, increasing captioner capacity to \texttt{SmolLM2-360M-Instruct} better models the diverse VLM distribution, raising \method{} to $61.6\%$ and \methoddp{} to $71.2\%$---the latter even surpassing $\mu_g$. Second, post-processing the \texttt{o4-mini} labels into a concise form improves \method{} to $63.4\%$ but does not help \methoddp{}, whereas rewriting them into a $\mu_g$-style structured format performs worse ($56.4\%$ for \method{}, $62.4\%$ for \methoddp{}), confirming that surface-level structure or style alone is insufficient. 

To further disentangle linguistic form from semantic content, we construct two controlled variants: $\mu_g$-shuffled preserves the concise structured form but destroys semantic alignment, while $\mu_g$-verbose preserves semantic correctness but rewrites labels into diverse, human-like language. For \method{}, $\mu_g$-verbose substantially outperforms $\mu_g$-shuffled ($68.0\%$ vs. $55.6\%$), showing that semantic accuracy matters more than surface conciseness for feedforward policies; for \methoddp{}, $\mu_g$-verbose gives no gain over shuffled labels, indicating that diffusion-policy training is instead more sensitive to label-distribution complexity and benefits from concise, well-modeled supervision. 

This asymmetry is consistent with the language-vs-classifier analysis in \Cref{exp:task_performance}: feedforward policies are more sensitive to lexical noise and benefit most from semantically accurate labels, while diffusion policies, whose denoising process is inherently robust to such noise, can tolerate noisier or more verbose supervision and instead benefit more from increased captioner capacity. Thus high-quality VLM labels can be useful when paired with a stronger captioner for DP, but $\mu_g$ remains the most reliable supervision source due to its inherent semantic consistency and feature-aligned structure. Given that the primary failure mode is inaccurate fine-grained movement guidance, we leave to future work the use of stronger VLMs with better spatial and temporal reasoning capabilities, which may close the gap to $\mu_g$ and enable a fully VLM-driven labeling pipeline. Please refer to \Cref{appx:vlm_labels} for a further detailed analysis.

\input{tables/lang_label_ablation}

\vspacesubsection{Comparing to offline RL algorithms}

\input{tables/compare_offlineRL_table}

We examine whether our language labels can capture behavioral information that standard value-learning methods fail to recover from scalar reward signals
by comparing against offline RL algorithms spanning four families: value-based (CQL~\citep{kumar2020conservative}), policy-regularized (TD3+BC~\citep{fujimoto2021minimalist}), sequence modeling (Decision Transformer, DT~\citep{chen2021decision}), and diffusion-based (EDP~\cite{kang2023efficient}). \method{} and \methoddp{} learn from \lang{}s directly, while these baselines use scalar rewards converted from \lang{}s following \citet{cao2024enhancing}.

\Cref{tab:comp_offlineRL_table} shows \method{} and \methoddp{} consistently match or outperform offline RL baselines. \method{} outperforms TD3+BC on nearly all tasks, with particularly large gains on complex manipulation tasks, \eg \boxclose{}, \blockpushing{}, \relocate{}. While TD3+BC performs competitively on simpler tasks, its reliance on value estimation limits its ability to leverage suboptimal data. In the diffusion setting, \methoddp{} achieves the best or near-best performance across most tasks, with notable improvements on long-horizon tasks such as \blockpushing{} and high-dimensional task \relocate{}. 

%% file: figures/envs.tex
\begin{figure}[t]
    \centering
    \begin{minipage}{1.0\textwidth}
        \begin{subfigure}[b]{0.16\textwidth}
            \centering
            \includegraphics[width=\textwidth, height=\textwidth]{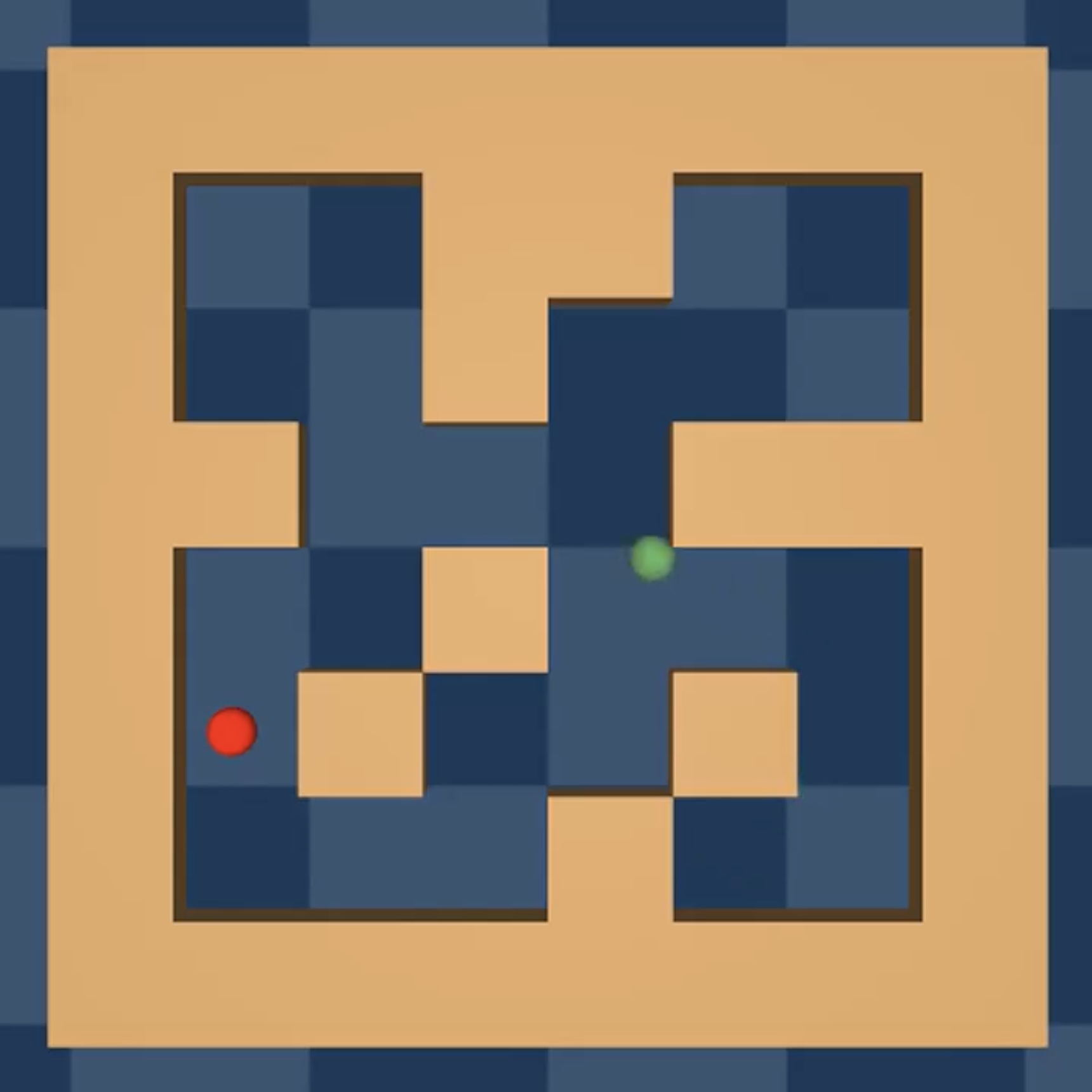}
            \caption{\maze{}}
            \label{fig:5-1-maze2d}
        \end{subfigure}
        \hfill
        \begin{subfigure}[b]{0.16\textwidth}
            \centering
            \includegraphics[width=\textwidth, height=\textwidth]{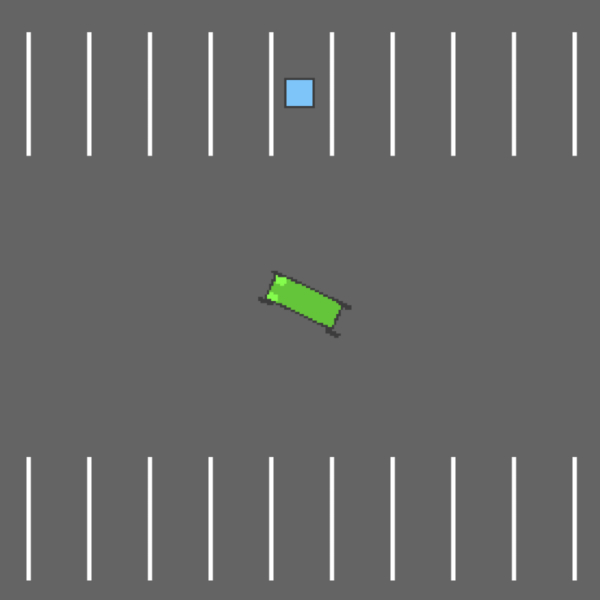}
            \caption{\parking{}}
            \label{fig:5-2-parking-v0} 
        \end{subfigure}
        \hfill
        \begin{subfigure}[b]{0.16\textwidth}
            \centering
            \includegraphics[width=\textwidth, height=\textwidth]{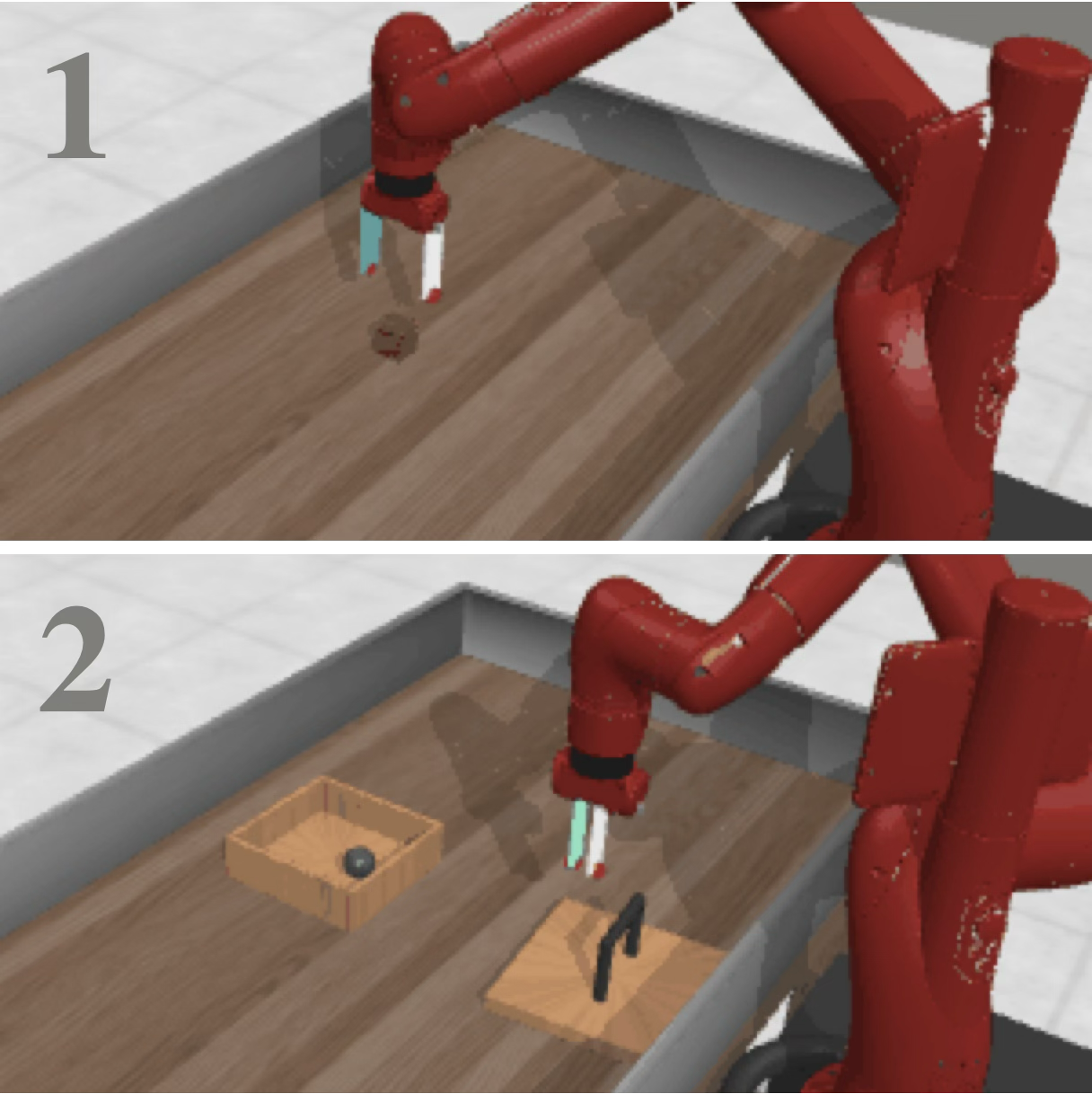}
            \caption{\metaworld{}}
            \label{fig:5-3-metaworld} 
        \end{subfigure}
        \hfill
        \begin{subfigure}[b]{0.16\textwidth}
            \centering
            \includegraphics[width=\textwidth, height=\textwidth]{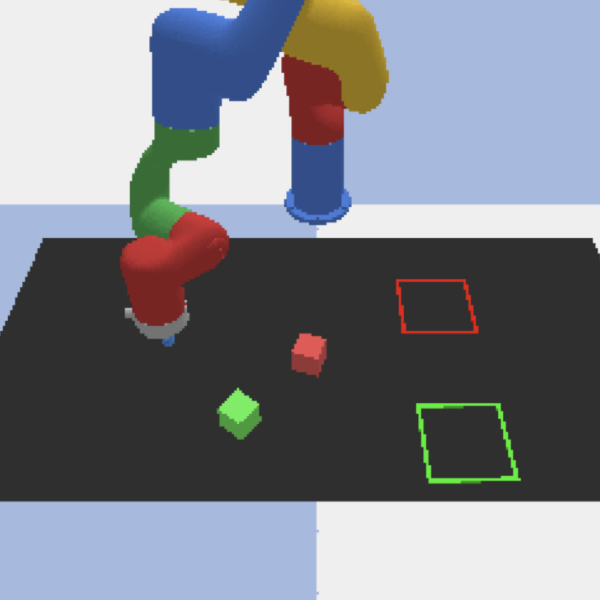}
            \caption{\blockpushing{}}
            \label{fig:5-4-blockpushing}
        \end{subfigure}    
        \hfill
        \begin{subfigure}[b]{0.16\textwidth}
            \centering
            \includegraphics[width=\textwidth, height=\textwidth]{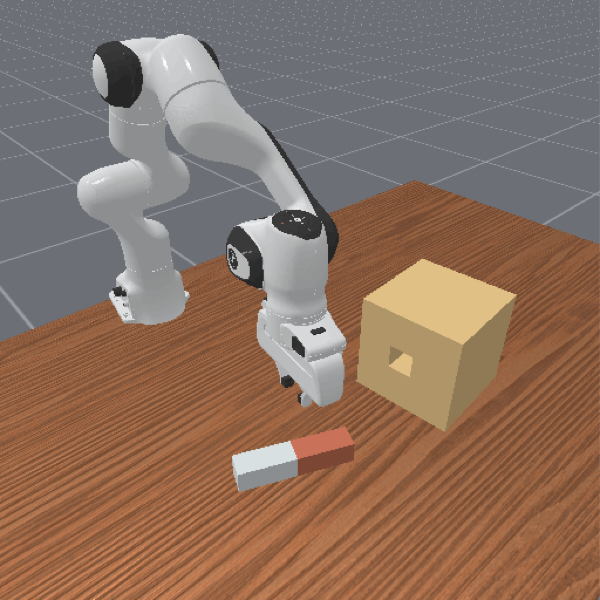}
            \caption{\peginsertionside{}}
            \label{fig:5-5-peginsertionside-v1}
        \end{subfigure}
        \hfill
        \begin{subfigure}[b]{0.16\textwidth}
            \centering
            \includegraphics[width=\textwidth, height=\textwidth]{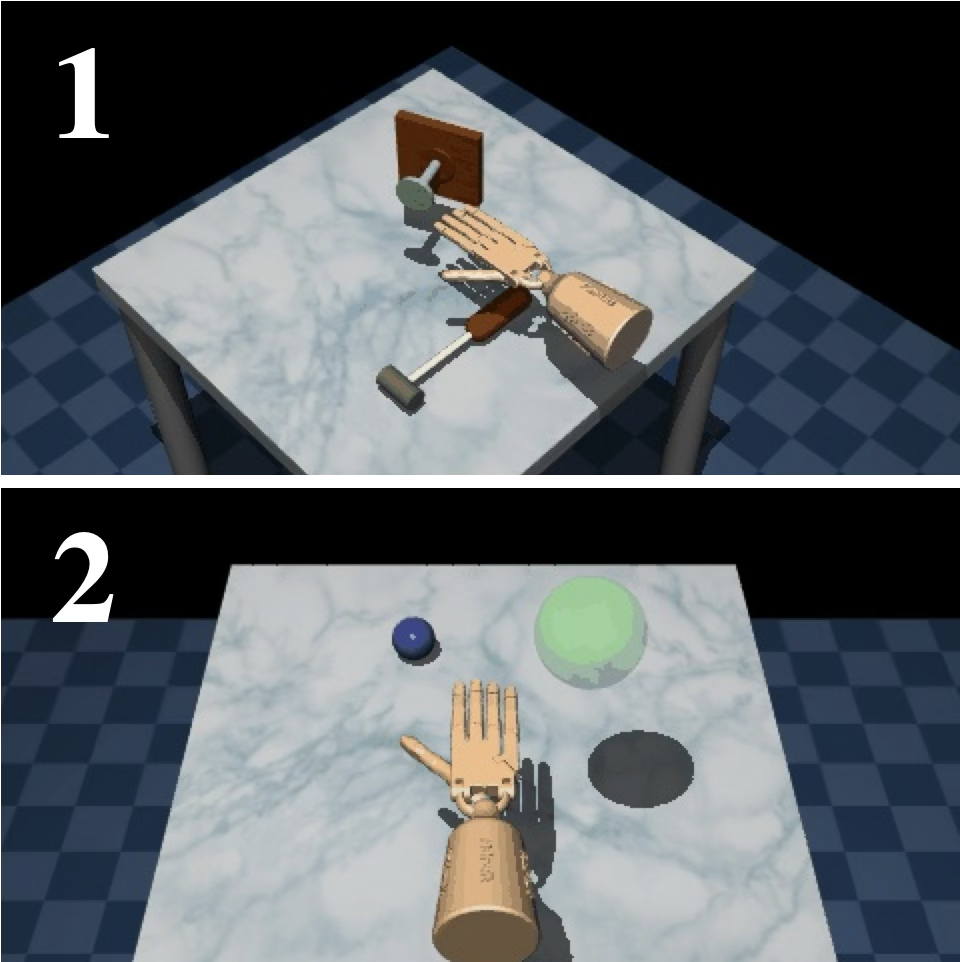}
            \caption{\adroit{}}
            \label{fig:5-6-adroit}
        \end{subfigure}
    \end{minipage}
    \caption[]{\textbf{Environments \& tasks.} 
    \textbf{(a) \maze{}}: Navigate a green agent to a red goal.
    \textbf{(b) \parking{}}: Park a green car in the blue spot.
    \textbf{(c-1) \sweep{}}: Sweep a block sideways.
    \textbf{(c-2) \boxclose{}}: Place a lid onto a box.
    \textbf{(d) \blockpushing{}}: Multi-stage multimodal task pushing two blocks to their targets.
    \textbf{(e) \peginsertionside{}}: Precisely insert a peg into a side hole.
    \textbf{(f-1) \hammer{}}: High-dimensional control to hammer a nail.
    \textbf{(f-2) \relocate{}}: High-dimensional control to relocate a ball.
    }
    \label{fig:environment}
\end{figure}

%% file: tables/main_table.tex
\begin{table}
    \caption{\textbf{Comparisons to offline IL methods.}
We report the mean and standard deviation of success rates across eight tasks, evaluated with 50 environment rollouts and 5 training seeds. 
We pair feedforward policies with BC, DWBC, DemoDICE, \rbc{}, and \method{} and diffusion policies with DP, LPB-Offline, \rdp{}, \methoddp{}.
Best results are in \textbf{bold}; second-best are \underline{underlined}. 
\method{} and \methoddp{} achieve competitive performance relative to all baselines across different environments. (\% omitted.)
}
      \label{tab:main_results}
      \centering
        \begin{small}
          \begin{sc}
            \scalebox{0.72}{
                \begin{tabular}{lcccccccc}
                    \toprule
                    \textbf{Method} & \maze & \parking & \sweep & \boxclose & \blockpushing & \peginsertionside & \hammer & \relocate \\
                    \midrule
                    BC 
                    & 79.6 $\pm$ 4.8
                    & 96.4 $\pm$ 3.0
                    & 76.0 $\pm$ 14.6
                    & 54.8 $\pm$ 8.4
                    & 34.4 $\pm$ 3.3
                    & \underline{66.4} $\pm$ 5.4
                    & 96.0 $\pm$ 3.7
                    & 70.8 $\pm$ 8.3 \\
                    
                    DWBC
                    & \textbf{85.2} $\pm$ 5.8
                    & \underline{98.0} $\pm$ 0.0
                    & 66.0 $\pm$ 6.3
                    & 64.8 $\pm$ 7.8
                    & 26.0 $\pm$ 4.2
                    & 60.8 $\pm$ 1.8
                    & 88.0 $\pm$ 9.9
                    & 48.0 $\pm$ 6.6 \\
                    
                    DemoDICE
                    & 78.0 $\pm$ 3.2
                    & 96.8 $\pm$ 1.1
                    & 68.4 $\pm$ 10.1
                    & \textbf{82.0} $\pm$ 6.3
                    & 3.6 $\pm$ 2.6
                    & \textbf{67.2} $\pm$ 6.1
                    & \underline{96.8} $\pm$ 7.2
                    & 65.2 $\pm$ 8.1 \\

                    ILID
                    & \underline{82.4} $\pm$ 3.6
                    & 93.6 $\pm$ 1.7
                    & \textbf{92.4} $\pm$ 7.3
                    & 69.6 $\pm$ 15.3
                    & 28.4 $\pm$ 4.1
                    & 59.6 $\pm$ 2.6
                    & 90.4 $\pm$ 9.1
                    & \textbf{80.8} $\pm$ 6.9 \\
                    
                    \rbc{}
                    & 79.6 $\pm$ 4.8
                    & 96.8 $\pm$ 2.7
                    & 88.4 $\pm$ 7.0
                    & 63.6 $\pm$ 12.1
                    & 38.8 $\pm$ 5.8
                    & 63.6 $\pm$ 3.3
                    & 90.0 $\pm$ 6.9
                    & 65.6 $\pm$ 11.2 \\

                    \methodcls{}
                    & 78.0 $\pm$ 3.2
                    & \textbf{100.0} $\pm$ 0.0
                    & 85.2 $\pm$ 3.3
                    & \textbf{82.0} $\pm$ 14.4
                    & \underline{40.4} $\pm$ 6.1
                    & 65.2 $\pm$ 3.6
                    & 75.6 $\pm$ 9.8
                    & 66.8 $\pm$ 12.6 \\
                    
                    LC-BC (Ours)
                    & 81.6 $\pm$ 6.1
                    & 96.8 $\pm$ 2.7
                    & \underline{90.0} $\pm$ 5.8
                    & \underline{80.4} $\pm$ 4.6
                    & \textbf{47.2} $\pm$ 4.1
                    & \textbf{67.2} $\pm$ 5.4
                    & \textbf{99.2} $\pm$ 1.8
                    & \underline{71.2} $\pm$ 5.9 \\

                    \midrule
                    
                    DP
                    & \textbf{94.4} $\pm$ 2.2
                    & 96.0 $\pm$ 0.0
                    & 94.0 $\pm$ 5.7
                    & 64.8 $\pm$ 5.0
                    & 83.6 $\pm$ 4.8
                    & \underline{53.6} $\pm$ 1.7
                    & \underline{79.2} $\pm$ 3.0
                    & 67.2 $\pm$ 24.2 \\
                    
                    LPB-Offline
                    & 93.6 $\pm$ 2.2
                    & 95.6 $\pm$ 0.9
                    & 93.6 $\pm$ 7.1
                    & \underline{67.2} $\pm$ 7.8
                    & 78.0 $\pm$ 4.9
                    & 51.6 $\pm$ 5.0
                    & 74.4 $\pm$ 3.6
                    & 59.6 $\pm$ 27.2 \\
                    
                    \rdp{}
                    & \underline{94.0} $\pm$ 2.0
                    & \underline{96.4} $\pm$ 0.9
                    & 94.0 $\pm$ 5.8
                    & 58.4 $\pm$ 5.7
                    & \underline{85.2} $\pm$ 1.1
                    & 53.2 $\pm$ 2.7
                    & 76.0 $\pm$ 1.4
                    & 66.8 $\pm$ 21.8 \\

                    \methoddpcls{}
                    & 92.4 $\pm$ 2.2
                    & \textbf{97.6} $\pm$ 0.9
                    & \underline{96.0} $\pm$ 5.8
                    & 66.8 $\pm$ 4.1
                    & 82.8 $\pm$ 2.3
                    & 52.8 $\pm$ 1.8
                    & 77.6 $\pm$ 0.9
                    & \underline{68.0} $\pm$ 24.8 \\
                    
                    LC-DP (Ours)
                    & \textbf{94.4} $\pm$ 1.7
                    & \textbf{97.6} $\pm$ 1.7
                    & \textbf{96.8} $\pm$ 3.6
                    & \textbf{69.2} $\pm$ 6.7
                    & \textbf{88.0} $\pm$ 1.4
                    & \textbf{56.8} $\pm$ 2.7
                    & \textbf{80.0} $\pm$ 3.2
                    & \textbf{77.6} $\pm$ 4.1 \\
                    \bottomrule
                \end{tabular}
             }
          \end{sc}
        \end{small}
\end{table}

%% file: tables/mix_quality_data.tex
\begin{table}
    \caption{\textbf{Learning from mixed-quality demonstrations.} We relax the clean expert dataset assumption and use language-based heuristic filtering to extract varying levels of expert-quality data from the general dataset. We evaluate two tasks across four quality levels \textit{N=\{2,3,5,10\}}, with expert accuracy shown in parentheses. Higher $N$ indicates stricter filtering; expert accuracy measures the fraction of selected samples from the ground-truth expert dataset as a proxy for data quality. We report success rates over 50 rollouts and 3 seeds (\% omitted). \method{} and \methoddp{} remain competitive with all baselines across both environments and policy architectures.}
      \label{tab:mix_quality_data}
      \centering
        \begin{small}
          \begin{sc}
            \scalebox{0.72}{
                \begin{tabular}{lcccccccc}
                    \toprule
                    \multirow{2.5}{*}{\textbf{Method}}
                    & \multicolumn{4}{c}{\sweep{}}
                    & \multicolumn{4}{c}{\boxclose{}}\\
                    \cmidrule(lr){2-5} \cmidrule(lr){6-9}
                    \multicolumn{1}{r}{}
                    & \textit{2 (22.7\%}) & \textit{3 (33.6\%}) & \textit{5 (51.9\%}) & \textit{10 (84.3\%}) & \textit{2 (32.9\%}) & \textit{3 (48.9\%}) & \textit{5 (68.4\%}) & \textit{10 (86.8\%}) \\
                    \midrule
                    BC 
                    & 53.3 $\pm$ 8.1
                    & 65.3 $\pm$ 7.0
                    & 72.7 $\pm$ 3.1
                    & \underline{94.7} $\pm$ 3.1
                    & 50.0 $\pm$ 4.0
                    & 55.3 $\pm$ 11.0
                    & \underline{87.3} $\pm$ 4.6
                    & 84.7 $\pm$ 2.3 \\
                    
                    DWBC
                    & 60.7 $\pm$ 8.1
                    & 69.3 $\pm$ 5.0
                    & 74.0 $\pm$ 4.0
                    & 84.0 $\pm$ 3.5
                    & 34.7 $\pm$ 2.3
                    & 52.7 $\pm$ 7.0
                    & 71.3 $\pm$ 3.1
                    & 80.0 $\pm$ 2.0 \\
                    
                    DemoDICE
                    & 66.7 $\pm$ 5.0
                    & 76.7 $\pm$ 8.3
                    & \textbf{87.3} $\pm$ 3.1
                    & 80.7 $\pm$ 8.3
                    & \underline{65.3} $\pm$ 15.5
                    & 61.3 $\pm$ 5.0
                    & \underline{87.3} $\pm$ 5.0
                    & 84.0 $\pm$ 5.3 \\

                    ILID
                    & 68.7 $\pm$ 10.1
                    & \textbf{86.7} $\pm$ 8.1
                    & 80.7 $\pm$ 8.3
                    & \textbf{95.3} $\pm$ 8.1
                    & 68.0 $\pm$ 7.2
                    & \underline{63.3} $\pm$ 6.4
                    & 50.7 $\pm$ 43.9
                    & 88.0 $\pm$ 2.0 \\
                    
                    \rbc{}
                    & 52.0 $\pm$ 7.2
                    & 60.7 $\pm$ 4.6
                    & 76.0 $\pm$ 10.6
                    & 91.3 $\pm$ 7.0
                    & 58.7 $\pm$ 5.0
                    & 56.7 $\pm$ 13.3
                    & 82.0 $\pm$ 8.0
                    & 85.3 $\pm$ 5.0 \\

                    \methodcls{}
                    & \textbf{88.7} $\pm$ 2.3
                    & \underline{85.3} $\pm$ 9.2
                    & \underline{85.3} $\pm$ 13.3
                    & 78.7 $\pm$ 5.0
                    & 64.7 $\pm$ 4.2
                    & 64.7 $\pm$ 5.8
                    & 80.0 $\pm$ 7.2
                    & \underline{87.3} $\pm$ 6.4 \\
                    
                    \method{} (Ours)
                    & \underline{72.0} $\pm$ 8.7
                    & 81.3 $\pm$ 4.6
                    & 84.0 $\pm$ 8.7
                    & 88.0 $\pm$ 5.3
                    & \textbf{71.3} $\pm$ 8.3
                    & \textbf{78.0} $\pm$ 3.5
                    & \textbf{88.7} $\pm$ 7.0
                    & \textbf{94.0} $\pm$ 2.0 \\

                    \midrule
                    
                    DP
                    & \textbf{66.0} $\pm$ 3.5
                    & \underline{70.7} $\pm$ 1.2
                    & 86.0 $\pm$ 0.0
                    & \textbf{100.0} $\pm$ 0.0
                    & 56.0 $\pm$ 4.0
                    & \underline{91.3} $\pm$ 1.2
                    & 92.0 $\pm$ 2.0
                    & 95.3 $\pm$ 2.3 \\
                    
                    LPB-Offline
                    & 47.3 $\pm$ 2.3
                    & 58.0 $\pm$ 2.0
                    & 73.3 $\pm$ 8.1
                    & 97.3 $\pm$ 2.3
                    & 54.0 $\pm$ 5.3
                    & 82.0 $\pm$ 4.0
                    & 86.7 $\pm$ 6.1
                    & 91.3 $\pm$ 4.6 \\
                    
                    \rdp{}
                    & \underline{64.7} $\pm$ 4.2
                    & \textbf{72.0} $\pm$ 2.0
                    & \underline{87.3} $\pm$ 1.2
                    & 98.7 $\pm$ 1.2
                    & 58.0 $\pm$ 3.5
                    & 86.0 $\pm$ 2.0
                    & 91.3 $\pm$ 1.2
                    & 95.3 $\pm$ 1.2\\

                    \methoddpcls{}
                    & \underline{64.7} $\pm$ 3.1
                    & 70.0 $\pm$ 2.0
                    & \textbf{88.0} $\pm$ 2.0
                    & \textbf{100.0} $\pm$ 0.0
                    & \underline{58.7} $\pm$ 3.1
                    & 88.7 $\pm$ 1.2
                    & \underline{92.7} $\pm$ 2.3
                    & \underline{96.0} $\pm$ 2.0 \\
                    
                    \methoddp{} (Ours)
                    & 62.7 $\pm$ 1.2
                    & \textbf{72.0} $\pm$ 5.3
                    & \textbf{88.0} $\pm$ 4.0
                    & \underline{99.3} $\pm$ 1.2
                    & \textbf{70.0} $\pm$ 2.0 
                    & \textbf{92.0} $\pm$ 2.0
                    & \textbf{94.0} $\pm$ 0.0
                    & \textbf{96.7} $\pm$ 1.2 \\
                    \bottomrule
                \end{tabular}
             }
          \end{sc}
        \end{small}
\end{table}

%% file: tables/lang_label_ablation.tex
\begin{table}
    \caption{\textbf{\Lang{} ablation.} We ablate \lang{} on \boxclose{}, reporting success rates averaged over 50 environment rollouts and 5 training seeds ({\%} omitted). \textbf{(Left)} Ablation over the three \lang{} components--\texttt{<T>}: Task Progress, \texttt{<A>}: Action Optimality, and \texttt{<M>}: Movement Guidance--showing that all three provide complementary supervision and achieve the best performance. \textbf{(Right)} Comparison of $\mu_g$ versus OpenAI \texttt{o4-mini} \lang{}s across different \llm{} backbones and post-processing strategies (concise, and $\mu_g$-style). 
    $\mu_g$-shuffled randomly permutes \lang{}s within $\mathcal{D}^\text{lang}_G$, preserving format but destroying semantic accuracy; $\mu_g$-verbose preserves semantic accuracy but replaces the format with verbose descriptions.}
    \centering
    \begin{subtable}[t]{0.36\linewidth}
        \centering
        \scalebox{0.73}{
        \begin{tabular}{ccc cc}
            \toprule
            \texttt{<T>} & \texttt{<A>} & \texttt{<M>} & \textbf{\method{}} & \textbf{\methoddp{}} \\
            \midrule
            $\times$ 
            & $\times$ 
            & $\checkmark$     
            & 67.6 $\pm$ 8.3
            & 56.0 $\pm$ 5.7 \\
            $\times$ 
            & $\checkmark$     
            & $\times$ 
            & 69.6 $\pm$ 10.5
            & 65.2 $\pm$ 8.1 \\
            $\checkmark$     
            & $\times$ 
            & $\times$ 
            & 51.6 $\pm$ 4.3
            & \underline{66.4} $\pm$ 7.1 \\
            $\checkmark$ 
            & $\checkmark$ 
            & $\times$     
            & 65.6 $\pm$ 9.5
            & 62.4 $\pm$ 15.5 \\
            $\checkmark$ 
            & $\times$     
            & $\checkmark$ 
            & \underline{79.6} $\pm$ 10.8
            & 62.0 $\pm$ 9.6 \\
            $\times$     
            & $\checkmark$ 
            & $\checkmark$ 
            & 74.4 $\pm$ 15.3
            & 62.8 $\pm$ 8.8 \\
            $\checkmark$ 
            & $\checkmark$ 
            & $\checkmark$ 
            & \textbf{80.4} $\pm$ 4.6
            & \textbf{69.2} $\pm$ 6.7 \\
            \bottomrule
        \end{tabular}
    }
    \end{subtable}
    \hfill
    \begin{subtable}[t]{0.6\linewidth}
        \centering
        \scalebox{0.73}{
        \begin{tabular}{ccc cc}
            \toprule
            \textbf{\Lang{}}
            & \textbf{\llm{} Backbone}
            & \textbf{\method{}}
            & \textbf{\methoddp{}} \\
            \midrule
            
            \texttt{o4-mini} 
            & \texttt{SmolLM2-135M-Instruct} 
            & 55.2 $\pm$ 15.1
            & 63.6 $\pm$ 6.1 \\
            \texttt{o4-mini} 
            & \texttt{SmolLM2-360M-Instruct} 
            & 61.6 $\pm$ 14.6
            & \textbf{71.2} $\pm$ 12.1 \\

            \texttt{o4-mini} concise 
            & \texttt{SmolLM2-135M-Instruct} 
            & 63.4 $\pm$ 10.5
            & 62.4 $\pm$ 7.1 \\
            \texttt{o4-mini} $\mu_g$-style 
            & \texttt{SmolLM2-135M-Instruct} 
            & 56.4 $\pm$ 13.4
            & 62.4 $\pm$ 12.8 \\

            $\mu_g$-shuffled 
            & \texttt{SmolLM2-135M-Instruct} 
            & 55.6 $\pm$ 13.5 
            & 62.4 $\pm$ 11.3 \\

            $\mu_g$-verbose 
            & \texttt{SmolLM2-135M-Instruct} 
            & \underline{68.0} $\pm$ 12.4 
            & 62.0 $\pm$ 10.7 \\
            
            $\mu_g$ 
            & \texttt{SmolLM2-135M-Instruct} 
            & \textbf{80.4} $\pm$ 4.6
            & \underline{69.2} $\pm$ 6.7 \\
            \bottomrule
        \end{tabular}
        }
    \end{subtable}
    \label{tab:lang_ablation}
\end{table}

%% file: tables/compare_offlineRL_table.tex
\begin{table}
    \caption{\textbf{Comparisons to offline RL methods.} We report the mean and standard deviation of success rates across eight tasks, evaluated with $50$ environment rollouts and $5$ seeds. We compare feedforward baselines (CQL, TD3+BC), a sequence-modeling baseline (Decision Transformer; DT), and diffusion baselines (EDP).
    Best results are in \textbf{bold}; second-best are \underline{underlined}. 
    \method{} achieves the best feedforward performance on all eight tasks,
    while \methoddp{} is best on five tasks.
    }
      \label{tab:comp_offlineRL_table}
      \centering
        \begin{small}
          \begin{sc}
            \scalebox{0.74}{
                \begin{tabular}{lcccccccc}
                    \toprule
                    \textbf{Method} & \maze & \parking & \sweep & \boxclose & \blockpushing & \peginsertionside & \hammer & \relocate \\
                    \midrule
                    CQL 
                    & 30.0 $\pm$ 1.4
                    & 25.2 $\pm$ 3.0
                    & 4.0 $\pm$ 3.2
                    & 10.0 $\pm$ 10.3
                    & 0.0 $\pm$ 0.0
                    & 1.6 $\pm$ 0.9
                    & 50.8 $\pm$ 5.2
                    & 2.4 $\pm$ 2.2 \\

                    TD3+BC 
                    & \underline{81.2} $\pm$ 3.0
                    & \underline{93.6} $\pm$ 0.9
                    & 38.8 $\pm$ 6.7
                    & 56.8 $\pm$ 10.4
                    & \underline{41.2} $\pm$ 5.8
                    & 58.4 $\pm$ 3.6
                    & \underline{97.6} $\pm$ 5.4
                    & 41.6 $\pm$ 7.0 \\
                    
                    
                    LC-BC (Ours)
                    & \textbf{81.6} $\pm$ 6.1
                    & \textbf{96.8} $\pm$ 2.7
                    & \textbf{90.0} $\pm$ 5.8
                    & \textbf{80.4} $\pm$ 4.6
                    & \textbf{47.2} $\pm$ 4.1
                    & \textbf{67.2} $\pm$ 5.4
                    & \textbf{99.2} $\pm$ 1.8
                    & \textbf{71.2} $\pm$ 5.9 \\

                    \midrule

                    DT 
                    & 41.6 $\pm$ 4.3
                    & 39.2 $\pm$ 2.3
                    & 59.2 $\pm$ 7.7
                    & 42.0 $\pm$ 5.5
                    & 2.4 $\pm$ 0.9
                    & 40.4 $\pm$ 3.3
                    & 61.6 $\pm$ 12.8
                    & 11.6 $\pm$ 2.6 \\

                    \midrule
                    
                    EDP
                    & 91.2 $\pm$ 1.1
                    & \underline{96.8} $\pm$ 1.1
                    & \textbf{98.4} $\pm$ 2.6
                    & \textbf{71.6} $\pm$ 5.2
                    & 82.0 $\pm$ 6.8
                    & \underline{56.0} $\pm$ 2.4
                    & \textbf{83.6} $\pm$ 3.3
                    & 44.0 $\pm$ 5.1 \\
                    

                    LC-DP (Ours)
                    & \textbf{94.4} $\pm$ 1.7
                    & \textbf{97.6} $\pm$ 1.7
                    & \underline{96.8} $\pm$ 3.6
                    & \underline{69.2} $\pm$ 6.7
                    & \textbf{88.0} $\pm$ 1.4
                    & \textbf{56.8} $\pm$ 2.7
                    & \underline{80.0} $\pm$ 3.2
                    & \textbf{77.6} $\pm$ 4.1 \\
                    \bottomrule
                \end{tabular}
             }
          \end{sc}
        \end{small}
    \vspace{-0.15in}
\end{table}

%% file: tex/6_discussion.tex
\vspacesection{Conclusion}
We introduced a language-critique framework for imitation learning from suboptimal demonstrations, which constructs critiques describing progress, failures, and corrective actions, and trains policies via a language-critique loss without reducing language to scalars, instantiated as LC-BC and LC-DP with theoretical guarantees on the expert-performance gap. Across diverse continuous-control tasks, our method consistently outperforms imitation and offline RL baselines. These results show that language provides an expressive and effective supervision signal for learning robust policies from suboptimal data.
Broader impacts and limitations of our work are discussed in \Cref{appx:impact} and \ref{appx:limitation}, respectively.

%% file: tex/ack.tex
\section*{Acknowledgments}
This work was supported in part by the National Science and Technology Council, Taiwan, under Grants 114-2628-E-002-021-, 114-2628-E-A49-002, 115-2634-F-002-012-, and 115-2223-E-002-005-MY3, and the Taiwan Centers of Excellence in Artificial Intelligence. 
Shao-Hua Sun was supported by the Yushan Fellow Program of the Ministry of Education, Taiwan.

%% file: tex/7_appendix.tex
\clearpage
\section*{Appendix}

\vspace{-1.8cm}
\begingroup
\hypersetup{colorlinks=false, linkcolor=black}
\hypersetup{pdfborder={0 0 0}}
\part{} 
\parttoc 
\endgroup

\input{tex/appendix/theory}

\input{tex/appendix/algorithm}
\input{tex/appendix/lang_details}
\input{tex/appendix/llm_captioner}
\input{tex/appendix/lcdp_details}

\input{tex/appendix/env_details}

\input{tex/appendix/training_details}
\input{tex/appendix/baseline_details}

\input{tex/appendix/add_experiments}
\input{tex/appendix/lc_loss_rollouts}

\input{tex/appendix/impact}
\input{tex/appendix/limitation}

%% file: tex/appendix/theory.tex
\section{Theoretical Analysis}
\label{appx:theory}

In this section, we provide theoretical justification for the language-critique objective introduced in \Cref{approach:lc_imitation}. We show that the expert-state language-critique gap naturally reflects the sub-optimality gap. Specifically, under mild assumptions, minimizing the language-critique gap $\Delta_\text{LC}(\pi_\theta)$ decreases an upper bound on the gap $J_{r^*}(\pi_E) - J_{r^*}(\pi_\theta)$. This provides a theoretical basis for using \lang{} as an auxiliary imitation signal beyond action matching.

\subsection{Expert-state language-critique objective}

Our method uses both expert and suboptimal demonstrations, where suboptimal data may contain useful near-optimal behaviors as well as noisy rollouts that should be discounted. To identify useful behaviors, we need a unified measure of policy quality for both $\pi_\theta$ and the behavior policies underlying the demonstrations. 

Formally, let $d^t_E(s) \coloneq \Pr(s_t = s \mid \pi_E)$ be the expert state distribution at timestep $t$, and let $A^{\pi}_{r,t}(s, a) := Q^{\pi}_{r,t}(s, a) - V^{\pi}_{r,t}(s)$ be the time-dependent advantage, where $Q^{\pi}_{r,t}$ and $V^{\pi}_{r,t}$ are the action- and state-value functions of $\pi$ under reward $r$ over the remaining horizon.

We build on the following performance difference lemma:
\begin{lemma}[Performance difference lemma.]
    Following \citet{Kakade2002ApproximatelyOA}, the return difference between the expert policy and an arbitrary policy $\pi$, evaluated under the optimal reward $r^*$, admits the following decomposition:
    \begin{align}
        J_{r^*}(\pi_E) - J_{r^*}(\pi)
        =
        \sum_{t=0}^{T-1}
        \gamma^t
        \mathbb{E}_{
            s_t \sim d^t_E,\,
            a^E_t \sim \pi_E(\cdot\mid s_t)
        }
        \!\left[A^{\pi}_{r^*, t}(s_t, a^E_t)\right].
        \label{eq:perf_objective_gap}
    \end{align}
    \label{lemma:pdl}
\end{lemma}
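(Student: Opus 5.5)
The plan is to prove this by the standard telescoping argument of \citet{Kakade2002ApproximatelyOA}, adapted to the finite-horizon, time-indexed setting of \Cref{prelim}. First I will fix the time-dependent value functions precisely. For a policy $\pi$, I define
\begin{align}
V^{\pi}_{r^*,t}(s)=\mathbb{E}_{\pi}\Big[\sum_{k=t}^{T-1}\gamma^{k-t}r^*(s_k,a_k)\,\Big|\,s_t=s\Big],
\end{align}
with the convention $V^{\pi}_{r^*,T}\equiv 0$. The action-value function is $Q^{\pi}_{r^*,t}(s,a)=r^*(s,a)+\gamma\,\mathbb{E}_{s'\sim P(\cdot\mid s,a)}[V^{\pi}_{r^*,t+1}(s')]$, and $V^{\pi}_{r^*,t}(s)=\mathbb{E}_{a\sim\pi(\cdot\mid s)}[Q^{\pi}_{r^*,t}(s,a)]$. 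With these definitions, $J_{r^*}(\pi)=\mathbb{E}_{s_0\sim\rho_0}[V^{\pi}_{r^*,0}(s_0)]$. I will check that the advantage $A^{\pi}_{r^*,t}$ in the statement is exactly $Q^{\pi}_{r^*,t}-V^{\pi}_{r^*,t}$ under this convention, so that the discount inside the value functions runs over the remaining horizon.

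Next, I take an expert trajectory $\tau\sim\pi_E$ and, pathwise, add and subtract the value of $\pi$ along it. Since $V^{\pi}_{r^*,T}=0$, the identity
\begin{align}
\sum_{t=0}^{T-1}\gamma^t r^*(s_t,a_t)=V^{\pi}_{r^*,0}(s_0)+\sum_{t=0}^{T-1}\gamma^t\big(r^*(s_t,a_t)+\gamma V^{\pi}_{r^*,t+1}(s_{t+1})-V^{\pi}_{r^*,t}(s_t)\big)
\end{align}
holds: the $V$ terms telescope, because $\gamma^{t+1}V^{\pi}_{r^*,t+1}(s_{t+1})$ from step $t$ cancels the $-\gamma^{t+1}V^{\pi}_{r^*,t+1}(s_{t+1})$ from step $t+1$. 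Taking the expectation over $\tau\sim\pi_E$ turns the left side into $J_{r^*}(\pi_E)$. Because both policies share $\rho_0$, the first term on the right has expectation $J_{r^*}(\pi)$.

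For the remaining sum, I apply the tower property at each $t$, conditioning on $(s_t,a_t)$. By the Markov property, $s_{t+1}\sim P(\cdot\mid s_t,a_t)$ regardless of the policy, so $\mathbb{E}[r^*(s_t,a_t)+\gamma V^{\pi}_{r^*,t+1}(s_{t+1})\mid s_t,a_t]=Q^{\pi}_{r^*,t}(s_t,a_t)$. Under $\pi_E$, the marginal of $s_t$ is $d^t_E$ and $a_t\sim\pi_E(\cdot\mid s_t)$. Each summand therefore becomes $\gamma^t\,\mathbb{E}_{s_t\sim d^t_E,\,a^E_t\sim\pi_E}[A^{\pi}_{r^*,t}(s_t,a^E_t)]$. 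Linearity over the finite sum then gives \Cref{eq:perf_objective_gap}.

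There is no deep obstacle. The step that needs care is the bookkeeping of discount factors against the undiscounted marginal $d^t_E$. The explicit $\gamma^t$ in the statement must come out of the telescoping, not be absorbed into $d^t_E$, and the $\gamma^{k-t}$ convention inside $V^{\pi}_{r^*,t}$ is exactly what makes the $\gamma^{t+1}$ terms cancel. I will also note that boundedness of $r^*$ over the finite horizon ensures every expectation is finite, which justifies exchanging the sum and the expectation.
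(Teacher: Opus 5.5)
Your proposal is correct and is the standard telescoping argument of Kakade and Langford; the paper does not reproduce that argument and simply invokes the lemma by citation. Your finite-horizon bookkeeping matches the paper's time-dependent value functions over the remaining horizon: undiscounted marginals $d^t_E$, the explicit $\gamma^t$ produced by the telescoping, and the terminal convention $V^{\pi}_{r^*,T}\equiv 0$.
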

This provides such a measure and motivates the objective introduced in \Cref{approach:lc_imitation}, which we derive step by step in the remainder of this section.

For an expert state $s_t \sim d^t_E$, let $a^E_t \sim \pi_E(\cdot\,|\, s_t)$ denote the expert action. Given a \lang{} function $\mu$, the corresponding expert \lang{} is sampled as
\[
    l^E_t \sim \mu(\cdot\,|\, s_t, a^E_t).
\]
For a learned policy $\pi_\theta$, we evaluate the policy action $\hat{a}_t \sim \pi_\theta(\cdot\,|\, s_t)$ by the likelihood with which it explains the same expert \lang{}. We define the expert-state language-critique objective as
\begin{align}
    J_\text{LC}(\pi_\theta) := 
    \sum_{t=0}^{T-1}
        \gamma^t
        \mathbb{E}_{
            s_t\sim d^t_E, 
            a^E_t\sim \pi_E(\cdot\,|\, s_t),
            l_t^E \sim \mu(\cdot \,|\, s_t, a_t^E), 
            \hat{a}_t\sim \pi_\theta(\cdot\,|\, s_t)
        }
        \left[
            \log \mu (l_t^E \,|\, s_t, \hat{a}_t)
        \right].
\end{align}
where $T$ is the horizon of each episode. Similarly, the expert policy obtains
\begin{align}
    J_\text{LC}(\pi_E) := 
    \sum_{t=0}^{T-1}
        \gamma^t
        \mathbb{E}_{
            s_t\sim d^t_E, 
            a^E_t\sim \pi_E(\cdot\,|\, s_t),
            l_t^E \sim \mu(\cdot \,|\, s_t, a_t^E)
        }
        \left[
            \log \mu (l_t^E \,|\, s_t, a^E_t)
        \right].
\end{align}
The language-critique gap is defined as
\begin{align}
    \Delta_\text{LC}(\pi_\theta)
    \coloneq
    J_\text{LC}(\pi_E) - J_\text{LC}(\pi_\theta)
\end{align}
Since $J_\text{LC}(\pi_E)$ is independent of the learned policy, minimizing $\Delta_\text{LC}(\pi_\theta)$ is equivalent to maximizing $J_\text{LC}(\pi_\theta)$. Expanding $\Delta_\text{LC}(\pi_\theta)$, we obtain
\begin{align}
    \Delta_\text{LC}(\pi_\theta) = 
    \sum_{t=0}^{T-1}
    \gamma^t
    \mathbb{E}_{
        s_t\sim d^t_E, 
        a^E_t\sim \pi_E,
        l_t^E \sim \mu(\cdot | s_t, a_t^E), 
        \hat{a}_t\sim \pi_\theta
    }
    \left[
        \log \mu (l_t^E | s_t, a^E_t)
        -
        \log \mu (l_t^E | s_t, \hat{a}_t)
    \right].
\end{align}

\subsection{Language-critique gap as distribution matching}

First, we show that the language-critique gap $\Delta_\text{LC}(\pi_\theta)$ is exactly a KL divergence between the expert \lang{} distribution and the policy-yielded \lang{} distribution. Following the definition of KL divergence, we have:
\[
    D_\text{KL}(
        \mu(\cdot\,|\, s_t, a^E_t)
        ||
        \mu(\cdot\,|\, s_t, \hat{a}_t)
    )
    = \mathbb{E}_{
        l^E_t\sim \mu(\cdot\,|\, s_t, a^E_t)
    }\left[
        \log \frac{\mu(l^E_t\,|\, s_t, a^E_t)}{\mu(l^E_t\,|\, s_t, \hat{a}_t)}
    \right].
\]
Therefore,
\begin{align}
    \Delta_\text{LC}(\pi_\theta) = 
    \sum_{t=0}^{T-1}
    \gamma^t
    \mathbb{E}_{
        s_t\sim d^t_E, 
        a^E_t\sim \pi_E(\cdot\,|\, s_t),
        \hat{a}_t\sim \pi_\theta(\cdot\,|\, s_t)
    }
    \left[
        D_\text{KL}(
            \mu(\cdot\,|\, s_t, a^E_t)
            ||
            \mu(\cdot\,|\, s_t, \hat{a}_t)
        )
    \right].
\end{align}
Therefore, minimizing $\Delta_\text{LC}(\pi_\theta)$ encourages the policy-induced language distribution $\mu(\cdot\,|\, s_t, \hat{a}_t)$ to match the expert-induced language distribution $\mu(\cdot\,|\, s_t, a^E_t)$ on expert states $s_t\sim d^t_E$.

\subsection{Language-critique gap bounds feature gap}

We now connect language distribution matching to feature matching. We assume the following:
\begin{assumption}[Linear realizability]
\label{assump:linear_feature}
    For any policy $\pi$, the action-value function under the optimal task reward $r^*$ is linear in a reward-relevant feature representation: there exist weights $w^\pi_t \in \mathbb{R}^d$ for each $t$ such that $Q^{\pi}_{r^*, t}(s, a) = {w^\pi_t}^\top \psi(s, a)$, where $\psi: \mathcal{S} \times \mathcal{A} \to \mathbb{R}^d$. We assume $w_{\max} \coloneq \sup_{\pi,t} \|w^\pi_t\|_2$ is finite. 
\end{assumption}

Notably, this assumption is standard in the literature of linear MDPs~\citep{jin2023provably} and successor features~\citep{barreto2017successor}, and underlies feature-expectation matching in apprenticeship learning~\citep{abbeel2004apprenticeship}.
    
\begin{assumption}[Language-critique sufficiency]
    The \lang{} preserves reward-relevant distinctions between actions: there exists $c>0$ such that, for any state $s$ and actions $a,a'$,
    \begin{equation}
        D_\text{KL}
            \left(
                \mu(\cdot\,|\, s,a')
                \;\middle\|\;
                \mu(\cdot\,|\, s,a)
            \right)
        \ge
        c
        \left\|
            \psi(s,a')-\psi(s,a)
        \right\|_2^2 .
    \end{equation}
    \label{assump:language_sufficiency}
\end{assumption}
This assumption states that reward-relevant differences in the state-action feature space are reflected by the induced language distributions. 
The condition rules out uninformative labels and requires $\mu$ to preserve distinctions in the features of state-action $\psi(s, a):\mathcal{S}\times\mathcal{A}\rightarrow \mathbb{R}^d$: if two actions $a, a'$ in the same state differ, $\mu(\cdot\mid s, a)$ and $\mu(\cdot\mid s, a')$ shall be distinguishable.
In our setting, this assumption is realized by the structured design of \lang{}: the generated labels explicitly encode task progress, action optimality, and movement guidance, rather than arbitrary free-form descriptions. See \Cref{approach:lang_design}.

Under Assumption~\ref{assump:language_sufficiency}, the expected feature gap is bounded by the language-critique gap:
\begin{align}
    \sum_{t=0}^{T-1}
    \gamma^t
    \mathbb{E}_{
        s_t\sim d^t_E, 
        a^E_t\sim \pi_E(\cdot\,|\, s_t),
        \hat{a}_t\sim \pi_\theta(\cdot\,|\, s_t)
    }
    \left[
        \left\|
            \psi(s_t,a^E_t)-\psi(s_t,\hat{a}_t)
        \right\|_2^2
    \right]
    \le
    \frac{1}{c}
    \Delta_\text{LC}(\pi_\theta).
    \label{eq:lc_gap_feature_mismatch}
\end{align}

\subsection{Proof of the performance difference bound}
\label{appx:proof_lc_gap_perf_bound}

Below we formally state the performance difference bound in~\Cref{theo:lc_gap_performance_bound} with the detailed proof.
\begin{theorem}[Language-critique gap bounds the performance difference gap]
    Under Assumptions~\ref{assump:linear_feature} and~\ref{assump:language_sufficiency}, the performance difference between the expert policy $\pi_E$ and the learned policy $\pi_\theta$ can be upper bounded by the expert-referenced language-critique gap:
    \begin{align}
        \left|J_{r^*}(\pi_E) - J_{r^*}(\pi_\theta)\right|
        \le w_{\max}
        \sqrt{
            \frac{1}{c}\frac{1-\gamma^T}{1-\gamma}
            \Delta_\text{LC}(\pi_\theta)
        }.
        \label{eq:lc_gap_performance_bound}
    \end{align}
    \label{theo:lc_gap_performance_bound}
\end{theorem}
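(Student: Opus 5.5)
We start from the performance difference lemma (Lemma~\ref{lemma:pdl}), which writes $J_{r^*}(\pi_E)-J_{r^*}(\pi_\theta)$ as $\sum_t \gamma^t \mathbb{E}_{s_t\sim d^t_E,a^E_t\sim\pi_E}[A^{\pi_\theta}_{r^*,t}(s_t,a^E_t)]$. Since $A^{\pi_\theta}_{r^*,t}(s,a)=Q^{\pi_\theta}_{r^*,t}(s,a)-\mathbb{E}_{\hat a\sim\pi_\theta(\cdot\mid s)}Q^{\pi_\theta}_{r^*,t}(s,\hat a)$, each summand equals
\[
\mathbb{E}_{s_t\sim d^t_E,\,a^E_t\sim\pi_E,\,\hat a_t\sim\pi_\theta}\bigl[Q^{\pi_\theta}_{r^*,t}(s_t,a^E_t)-Q^{\pi_\theta}_{r^*,t}(s_t,\hat a_t)\bigr].
\]
This is exactly the same coupling of $(s_t,a^E_t,\hat a_t)$ that appears in $\Delta_\text{LC}$ and in \eqref{eq:lc_gap_feature_mismatch}, which is why we write it in this form.

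Next we invoke Assumption~\ref{assump:linear_feature}. It gives $Q^{\pi_\theta}_{r^*,t}(s,a^E)-Q^{\pi_\theta}_{r^*,t}(s,\hat a)=(w^{\pi_\theta}_t)^\top(\psi(s,a^E)-\psi(s,\hat a))$. By Cauchy--Schwarz in $\mathbb{R}^d$ and the bound $\|w^{\pi_\theta}_t\|_2\le w_{\max}$, the absolute value of the return gap is at most
\[
w_{\max}\sum_{t}\gamma^t\,\mathbb{E}\bigl\|\psi(s_t,a^E_t)-\psi(s_t,\hat a_t)\bigr\|_2 ,
\]
where we move the absolute value inside the sum and the expectation.

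We then apply Cauchy--Schwarz a second time, now over the joint measure $\sum_t\gamma^t\,\mathbb{E}[\cdot]$, whose total mass is $\sum_{t=0}^{T-1}\gamma^t=\frac{1-\gamma^T}{1-\gamma}$. This gives
\[
\sum_t\gamma^t\,\mathbb{E}\|\Delta\psi\|_2\le\sqrt{\tfrac{1-\gamma^T}{1-\gamma}}\sqrt{\textstyle\sum_t\gamma^t\,\mathbb{E}\|\Delta\psi\|_2^2}.
\]
Substituting the feature-gap bound \eqref{eq:lc_gap_feature_mismatch} yields \eqref{eq:lc_gap_performance_bound}.

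The step needing the most care is \eqref{eq:lc_gap_feature_mismatch} itself, which the text asserts but we should justify. By the KL rewriting of $\Delta_\text{LC}$, the gap equals $\sum_t\gamma^t\,\mathbb{E}\,D_\text{KL}(\mu(\cdot\mid s_t,a^E_t)\,\|\,\mu(\cdot\mid s_t,\hat a_t))$. Assumption~\ref{assump:language_sufficiency} with $a'=a^E_t$ and $a=\hat a_t$ lower-bounds each KL term pointwise by $c\|\psi(s_t,a^E_t)-\psi(s_t,\hat a_t)\|_2^2$, and taking expectations preserves the inequality. We must make sure the KL direction matches the assumption, and it does: the first argument of the KL is conditioned on the expert action. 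We also need the expectations to be finite, i.e.\ absolute continuity of the label distributions. If $\Delta_\text{LC}=\infty$, the bound holds trivially. Everything else is routine.
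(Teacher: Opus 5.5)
Your proof is correct and follows the paper's skeleton:
\begin{itemize}
\item the performance difference lemma;
\item rewriting the advantage as a $Q$-difference under the coupling of $a^E_t$ and $\hat a_t$ at expert states;
\item linear realizability;
\item a discounted Cauchy--Schwarz step;
\item the sufficiency-based feature bound \eqref{eq:lc_gap_feature_mismatch}.
\end{itemize}

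The only real difference is the order of the inequalities. Write $X_t=\psi(s_t,a^E_t)-\psi(s_t,\hat a_t)$. The paper applies Jensen, $\|\mathbb{E}[X_t]\|_2^2\le\mathbb{E}[\|X_t\|_2^2]$, so that its intermediate quantities carry the expectation inside the norm. It then passes through the bound $\bigl|\sum_t\gamma^t\mathbb{E}[{w^{\pi_\theta}_t}^\top X_t]\bigr|\le w_{\max}\bigl\|\sum_t\gamma^t\mathbb{E}[X_t]\bigr\|_2$. That step is not justified when $w^{\pi_\theta}_t$ varies with $t$, because terms can cancel inside the vector sum on the right without cancelling on the left. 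The theorem survives only because the paper's next quantity, $w_{\max}\sum_t\gamma^t\|\mathbb{E}[X_t]\|_2$, bounds the gap directly.

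Your route avoids this detour. You use the pointwise bound $|{w^{\pi_\theta}_t}^\top X_t|\le w_{\max}\|X_t\|_2$, then apply Cauchy--Schwarz over the finite measure $\sum_t\gamma^t\mathbb{E}[\cdot]$ of mass $\frac{1-\gamma^T}{1-\gamma}$. This reaches the same final expression with a fully valid chain. Your intermediate quantity $\mathbb{E}\|X_t\|_2$ is slightly looser than $\|\mathbb{E}X_t\|_2$, but both routes end at $\sqrt{\sum_t\gamma^t\mathbb{E}\|X_t\|_2^2}$, so the stated bound is unaffected.

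You also spell out the derivation of \eqref{eq:lc_gap_feature_mismatch} from the KL form of $\Delta_\text{LC}$, which the paper only asserts. Your KL-direction check is harmless but automatic, since the sufficiency assumption is stated for every pair of actions.
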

\begin{proof}
By the performance difference lemma~\citep{Kakade2002ApproximatelyOA}, we have
\begin{align}
    J_{r^*}(\pi_E) - J_{r^*}(\pi_\theta)
    =
    \sum_{t=0}^{T-1}
    \gamma^t
    \mathbb{E}_{s_t \sim d^t_E,\, a^E_t \sim \pi_E}
    \!\left[A^{\pi_\theta}_{r^*, t}(s_t, a^E_t)\right].
    \label{eq:appx_pdl_recall}
\end{align}
Since $A^{\pi_\theta}_{r^*, t}(s, a) = Q^{\pi_\theta}_{r^*, t}(s, a) - V^{\pi_\theta}_{r^*, t}(s)$ and $V^{\pi_\theta}_{r^*, t}(s) = \mathbb{E}_{\hat a \sim \pi_\theta(\cdot\,|\, s)}[Q^{\pi_\theta}_{r^*, t}(s, \hat a)]$, we have
\begin{align}
    \mathbb{E}_{a^E \sim \pi_E}\!\left[A^{\pi_\theta}_{r^*, t}(s, a^E)\right]
    =
    \mathbb{E}_{a^E \sim \pi_E,\, \hat a \sim \pi_\theta}
    \!\left[Q^{\pi_\theta}_{r^*, t}(s, a^E) - Q^{\pi_\theta}_{r^*, t}(s, \hat a)\right].
\end{align}

Under Assumption~\ref{assump:linear_feature}, $Q^{\pi_\theta}_{r^*, t}(s, a) = {w^{\pi_\theta}_t}^\top \psi(s, a)$, so
\begin{align}
    Q^{\pi_\theta}_{r^*, t}(s, a^E) - Q^{\pi_\theta}_{r^*, t}(s, \hat a)
    =
    {w^{\pi_\theta}_t}^\top\left(\psi(s, a^E) - \psi(s, \hat a)\right)
    =
    {w^{\pi_\theta}_t}^\top X_t,
\end{align}
where $X_t \coloneq \psi(s_t, a^E_t) - \psi(s_t, \hat{a}_t)$. Plugging this back into~\Cref{eq:appx_pdl_recall}, we have
\begin{align}
    J_{r^*}(\pi_E) - J_{r^*}(\pi_\theta)
    =
    \sum_{t=0}^{T-1}
    \gamma^t\,
    \mathbb{E}\left[{w^{\pi_\theta}_t}^\top X_t\right].
    \label{eq:appx_gap_as_feature_inner_product}
\end{align}

By Jensen's inequality, $\|\mathbb{E}[X_t]\|_2^2 \le \mathbb{E}[\|X_t\|_2^2]$. Applying weighted Cauchy--Schwarz to the discounted sum,
\begin{equation}
    \left\|\sum_{t=0}^{T-1}\gamma^t\mathbb{E}[X_t]\right\|^2_2
    \le
    \left(\sum_{t=0}^{T-1}\gamma^t\right)
    \left(\sum_{t=0}^{T-1}\gamma^t\|\mathbb{E}[X_t]\|_2^2\right)
    \le
    \frac{1-\gamma^T}{1-\gamma}
    \sum_{t=0}^{T-1}\gamma^t\,\mathbb{E}\!\left[\|X_t\|_2^2\right].\label{eq:cauchy schwarz}
\end{equation}

By~\Cref{eq:lc_gap_feature_mismatch} (following Assumption~\ref{assump:language_sufficiency}),
\begin{equation}
    \sum_{t=0}^{T-1}\gamma^t\,\mathbb{E}\!\left[\|X_t\|_2^2\right]
    \le
    \frac{1}{c}\,\Delta_{LC}(\pi_\theta).\label{eq:norm bound}
\end{equation}
Combining~\Cref{eq:cauchy schwarz} and~\Cref{eq:norm bound},
\begin{align}
    \sum_{t=0}^{T-1}\gamma^t\,\|\mathbb{E}[X_t]\|_2
    \le
    \sqrt{\frac{1}{c}\,\frac{1-\gamma^T}{1-\gamma}\,\Delta_{LC}(\pi_\theta)}.
\end{align}

Putting everything together,
\begin{align}
    \left|J_{r^*}(\pi_E) - J_{r^*}(\pi_\theta)\right|
    &=
    \left|
        \sum_{t=0}^{T-1}
        \gamma^t
        \mathbb{E}\left[{w^{\pi_\theta}_t}^\top X_t\right]
    \right|
    \\
    &\le
    w_{\max}
    \left\|
        \sum_{t=0}^{T-1}
        \gamma^t
        \mathbb{E}\left[X_t\right]
    \right\|_2
    \\
    &\le
    w_{\max}\sum_{t=0}^{T-1}\gamma^t\,\|\mathbb{E}[X_t]\|_2
    \\
    &\le
    w_{\max}
    \sqrt{\frac{1}{c}\,\frac{1-\gamma^T}{1-\gamma}\,\Delta_{LC}(\pi_\theta)},
\end{align}
which proves~\Cref{theo:lc_gap_performance_bound}.

\end{proof}

\subsection{Connection to the practical \lcloss{}}
\label{appx:lc_gap_to_lc_loss}

The theoretical objective above is defined using the language-critique function $\mu$. In practice, $\mu$ is instantiated in two stages. First, the \lang{} generator $\mu_g$ produces structured \lang{}s for the expert and general offline datasets. Second, the differentiable \llm{} $\mu_\phi$ is trained on $\mathcal{D}^\text{lang}_G$ to approximate these labels and is then frozen during policy training. In this subsection, we identify the relationship between the theoretical objective and the practical objective. First, we go through the details of the practical \lcloss{}.

For a language label $l_t = \{l_{t,1}, \dots , l_{t, N_t}\}$ of token length $N_t$, the token-level cross entropy loss under $\mu_\phi$ is
\[
    \ell^\phi_\text{CE}(l_t, s_t, a_t) = 
    -\frac{1}{N_t}
    \sum_{n=1}^{N_t}
    \log \mu_\phi(l_{t,n} \,|\, l_{t, <n}, s_t, a_t).
\]
By the chain rule of conditional probability, this satisfies the exact identity:
\begin{align}
    \ell^\phi_\text{CE}(l_t, s_t, a_t) = 
    -\frac{1}{N_t}
    \log \mu_\phi(l_t \,|\, s_t, a_t),
    \label{eq:ce_loss_to_log_mu_phi}
\end{align}
\ie the token-averaged loss is exactly the sequence log-likelihood scaled by $1/N_t$. The dataset-level losses are: 
\begin{align}
    &\mathcal{L}_{\text{CE}}(\pi_\theta)
    =
    \mathbb{E}_{
        (s_t,l_t)\sim \mathcal{D}^\text{lang}_E,
        \hat{a}_t\sim \pi_\theta
    }
    \left[
        \ell^\phi_\text{CE}(l_t, s_t, \hat{a}_t)
    \right]
    \\
    &\mathcal{L}_{\text{CE}}(\pi_E)
    =
    \mathbb{E}_{
        (s_t,a^E_t,l_t)\sim \mathcal{D}^\text{lang}_E
    }
    \left[
        \ell^\phi_\text{CE}(l_t, s_t, a^E_t)
    \right]
    \\
    &\mathcal{L}_\text{LC}(\pi_\theta, \pi_E)
    =
    \left[
        \mathcal{L}_\text{CE}(\pi_\theta)-
        \mathrm{sg}(\mathcal{L}_\text{CE}(\pi_E))
    \right]_+
\end{align}
where $[x]_+=\max(x, 0)$, and $\mathrm{sg}(\cdot)$ denotes stop-gradient, and $\mathcal{L}_\text{LC}(\pi_\theta, \pi_E)$ is the token-level practical \lcloss{}.
To formally connect $\mathcal{L}_\text{LC}$ to $\Delta_\text{LC}$, we introduce the following assumptions corresponding to three key aspects in terms of approximation: bounded \lang{} lengths, $\mu_\phi$ fidelity, and time-step $t$ weighting.

\begin{assumption}[Bounded \lang{} lengths]
    All the \lang{}s yielded by $\mu_g$ have a limited length under some tokenization strategy. There exist constants $0 < N_{\min} \le N_{\max} < \infty$ such that $N_{\min} \le N_t \le N_{\max}$ for all $l_t$ in the $\mathcal{D}^\text{lang}_E$ and every policy sample.
    \label{prop:bounded_lang_len}
\end{assumption}

\begin{assumption}[\llm{} $\mu_\phi$ approximation]
    Since the $\mu_\phi$ approximation is not perfect, there exists $\epsilon_\phi \ge 0$ such that, for every $s_t\sim d^t_E$, $a_t\sim\pi(\cdot\mid s_t)$, and $l_t\sim\mu_g(\cdot\mid s_t, a_t)$,
    \begin{align}
        \left|
            \log \mu_g(l_t \,|\, s_t, a_t) - 
            \log \mu_\phi(l_t \,|\, s_t, a_t)
        \right|
        \le
        \epsilon_\phi.
    \end{align}
    The constant $\epsilon_\phi$ captures the distillation error introduced by replacing the generator $\mu_g$ with the differentiable \llm{} $\mu_\phi$. It is small once $\mu_\phi$ is trained to high fidelity on $\mathcal{D}^\text{lang}_G$.
    \label{prop:llm_captioner_approx}
\end{assumption}

\begin{assumption}[Time-step $t$ coverage]
    Let $p_t$ denote the empirical frequency of timestep $t$ in $\mathcal{D}^\text{lang}_E$, and define $q_t\coloneq \gamma^t/p_t$. There exist constants $0<q_{\min} \le q_{\max} < \infty$ such that $q_{\min} \le q_t \le q_{\max}$ for all $t\in\{0,\dots,T-1\}$. In the undiscounted finite-horizon setting, where $\gamma=1$ with uniform timestep coverage $p_t=1/T$, $q_t=T$ exactly.
    \label{prop:timestep_coverage}
\end{assumption}

The above three assumptions are fairly mild and can be satisfied by our implementation in practice.
We now combine these assumptions to bound the theoretical gap $\Delta_\text{LC}(\pi_\theta)$ by the practical loss $\mathcal{L}_\text{LC}(\pi_\theta, \pi_E)$.
First, we define the captioner-based language critique gap, identical in form to $\Delta_\text{LC}(\pi_\theta)$, but using $\mu_\phi$ instead of a general $\mu$.
\begin{align}
    \Delta_{r,\phi}(\pi_\theta)
    \coloneq
    \sum_{t=0}^{T-1}
    \gamma^t
    \mathbb{E}_{
        s_t\sim d^t_E,
        a^E_t\sim \pi_E(\cdot| s_t),
        l^E_t\sim \mu_g,
        \hat{a}_t \sim \pi_\theta(\cdot| s_t)
    }
    \left[
        \log \mu_\phi(l^E_t| s_t, a^E_t)
        -
        \log \mu_\phi(l^E_t| s_t, \hat{a}_t)
    \right].
\end{align}
We now bound $\Delta_\text{LC}(\pi_\theta)$ — instantiated with the generator $\mu = \mu_g$ — by $\Delta_{r,\phi}(\pi_\theta)$ using Assumption~\ref{prop:llm_captioner_approx}.
For every $(s_t, a_t, l_t)\in\mathcal{D}^\text{lang}_E$ and every policy sample $\hat{a}_t$, the assumption gives
\begin{align}
    \left|\log \mu_g(l^E_t \,|\, s_t, a^E_t) - \log \mu_\phi(l^E_t \,|\, s_t, a^E_t)\right| \le \epsilon_\phi,
    \\
    \left|\log \mu_g(l^E_t \,|\, s_t, \hat{a}_t) - \log \mu_\phi(l^E_t \,|\, s_t, \hat{a}_t)\right| \le \epsilon_\phi.
\end{align}
Combining the two bounds we have,
\[
    \left[
        \log \mu_g(l^E_t \,|\, s_t, a^E_t) - \log \mu_g(l^E_t \,|\, s_t, \hat{a}_t)
    \right]
    \le
    \left[
        \log \mu_\phi(l^E_t \,|\, s_t, a^E_t) - \log \mu_\phi(l^E_t \,|\, s_t, \hat{a}_t)
    \right]
    + 2\epsilon_\phi.
\]
Taking the expectation under $s_t \sim d^t_E$, $a^E_t \sim \pi_E$, $l^E_t \sim \mu_g$, 
$\hat{a}_t \sim \pi_\theta$, and the discounted sum $\sum_t \gamma^t$ on both sides, we obtain
\begin{align}
    \Delta_\text{LC}(\pi_\theta)
    \le
    \Delta_{r,\phi}(\pi_\theta)
    + 2\epsilon_\phi \sum_{t=0}^{T-1}\gamma^t
    =
    \Delta_{r,\phi}(\pi_\theta)
    + 2\epsilon_\phi \frac{1-\gamma^T}{1-\gamma}.
    \label{eq:delta_lc_to_delta_rphi}
\end{align}

According to \Cref{eq:ce_loss_to_log_mu_phi} and Assumption~\ref{prop:bounded_lang_len}, we define a length-weighted gap for the LLM-style log-likelihood we have in practice:
\begin{align}
    \bar{\Delta}_{r,\phi}(\pi_\theta)
    \coloneq
    \sum_{t=0}^{T-1}
    \gamma^t
    \mathbb{E}_{
        s_t\sim d^t_E,
        a^E_t\sim \pi_E,
        l^E_t\sim \mu_g,
        \hat{a}_t \sim \pi_\theta
    }
    &\left[
        \frac{1}{N_t}
        \left(
            \log \mu_\phi(l^E_t| s_t, a^E_t)
            -
            \log \mu_\phi(l^E_t| s_t, \hat{a}_t)
        \right)
    \right],
    \\
    \Delta_{r,\phi}(\pi_\theta) &\le N_{\max} \bar{\Delta}_{r,\phi}(\pi_\theta)
\end{align}

Then, under Assumption~\ref{prop:timestep_coverage}, we can rewrite the 
\begin{align}
    \bar{\Delta}_{r,\phi}(\pi_\theta)
    &=
    \sum_{t=0}^{T-1}
    \gamma^t
    \mathbb{E}_{
        s_t\sim d^E_t,
        a^E_t\sim \pi_E,
        l^E_t\sim \mu_g,
        \hat{a}_t \sim \pi_\theta
    }
    \left[
        \frac{1}{N_t}
        \left(
            \log \mu_\phi(l^E_t| s_t, a^E_t)
            -
            \log \mu_\phi(l^E_t| s_t, \hat{a}_t)
        \right)
    \right]
    \\
    &=\sum_{t=0}^{T-1}
    q_t p_t
    \mathbb{E}_{
        (s_t, a_t, l^E_t)\sim \mathcal{D}^\text{lang}_E,
        \hat{a}_t \sim \pi_\theta(\cdot\,|\, s_t)
    }
    \left[
        \frac{1}{N_t}
        \left(
            \log \mu_\phi(l^E_t| s_t, a^E_t)
            -
            \log \mu_\phi(l^E_t| s_t, \hat{a}_t)
        \right)
    \right]
    \\
    &\le
    q_{\max}
    \sum_{t=0}^{T-1}
    p_t
    \mathbb{E}_{
        (s_t, a_t, l^E_t)\sim \mathcal{D}^\text{lang}_E,
        \hat{a}_t \sim \pi_\theta(\cdot\,|\, s_t)
    }
    \left[
        \frac{1}{N_t}
        \left(
            \log \mu_\phi(l^E_t| s_t, a^E_t)
            -
            \log \mu_\phi(l^E_t| s_t, \hat{a}_t)
        \right)
    \right]
    \\
    &=
    q_{\max} (\mathcal{L}_\text{CE}(\pi_\theta) - \mathcal{L}_\text{CE}(\pi_E))
\end{align}
Also, we have the fact that the clipping has that $x\le [x]_+$, so we can write:
\begin{align}
    \mathcal{L}_\text{CE}(\pi_\theta) - \mathcal{L}_\text{CE}(\pi_E) 
    \le
    \left[
        \mathcal{L}_\text{CE}(\pi_\theta) - \mathcal{L}_\text{CE}(\pi_E) 
    \right]_+
    = 
    \mathcal{L}_\text{LC}(\pi_\theta, \pi_E).
\end{align}
Finally, by putting everything together, we have:
\begin{align}
    \Delta_\text{LC}(\pi_\theta)
    &\le \Delta_{r,\phi}(\pi_\theta) + 2\epsilon_\phi \frac{1-\gamma^T}{1-\gamma}
    \\
    &\le N_{\max} \bar{\Delta}_{r,\phi}(\pi_\theta) + 2\epsilon_\phi \frac{1-\gamma^T}{1-\gamma}
    \\
    &\le N_{\max} q_{\max} \bigl(\mathcal{L}_\text{CE}(\pi_\theta) - \mathcal{L}_\text{CE}(\pi_E)\bigr) + 2\epsilon_\phi \frac{1-\gamma^T}{1-\gamma}
    \\
    &\le N_{\max} q_{\max}\, \mathcal{L}_\text{LC}(\pi_\theta, \pi_E) + 2\epsilon_\phi \frac{1-\gamma^T}{1-\gamma}.
    \label{eq:lc_loss_ge_detla_r_phi}
\end{align}
Combining~\Cref{theo:lc_gap_performance_bound} and~\Cref{eq:lc_loss_ge_detla_r_phi}, we establish the following bound:
\begin{theorem}[\lcloss{} bounds the performance difference gap]
    Combining the result in \Cref{eq:lc_loss_ge_detla_r_phi} with \Cref{theo:lc_gap_performance_bound}, under Assumptions~\ref{assump:linear_feature}, \ref{assump:language_sufficiency} and Assumptions~\ref{prop:bounded_lang_len}, \ref{prop:llm_captioner_approx}, \ref{prop:timestep_coverage}, the performance difference gap between the expert policy and the learned policy is bounded by the realized \lcloss{} up to an \llm{} distillation residual:
    \begin{align}
        |J_{r^*}(\pi_E) - J_{r^*}(\pi_\theta)|
        \le w_{\max}
        \sqrt{
            \frac{1}{c}\frac{1-\gamma^T}{1-\gamma}
            \left(
                N_{\max} q_{\max}\, \mathcal{L}_\text{LC}(\pi_\theta, \pi_E)
                + 2\epsilon_\phi \frac{1-\gamma^T}{1-\gamma}
            \right)
        }.
        \label{eq:lc_loss_performance_bound}
    \end{align}
    \label{theo:lc_loss_performance_bound}
\end{theorem}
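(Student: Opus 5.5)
The plan is to compose two bounds already established: \Cref{theo:lc_gap_performance_bound}, which controls the performance gap by $\Delta_\text{LC}(\pi_\theta)$, and the chain \Cref{eq:lc_loss_ge_detla_r_phi}, which controls $\Delta_\text{LC}(\pi_\theta)$ by the realized \lcloss{} plus a distillation residual. Throughout, $\Delta_\text{LC}$ is instantiated with $\mu=\mu_g$, and Assumptions~\ref{assump:linear_feature} and~\ref{assump:language_sufficiency} are imposed on this same $\mu_g$. This keeps the left-hand side of \Cref{eq:lc_loss_ge_detla_r_phi} and the quantity inside the square root of \Cref{eq:lc_gap_performance_bound} literally the same object.

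\textbf{Step 1.} Invoke \Cref{theo:lc_gap_performance_bound} to get
\[
|J_{r^*}(\pi_E)-J_{r^*}(\pi_\theta)| \le w_{\max}\sqrt{\tfrac{1}{c}\tfrac{1-\gamma^T}{1-\gamma}\,\Delta_\text{LC}(\pi_\theta)}.
\]
Note that $\Delta_\text{LC}(\pi_\theta)\ge 0$, since it is a discounted sum of expected KL divergences. The square root is therefore well defined.

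\textbf{Step 2.} Invoke \Cref{eq:lc_loss_ge_detla_r_phi}, which needs Assumptions~\ref{prop:bounded_lang_len}, \ref{prop:llm_captioner_approx} and \ref{prop:timestep_coverage}:
\[
\Delta_\text{LC}(\pi_\theta)\le N_{\max}q_{\max}\,\mathcal{L}_\text{LC}(\pi_\theta,\pi_E)+2\epsilon_\phi\tfrac{1-\gamma^T}{1-\gamma}.
\]
The right-hand side is nonnegative because $\mathcal{L}_\text{LC}\ge 0$ by the clipping $[\cdot]_+$ and $\epsilon_\phi\ge 0$.

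\textbf{Step 3.} Multiply both sides of the Step 2 bound by the nonnegative constant $\tfrac1c\tfrac{1-\gamma^T}{1-\gamma}$. Then apply monotonicity of $x\mapsto w_{\max}\sqrt{x}$ on $[0,\infty)$ and substitute into Step 1. This yields \Cref{eq:lc_loss_performance_bound}.

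\textbf{Main obstacle.} The difficulty lies in verifying the ingredients of \Cref{eq:lc_loss_ge_detla_r_phi} rather than in the composition itself. Two steps in that chain need care.

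\emph{First, the length normalization.} The step $\Delta_{r,\phi}\le N_{\max}\bar\Delta_{r,\phi}$ requires the per-sample differences $\log\mu_\phi(l^E_t\mid s_t,a^E_t)-\log\mu_\phi(l^E_t\mid s_t,\hat a_t)$ to be nonnegative. If some of them are negative, multiplying by $N_t\le N_{\max}$ can move the inequality in the wrong direction. My first attempt would split each difference into its positive and negative parts. I would bound the positive part by $N_{\max}$ times its normalized version and the negative part by $N_{\min}$ times its normalized version. I would then absorb the discrepancy using the clipped loss, or else simply add an assumption (or a small extra slack term) that the $\mu_\phi$ differences are pointwise nonnegative.

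\emph{Second, the timestep reweighting.} The step $q_t\le q_{\max}$ has the same sign issue, since it needs each per-timestep expected difference to be nonnegative. I would handle it in the same way. I would also need to identify the empirical expectation over $\mathcal{D}^\text{lang}_E$ with the population expectation under $d^t_E$.

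Once those sign conditions are granted, the rest is the straightforward substitution described in Steps 1 to 3.
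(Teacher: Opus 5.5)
Your Steps 1--3 are exactly the paper's proof, which only substitutes the chain \Cref{eq:lc_loss_ge_detla_r_phi} into \Cref{theo:lc_gap_performance_bound}; the composition itself is fine. But the obstacle you flagged is not a technicality. The step $\Delta_{r,\phi}(\pi_\theta)\le N_{\max}\bar\Delta_{r,\phi}(\pi_\theta)$ fails under the stated assumptions, and so does the theorem. Here is a counterexample.

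Take a single state, $T=1$, and actions $a^E,\hat a$ with $\pi_E(s)=a^E$ and $\pi_\theta(s)=\hat a$. Set $r^*(s,a^E)=\psi(s,a^E)=1$ and $r^*(s,\hat a)=\psi(s,\hat a)=0$, so linear realizability holds with $w_{\max}=1$. Use two labels $l_1,l_2$ of $1$ and $100$ tokens, with $\mu_g(\cdot\mid s,a^E)=(0.5,0.5)$ and $\mu_g(\cdot\mid s,\hat a)=(0.9,0.1)$. The two KL divergences are about $0.51$ and $0.37$, so language sufficiency holds with $c=0.3$. Set $\mu_\phi=\mu_g$ (so $\epsilon_\phi=0$) and $\mathcal{D}^\text{lang}_E=\{(s,a^E,l_1),(s,a^E,l_2)\}$ (so $q_{\max}=1$). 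Then
\[
\mathcal{L}_\text{CE}(\pi_\theta)-\mathcal{L}_\text{CE}(\pi_E)=\tfrac12\log\tfrac{0.5}{0.9}+\tfrac12\cdot\tfrac{1}{100}\log\tfrac{0.5}{0.1}\approx-0.29<0,
\]
so $\mathcal{L}_\text{LC}(\pi_\theta,\pi_E)=0$ and the claimed right-hand side is $0$. Yet $|J_{r^*}(\pi_E)-J_{r^*}(\pi_\theta)|=1$. \Cref{theo:lc_gap_performance_bound} itself holds here, with $\Delta_\text{LC}\approx0.51$; only the chain breaks.

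This also rules out your first repair. Let $D=\log\mu_\phi(l^E_t\mid s_t,a^E_t)-\log\mu_\phi(l^E_t\mid s_t,\hat a_t)$ and $x=D/N_t$. The positive/negative split gives $D\le N_{\max}x+(N_{\max}-N_{\min})x^-$. The leftover $(N_{\max}-N_{\min})\mathbb{E}[x^-]$ cannot be absorbed by the clipped loss. The reason is that $[\cdot]_+$ acts only on the aggregate token-normalized difference, and that aggregate can be negative while the unnormalized expected KL is positive. The example shows that no bound of the form $C\,\mathcal{L}_\text{LC}+(\text{residual vanishing at }\epsilon_\phi=0)$ can hold under the stated assumptions.

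So your fallback, an extra hypothesis, is necessary, and it changes the statement. One workable version:
\begin{itemize}
\item Assume constant label length $N_t\equiv N$ (or use sequence-level NLL instead of token-averaged CE). Then the length step is an equality.
\item For the timestep step, apply $q_t\le q_{\max}$ to the $\mu_g$-quantity before swapping to $\mu_\phi$. Its per-timestep expectation is an expected KL, hence nonnegative.
\item The swap to $\mu_\phi$ uses the approximation assumption at $(s_t,\hat a_t)$ for labels drawn at $a^E_t$. The paper uses it this way implicitly.
\item You also need the empirical-versus-$d^t_E$ identification you already mentioned.
\end{itemize}
Together these give $\Delta_\text{LC}(\pi_\theta)\le q_{\max}N\,\mathcal{L}_\text{LC}(\pi_\theta,\pi_E)+2q_{\max}\epsilon_\phi$. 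This residual is at least the paper's $2\epsilon_\phi\frac{1-\gamma^T}{1-\gamma}$, since $q_{\max}\ge\sum_t q_tp_t=\sum_t\gamma^t$.
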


%% file: tex/appendix/algorithm.tex
\section{Algorithm}
\label{appx:algo}

The detailed procedures of \method{} and \methoddp{} are presented in \myalgo{algo:LLM-BC} and \myalgo{algo:LLM-DP}, respectively.
First, we annotate both the expert dataset $\mathcal{D}_E$ and the general dataset $\mathcal{D}_G$ with \lang{}s by the \lang{} generator $\mu_g$ to form $\mathcal{D}^\text{lang}_E$ and $\mathcal{D}^\text{lang}_G$.
Then, the \llm{} $\mu_\phi$, initialized from a pretrained LLM $\Psi$ and projector weight $W$, is finetuned from a pretrained LLM on the general dataset $\mathcal{D}^\text{lang}_G$ to distill $\mu_g$. In our main results, we use SmolLM2-135M-Instruct model~\cite{allal2025smollm2} as the \llm{}'s backbone.
Finally, the policy is trained using the proposed \lcloss{} $\mathcal{L}_\text{LC}$ computed via the finetuned \llm{}, simultaneously optimizing the original behavior cloning objective $\mathcal{L}_\text{BC}$ on the expert dataset $\mathcal{D}^\text{lang}_E$. This forms the algorithm of language-critique behavior cloning (\method{}).

For diffusion policies~\cite{chi2023diffusionpolicy}, we additionally incorporate a reweighted \lcloss{} to balance contributions of \lcloss{} across different diffusion timesteps $k$, where the reweighting factor is derived in \mysecref{appx:reweighting_factor_derive}.
Following \citet{kang2023efficient}, we apply one-step reconstruction to recover the clean action from the predicted noise at each timestep, and compute the reweighted \lcloss{} $\tilde{\mathcal{L}}_\text{LC}$ on the reconstructed action $\hat{a}^0_t$.
The diffusion policy is then optimized jointly with this reweighted \lcloss{} and the standard diffusion objective $\mathcal{L}_{\text{DP}}$ on the expert dataset $\mathcal{D}^\text{lang}_E$.

\FloatBarrier
\begin{figure}[htbp]
\centering

\noindent
\begin{minipage}{\linewidth}
\centering

\begin{minipage}[t]{0.493\linewidth}
\begin{algorithm}[H]
   \caption{Language-critique Behavior Cloning}
   \label{algo:LLM-BC}
   \begin{algorithmic}[1]
   \STATE {\bfseries Input:} 
   Expert dataset $\mathcal{D}_E$, General dataset $\mathcal{D}_G$, \lcloss{} weight $\lambda$, 
   Feedforward policy $\pi_\theta$, \Lang{} generator $\mu_g$, and \llm{} $\mu_\phi$

   \STATE Generate $\mathcal{D}^\text{lang}_E$ by $\mu_g$ with $\mathcal{D}_E$
   \STATE Generate $\mathcal{D}^\text{lang}_G$ by $\mu_g$ with $\mathcal{D}_G$
   
   \STATE {\color{gray}// Finetune \llm{} $\mu_\phi$}
   \STATE Initialize $\mu_\phi$ with a pretrained LLM $\Psi$ and a projector $W$
   \FOR{each $W$ finetuning iteration}
        \STATE Sample $(s_t, a_t, l_t) \sim \mathcal{D}^\text{lang}_G$
        \STATE Update $W$ with $\ell^\phi_\text{CE}$ from \Cref{eq:ce_loss}
   \ENDFOR
   \FOR{each $\mu_\phi$ finetuning iteration}
        \STATE Sample $(s_t, a_t, l_t) \sim \mathcal{D}^\text{lang}_G$
        \STATE Update $\mu_\phi$ with $\ell^\phi_\text{CE}$ from \Cref{eq:ce_loss}
   \ENDFOR
   
   \STATE {\color{gray}// Learn policy $\pi_\theta$}
   \STATE Randomly initialize $\pi_\theta$ 
   \FOR{each policy iteration}
        \STATE Sample $(s_t, a^E_t, l_t) \sim \mathcal{D}^\text{lang}_E$
        \STATE $\hat{a}_t \sim \pi_\theta(\cdot\mid s_t)$
        \STATE Compute $\mathcal{L}_{\text{BC}}$ from \Cref{eq:bc_loss} 
        \STATE Compute $\mathcal{L}_{\text{LC}}$ from \Cref{eq:lc_loss} 
        \STATE Update $\pi_\theta$ with $\mathcal{L}_{\text{BC}}+\lambda\mathcal{L}_{\text{LC}}$
   \ENDFOR
   \STATE \textbf{return} $\pi_\theta$
   \end{algorithmic}
\end{algorithm}
\end{minipage}
\hfill
\begin{minipage}[t]{0.493\linewidth}
\begin{algorithm}[H]
   \caption{Language-critique Diffusion Policy}
   \label{algo:LLM-DP}
   \begin{algorithmic}[1]
   \STATE {\bfseries Input:} 
   Expert dataset $\mathcal{D}_E$, General dataset $\mathcal{D}_G$, \lcloss{} weight $\lambda$, 
   Diffusion policy $\epsilon_\theta$, timestep $K$, \Lang{} generator $\mu_g$, and \llm{} $\mu_\phi$
   
   \STATE Generate $\mathcal{D}^\text{lang}_E$ by $\mu_g$ with $\mathcal{D}_E$
   \STATE Generate $\mathcal{D}^\text{lang}_G$ by $\mu_g$ with $\mathcal{D}_G$
   
   \STATE {\color{gray}// Finetune \llm{} $\mu_\phi$}
   \STATE Initialize $\mu_\phi$ with a pretrained LLM $\Psi$ and a projector $W$
   \FOR{each $W$ finetuning iteration}
        \STATE Sample $(s_t, a_t, l_t) \sim \mathcal{D}^\text{lang}_G$
        \STATE Update $W$ with $\ell^\phi_\text{CE}$ from \Cref{eq:ce_loss}
   \ENDFOR
   \FOR{each $\mu_\phi$ finetuning iteration}
        \STATE Sample $(s_t, a_t, l_t) \sim \mathcal{D}^\text{lang}_G$
        \STATE Update $\mu_\phi$ with $\ell^\phi_\text{CE}$ from \Cref{eq:ce_loss}
   \ENDFOR
   
   \STATE {\color{gray}// Learn diffusion policy $\epsilon_\theta$}
   \STATE Randomly initialize $\epsilon_\theta$ 
   \FOR{each policy iteration}
        \STATE Sample $(s_t, a^E_t, l_t) \sim \mathcal{D}^\text{lang}_E$
        \STATE Sample $k \sim \mathrm{Uniform}(\{1,\dots,K\})$
        \STATE Sample $\epsilon \sim \mathcal{N}(0,\mathbf{I})$
        \STATE $a_t^k = \sqrt{\bar{\alpha}^k}a^E_t 
               + \sqrt{1-\bar{\alpha}^k}\epsilon$
        \STATE $\hat{\epsilon} \sim \epsilon_\theta(\cdot\mid s_t, a_t^k, k)$
        \STATE Compute $\mathcal{L}_{\text{DP}}$ from \Cref{eq:dp_loss} 
        \STATE Reconstruct $\hat{a}^0_t$ from \Cref{eq:dp_1step_reconstruct}
        \STATE Compute $\tilde{\mathcal{L}}_{\text{LC}}$ with $\omega^k$
        \STATE Update $\epsilon_\theta$ with 
        $\mathcal{L}=\mathcal{L}_{\text{DP}}+\lambda\tilde{\mathcal{L}}_{\text{LC}}$
   \ENDFOR
   \STATE \textbf{return} $\epsilon_\theta$
   \end{algorithmic}
\end{algorithm}
\end{minipage}

\end{minipage}

\end{figure}

%% file: tex/appendix/lang_details.tex
\section{\Lang{} details}
\label{appx:lang_details}

This section provides additional details on the design and sources of \lang{}s. We first describe the semantic structure of the labels, and then discuss how the language-generation function $\mu$ can be instantiated in practice.

\paragraph{\Lang{} design.}
Assumption~\ref{assump:language_sufficiency} requires that reward-relevant differences in feature space are reflected in the induced language distributions. In other words, when two state-action pairs differ in their reward-relevant semantics, their corresponding \lang{}s should also be distinguishable. This requirement motivates labels that expose task-relevant information rather than unconstrained descriptions of every transition.

Motivated by reward design principles~\cite{zhang2025rewind, ng1999policy}, we decompose each \lang{} into three components: task progress \texttt{<T>}, action optimality \texttt{<A>}, and movement guidance \texttt{<M>}. The goal of this decomposition is not to exhaustively describe all aspects of a transition, but to provide structured and compact supervision for learning from mixed-quality demonstrations.

The task-progress component \texttt{<T>} describes the semantic stage of the current state. This component identifies what part of the task has been achieved and which subgoal is currently relevant. For example, in manipulation tasks, \texttt{<T>} may distinguish whether the end-effector is approaching an object, whether the object has been contacted, whether the object is being moved toward the target, or whether the task is completed. By explicitly representing task progress, \texttt{<T>} helps distinguish states that may be close in raw observation space but correspond to different stages of the long-horizon objective.

The action-optimality component \texttt{<A>} characterizes the quality of the demonstrated action under the current task context. Instead of assigning a scalar reward or confidence weight, \texttt{<A>} expresses whether the action is beneficial, ineffective, or harmful for task completion. This component is particularly important in mixed-quality datasets, where suboptimal demonstrations may contain both useful intermediate states and undesirable actions. \texttt{<A>} provides supervision for deciding which behaviors should be imitated and which should be down-weighted or corrected.

The movement-guidance component \texttt{<M>} provides fine-grained corrective information about how the agent should move. While \texttt{<T>} captures the high-level task stage and \texttt{<A>} captures action quality, \texttt{<M>} describes the direction or adjustment needed to make progress. For instance, it may indicate that the agent should move toward a target, align with an object, push in a particular direction, reduce excessive motion, or correct an overshooting behavior. This component is especially useful for continuous-control tasks, where small differences in actions can lead to different outcomes but may not be fully captured by coarse task-stage labels alone.

Together, \texttt{<T>}, \texttt{<A>}, and \texttt{<M>} provide complementary supervision. \texttt{<T>} specifies where the agent is in the task, \texttt{<A>} evaluates whether the demonstrated action is appropriate, and \texttt{<M>} describes how the behavior should be adjusted. This structured design encourages the induced language distribution to distinguish behaviors that matter for task success while reducing irrelevant variation caused by unconstrained natural language. We evaluate the contribution of these components in \Cref{exp:lang_ablation_comp}.

For example, in the \peginsertionside{} task~(\Cref{fig:5-5-peginsertionside-v1}), the robot arm must grasp a peg and insert it into the side of a box. A generated \lang{} may include task progress, such as \textit{Align the peg to the hole.}; action optimality, such as \textit{The action is bad for the state, since the peg misaligns with the hole.}; and movement guidance, such as \textit{Plus, you can pitch to the top softly.} Combined, the final label is: \textit{Align the peg to the hole. But the action is bad for the state, since the peg misaligns with the hole. Plus, you can pitch to the top softly.} Additional examples across tasks are provided in \Cref{tab:lang_samples}.

\input{figures/vlm_prompt_template}

\input{tables/lang_label_samples}

\paragraph{Sources of \lang{}s.}
Given the desired label structure, the remaining question is how the language-generation function $\mu$ is instantiated. In general, $\mu$ can take several forms. Human annotators can provide semantically rich feedback, but annotation is labor-intensive, expensive to scale, and may introduce inconsistency. LLMs and VLMs can generate language labels from symbolic states, actions, or visual observations, and have been widely used for language-based reasoning and decision making in embodied agents~\cite{sumers2023distilling, driess2023palm, jones2025beyond}. However, the induced language distribution can vary substantially with the model, prompt template, decoding strategy, and input representation. In addition, LLM-based or VLM-based generation may introduce computational overhead when applied to large offline datasets. Our proposed \lang{} generator provides a controlled alternative: it is less flexible than human or model-generated feedback, but offers reproducible and consistent labels when task states and relevant geometric relations are available.

In this work, we mainly instantiate $\mu$ with a \lang{} generator $\mu_g$ built on LLF-Bench~\citep{cheng2024llfbench}. We implement a customized version of the LLF-Bench generation pipeline to produce task-specific labels following the \texttt{<T>}, \texttt{<A>}, and \texttt{<M>} structure described above. This choice allows us to study whether structured language supervision improves offline imitation learning without variation from model sampling, prompt design, or human annotator inconsistency. Although our generator is built on LLF-Bench~\citep{cheng2024llfbench}, it differs from the original benchmark in two important ways. First, we propose and implement the structured \texttt{<T>}, \texttt{<A>}, and \texttt{<M>} decomposition, whereas the original LLF-Bench feedback mainly indicates whether the behavior is good or bad and often contains larger language variations. Our structure is designed to make task progress, action quality, and corrective movement guidance explicit and consistent across the dataset. Second, we extend the generation pipeline beyond the original LLF-Bench domains to cover a broader set of embodied decision-making tasks, including \maze{}, \blockpushing{}, \peginsertionside{}, and Adroit hand tasks such as \hammer{} and \relocate{}. The overall generation details of $\mu_g$ are provided in \Cref{fig:lc_generate_pipeline}.

We additionally evaluate alternative \lang{} sources based on LLM/VLM generation in \Cref{exp:lang_ablation_comp}. Specifically, we collect language labels by prompting \texttt{o4-mini}, a multimodal vision-language reasoning model, to generate \lang{}s from trajectory observations. The prompting template is provided in \Cref{fig:vlm_prompt_temp}, and generated examples are included in \Cref{tab:lang_samples} and \Cref{tab:lang_samples_variant}. These experiments allow us to examine whether the proposed framework can benefit from more flexible model-generated language, while the main results use $\mu_g$ labels for consistency.

%% file: figures/vlm_prompt_template.tex
\begin{figure}
    \centering
    \caption{\textbf{VLM Prompting.} The prompt we use to generate \lang{}s with \texttt{o4-mini}.}
    \begin{myblock}{VLM Prompt with \texttt{o4-mini}.}
        You are shown two images:
        \begin{itemize}
            \item Image 1 is the state BEFORE the action (current state).
            \item Image 2 is the state AFTER the action (next state).
        \end{itemize}
        
        Below is additional information about the system:
        
        \quad (STATE SPACE)
        \{state space description\}

        \quad (ACTION SPACE)
        \{action space description\}

        You are also given the numeric representations of both states and the action taken:
        
        \quad (NUMERIC STATE)
        current state: \{state\}
        ; next state: \{next state\}
        ; action taken: \{action\}

        The agent's task is described as follows (in second-person view):
        \{task instruction\}

        Use BOTH the images and the numeric states to evaluate the transition.
        Before anything else, identify the main objects in the scene (robot arm, gripper, target object, goal region, obstacles) and reason about their spatial relationships (left/right, nearer/farther, higher/lower, touching/not touching).
        Use the numeric state values to clarify small or ambiguous visual changes, such as exact distances, object displacement, or contact.
        Focus on how positions, distances, and contacts changed from the previous state to the next state.
        Answer the following questions:
        \begin{enumerate}
            \item Using BOTH images and numeric states, describe the relative positions of the agent, the main object, and the goal.
            \item In the previous state, determine the agent's task stage based on the gripper–object–goal configuration.
            \item Describe the key spatial differences between the two states, focusing ONLY on task-relevant objects such as changes in distance, movement direction, contact, or orientation.
            \item Decide whether the action was GOOD or BAD for accomplishing the task goal, tolerating small errors or minor regressions unless the action clearly moves the agent meaningfully farther from success.
            \item Provide a concise recommendation for what the agent should do NEXT, specifying clear spatial directions such as move left/right/forward/backward/up/down, open or close the gripper, or push or pull.
        \end{enumerate}
        
        First, write a few sentences of reasoning that:
        \begin{itemize}
            \item Describe the relative positions of the agent, the main object, and the goal using both images and numeric states.
            \item Identify the task stage in the previous state.
            \item Explain what visually and numerically changed between the two states.
            \item Justify why the action is good or bad in terms of progress toward the goal.
            \item Explain why your recommended next movement is appropriate based on spatial reasoning and numeric-state dynamics.
        \end{itemize}
        
    Then, on a NEW LINE at the end, output a single human-style message in the following EXACT format:
    RESULT: followed by 1–5 short sentences addressed directly to the agent in second person that describe the current task stage, clearly state whether the last action was good or bad, and give a specific spatial recommendation for what to do next.

    In the RESULT message:
    \begin{itemize}
        \item Speak directly to the agent using you and your.
        \item Do NOT mention images, numeric states, coordinates, or technical terms.
        \item Make it sound like natural human advice rather than a formal report.
    \end{itemize}
    
    You MUST follow the format.
    \end{myblock}
    \label{fig:vlm_prompt_temp}
\end{figure}

%% file: tables/lang_label_samples.tex
\begin{table}
    \centering
    \caption{\textbf{\Lang{} samples.} We listed expert and suboptimal samples for all tasks generated with $\mu_g$.}
    \renewcommand{\arraystretch}{1.2}
    
    \begin{tabularx}{\textwidth}{l X X}
        \toprule
        \textbf{Task} & Expert Samples & Suboptimal Samples \\
        \midrule
        \textbf{\maze{}}
            & You should go in the west direction. Furthermore, the action is good for the state. Plus, you should make a left here.
            & You should continue in the north direction. Yet, the action is not working for the state. Moreover, you should go slower. Also, you should take a left here.\\
        \midrule
        \textbf{\parking{}}
            & You should now gently park your car into the spot. Moreover, the action is helpful for the state. Furthermore, brake a bit. Also, steer a bit left.
            & You should first align your car to the parking spot. Even so, the action is not helpful for the state. Plus, slow down a bit. Moreover, steer a bit left.\\
        \midrule
        \textbf{\sweep{}}
            & You should go sideways to the goal. Plus, the action is good for the state. Yet, make a move to the front softly. Plus, please move downward strongly.
            & You should move to the cube. Yet, the action is bad for the state. Furthermore, the gripper should be opened. Plus, move toward the left firmly.\\
        \midrule
        \textbf{\boxclose{}}
            & You should fetch the lid. Furthermore, the action is useful for the state. Even so, you should move to the front calmly.
            & You need to raise the lid. Yet, the action is not good for the state. Plus, you should move to the front quickly. Moreover, you need to move to the top quickly.\\
        \midrule
        \textbf{\blockpushing{}}
            & Push the second block to the second target. Furthermore, the action is good for the state.
            & Move to the first block. Plus, the action is bad for the state. Plus, you should move to the front strongly.\\
        \midrule
        \textbf{\peginsertionside{}}
            & Insert the peg into the hole. Furthermore, the action is good for the state because the peg is being pushed into the hole. Also, take a move to the bottom gently.
            & Go to the peg. Yet, the action is not right for the state, since the gripper moves away from the peg. Plus, make a move toward the right gently. Furthermore, make a move to the bottom softly.\\
        \midrule
        \textbf{\hammer{}}
            & You should swing the hammer to the nail. Furthermore, the action is helpful for the state.
            & You need to move to the hammer. But, the action is not useful for the state. Also, you need to move to the back.\\
        \midrule
        \textbf{\relocate{}}
            & You need to go to the ball. Furthermore, the action is correct for the state.
            & You need to move to the ball. Even so, the action is not helpful for the state. Also, you need to move right. Furthermore, you should move toward the front. Plus, you need to reach down.\\
        \bottomrule
    \end{tabularx}
    \label{tab:lang_samples}
\end{table}

\begin{table}
    \centering
    \caption{\textbf{\Lang{} variant samples.} We listed expert and suboptimal samples for all tasks generated with \texttt{o4-mini} and other variants in \Cref{exp:lang_ablation_comp}.}
    \renewcommand{\arraystretch}{1.2}
    
    \begin{tabularx}{\textwidth}{l X X}
        \toprule
        \textbf{Task} & Expert Samples & Suboptimal Samples \\
        \midrule
        \textbf{\boxclose{} (\texttt{o4-mini})}
            & You’re in the lift stage and that last move was good. Now move the lid forward and to the right until it’s centered above the box opening.
            & You’re still trying to grab the lid, and lifting away was a bad move. Lower down and move forward and right to center on the handle. Then close your gripper to secure the lid.\\
        \midrule
        \textbf{\boxclose{} (\texttt{o4-mini} concise)}
            & Lower gripper straight onto handle, open fully and center, clamp down firmly, then lift up.
            & Maintain the handle alignment, lower gripper, then move forward and close. Last move was unhelpful.\\
        \midrule
        \textbf{\boxclose{} (\texttt{o4-mini} $\mu_g$-style)}
            & You should center the lid. Moreover, the action is good for the state. Yet, lower your gripper straight onto the handle. Also, lift up firmly.
            & You need to lower the gripper onto the handle and shift forward. However, the action is suboptimal for the state. Moreover, line up before closing the grip.\\
        \midrule
        \textbf{\boxclose{} ($\mu_g$-verbose)}
            & It's great that you're trying to retrieve the lid. This action will indeed help with the current situation. Just remember to approach it calmly and smoothly from the front. Keep up the good work!
            & It seems like you're trying to get the lid, but at the moment, that action doesn't seem to be advancing our goal. Let's adjust our approach here. Instead, move towards the left more decisively, and then slowly inch forward.\\
        \bottomrule
    \end{tabularx}
    \label{tab:lang_samples_variant}
\end{table}

%% file: tex/appendix/llm_captioner.tex
\section{\llm{} details}
\label{appx:llm_captioner_details}

\begin{figure}[t]
    \centering
        \centering
        \includegraphics[width=0.35\linewidth]{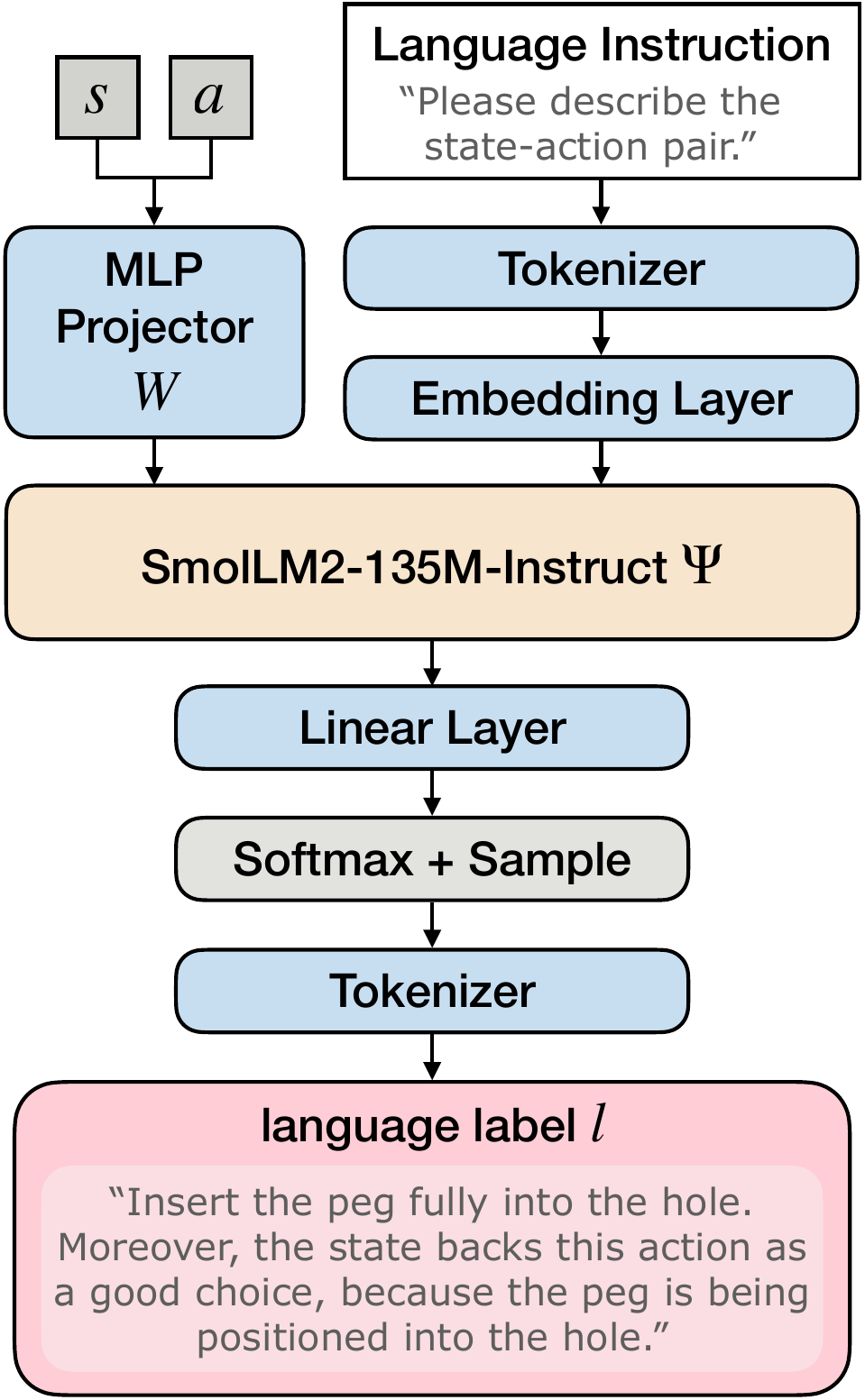}
        \caption{\textbf{Model Architecture of \llm{}.} The state-action pair $(s, a)$ is projected into the LLM’s hidden space as a single token via the MLP projector $W$. This token, concatenated with the language instruction embeddings, serves as input to the decoder transformer $\Psi$. The final output layer applies a softmax function to sample tokens, generating the resulting \lang{}s $l$.}
    \label{fig:llm_captioner_arch}
\end{figure}

In this section, we introduce the model architecture and the detailed training procedure of the \llm{} $\mu_\phi$.
Our objective is to enable end-to-end policy learning to optimize the language-critique imitation objective.
To achieve this, the differentiable \llm{} $\mu_\phi$ must distill the \lang{} generator $\mu_g$ and provide gradient signals that encourage the policy to match the objective.

The \llm{} $\mu_\phi$ consists of a pretrained transformer backbone $\Psi$—specifically SmolLM2-135M-Instruct~\cite{allal2025smollm2}—and an MLP projector $W$ that embeds state–action pairs into the hidden space of the transformer, as illustrated in \Cref{fig:llm_captioner_arch}. Further details for the model architecture are provided in \Cref{appx:model_architecture}. 
The resulting model defines a conditional distribution $\mu_{\phi}(\cdot \mid s_t, a_t)$ over \lang{}s, where $\phi = \{\Psi, W\}$ denotes all trainable parameters.

We finetune the \llm{} on the labeled general dataset $\mathcal{D}^\text{lang}_G$ using an instruction-tuning pipeline.
Training proceeds in two stages:
(1) we first update the MLP projector $W$ while keeping the LLM backbone frozen, and
(2) we finetune the entire \llm{} $\mu_\phi$ end-to-end, jointly updating both $W$ and $\Psi$.
In both stages, the objective is the cross-entropy loss in \Cref{eq:ce_loss}:
\[
    \mathbb{E}_{
        (s_t, a_t, l_t) \sim \mathcal{D}^\text{lang}_G
    }
    \left[
        \ell^\phi_\text{CE}(l_t, s_t,a_t)
    \right]
\]

The purpose of the finetuned \llm{} $\mu_\phi$ is to distill the generator $\mu_g$ for end-to-end policy optimization via gradient descent.
Then, the \lcloss{} constructed by \llm{} encourages the policy to produce actions that align with the expert policy's behavior with respect to the expert language-critique objective. We ablate backbone choices and finetuning modes in \Cref{appx:llm_ablation}.

%% file: tex/appendix/lcdp_details.tex
\section{\methoddp{} details}
\label{appx:lcdp_details}

In this section, we provide detailed derivations and implementation details for applying the proposed reweighted \lcloss{} $\tilde{\mathcal{L}}_\text{LC}$ to diffusion policies~\cite{chi2023diffusionpolicy} of our method \methoddp{}. 
We first review the diffusion model backbone underlying diffusion policies, including the forward diffusion process and the reverse generative process, in \Cref{appx:diffusion_forward_reverse}. 
Next, we describe the diffusion model training algorithm used for policy learning in \Cref{appx:diffusion_training}. 
Finally, in \Cref{appx:reweighting_factor_derive}, we derive the timestep-dependent reweighting factor $\omega^k$ for \methoddp{} from the diffusion training algorithm and provide experimental evidence justifying its design.

\subsection{Diffusion model: forward and reverse process}
\label{appx:diffusion_forward_reverse}

We first review the probabilistic formulation of diffusion models, which define paired forward and reverse Markov processes over a sequence of latent variables.
As illustrated in \Cref{fig:diffusion_forward_reverse}, diffusion models can be interpreted as latent variable models, where the data distribution over clean samples $x^0$ is augmented with a sequence of latent variables $\{x^k\}_{k=1}^K$.
The joint distribution is defined through a reverse generative process, while the forward process serves as an approximate posterior.

\begin{figure}[b]
    \centering
    \begin{minipage}{0.8\textwidth}
        \centering
        \includegraphics[width=0.8\linewidth]{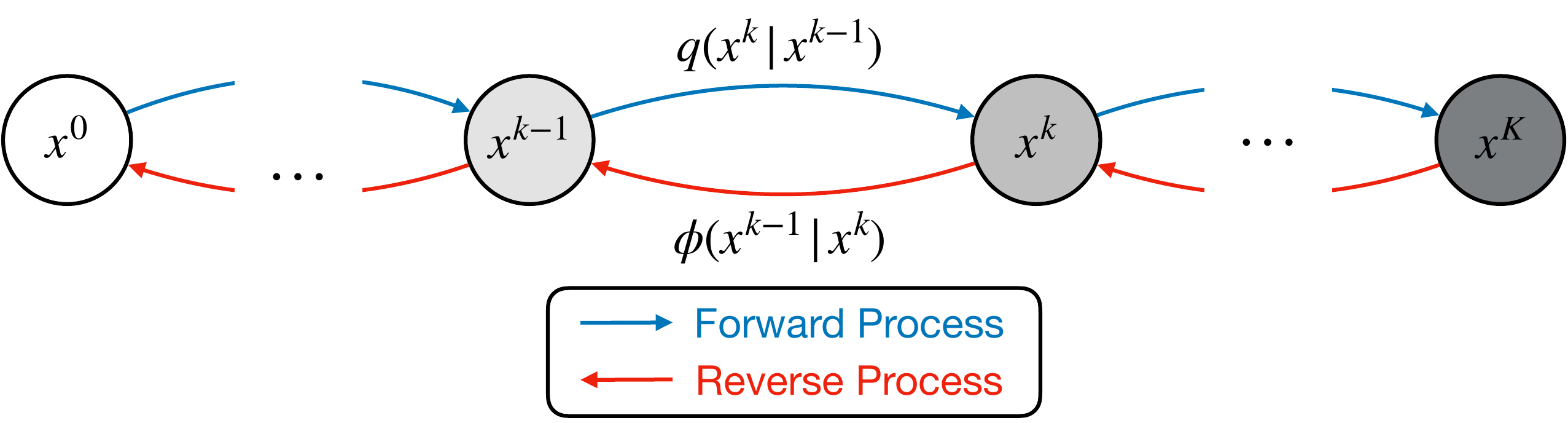}
        \caption{\textbf{Illustration of the probabilistic diffusion process.}
        Latent variables $x^1, \dots, x^K$ are generated from the clean data sample $x^0$ through a forward diffusion process defined by a probabilistic Markov chain $q(x^k \mid x^{k-1})$.
        The reverse process $\phi$ samples the latent variables from the previous ones to reconstruct the original data sample $x^0$ via $\phi(x^{k-1} \mid x^k)$.
        }
        \label{fig:diffusion_forward_reverse}
    \end{minipage}
\end{figure}

Concretely, in the case of denoising diffusion probabilistic model (DDPM)~\cite{ho2020denoising}, the forward diffusion process $q(x_{1:K} \mid x_0)$ is defined as a Markov chain that gradually adds Gaussian noise to the data according to a predefined variance schedule. The forward transitions take the form:
\begin{equation}
q(x^k \mid x^{k-1}) =
\mathcal{N}\!
    \left(
        x^k;\, \sqrt{\alpha^k} x^{k-1}, (1-\alpha^k)\mathbf{I}
    \right),
\end{equation}
where $\{\alpha^k\}_{k=1}^K$ schedules the noise magnitude at each timestep and $\bar{\alpha}^k = \prod_{i=1}^k \alpha^i$.
Due to the linear-Gaussian structure of this process, the latent variable $x^k$ admits a closed-form expression conditioned directly on the clean sample $x^0$:
\begin{equation}
    x^k = 
        \sqrt{\bar{\alpha}^k} x^0 
        + \sqrt{1 - \bar{\alpha}^k}\,\epsilon,
    \quad \epsilon \sim \mathcal{N}(0, \mathbf{I}).
\end{equation}
As the timestep $k$ increases, the signal-to-noise ratio decreases and $x^k$ approaches an isotropic Gaussian distribution.

The reverse diffusion process aims to invert the forward diffusion process in order to generatively sample clean data.
This is achieved by modeling the conditional distributions $\phi(x^k-1 \mid x^k)$, such that samples drawn from the resulting reverse Markov chain approximate the original data distribution.
However, directly parameterizing these reverse transitions is generally intractable.

To address this challenge, DDPMs adopt a denoising-based parameterization, in which a neural network is trained to predict the noise injected at each diffusion timestep.
Specifically, a model $\epsilon_\theta(x_k, k)$ is optimized to estimate the noise $\epsilon$, allowing recovery of the clean sample from its noisy latent representation.
This denoising formulation leads to a simple and stable training objective based on noise prediction, while implicitly defining the reverse generative process.

Diffusion policies~\cite{chi2023diffusionpolicy} adapt the denoising-based diffusion framework to continuous control by modeling a reverse generative process over actions.
As illustrated in \Cref{fig:dp_reverse_process}, conditioned on the current state $s_t$, the diffusion policy samples a clean action $\hat{a}_t^0$ starting from Gaussian noise $a^K_t\sim\mathcal{N}(0,\mathbf{I})$ and iteratively applying the learned reverse diffusion process.
At each diffusion timestep $k$, the policy predicts the noise component $\hat{\epsilon}\sim\epsilon_\theta(\cdot \mid s_t, a_t^k, k)$ based on the current state $s_t$, the intermediate noisy action $a_t^k$, and the timestep index $k$, and uses this prediction to produce the next action sample according to the reverse transition:
\begin{align}
    \hat{a}_t^{k-1} = \frac{1}{\sqrt{\alpha^k}}
        \left(
            a_t^k -
            \frac{1-\alpha^k}{\sqrt{1-\bar{\alpha}^k}}
            \epsilon_\theta(\cdot\mid s_t, a_t^k, k)
        \right)
    + \sigma^k z,
\end{align}
where $z \sim \mathcal{N}(0, \mathbf{I})$ for $k > 1$ and $z = 0$ for $k = 1$.
By iteratively applying this reverse update from $k=K$ to $k=1$, the diffusion policy progressively denoises the action sample, yielding a clean action $\hat{a}_t^0$.
This reverse-process formulation implicitly defines a conditional action distribution given the state.

\subsection{Diffusion model: training algorithm}
\label{appx:diffusion_training}

Training a diffusion policy reduces to learning a conditional denoising model from expert demonstrations.
Given a state-action pair $(s_t, a_t^0)$ sampled from the expert dataset $\mathcal{D}_E$, where $a_t^0$ denotes the clean expert action, the training procedure follows the forward--reverse diffusion formulation described in \Cref{appx:diffusion_forward_reverse}.

Specifically, a diffusion timestep $k$ is first sampled uniformly from $\{1, \dots, K\}$.
Gaussian noise $\epsilon \sim \mathcal{N}(0, \mathbf{I})$ is then drawn and applied to the clean action $a_t^0$ according to the forward diffusion process,
\begin{equation}
a_t^k = \sqrt{\bar{\alpha}^k} a_t^0 + \sqrt{1 - \bar{\alpha}^k}\,\epsilon,
\end{equation}
where $\bar{\alpha}^k = \prod_{i=1}^k \alpha^i$ denotes the cumulative noise schedule.
This operation yields a noisy action $a_t^k$ whose signal-to-noise ratio decreases as the timestep $k$ increases.

The diffusion policy $\epsilon_\theta$ is trained to predict the injected noise conditioned on the current state, the noisy action, and the diffusion timestep.
In practice, the model receives $(s_t, a_t^k, k)$ as input and outputs a noise estimate $\hat{\epsilon}\sim\epsilon_\theta(a_t^k, s_t, k)$.
The training objective minimizes the mean squared error between the predicted and true noise,
\begin{equation}
    \mathcal{L}_{\text{DP}} =
    \mathbb{E}_{(s_t, a^E_t) \sim \mathcal{D}_E,\, k,\, \epsilon\sim \mathcal{N}(0, \mathbf{I})}
        \left[
            \left\lVert 
                \epsilon - \epsilon_\theta(\cdot\mid s_t, a_t^k, k)
            \right\rVert^2
        \right].
\end{equation}
By training the policy to denoise actions at different noise levels, the model implicitly learns a conditional distribution over clean actions given the state.
At inference time, action generation begins from a Gaussian noise sample and iteratively applies the learned reverse process to produce a clean action conditioned on the current state, as illustrated in \Cref{fig:dp_reverse_process}.

\begin{figure}
    \centering
    \begin{minipage}{0.9\textwidth}
        \centering
        \includegraphics[width=0.6 \linewidth]{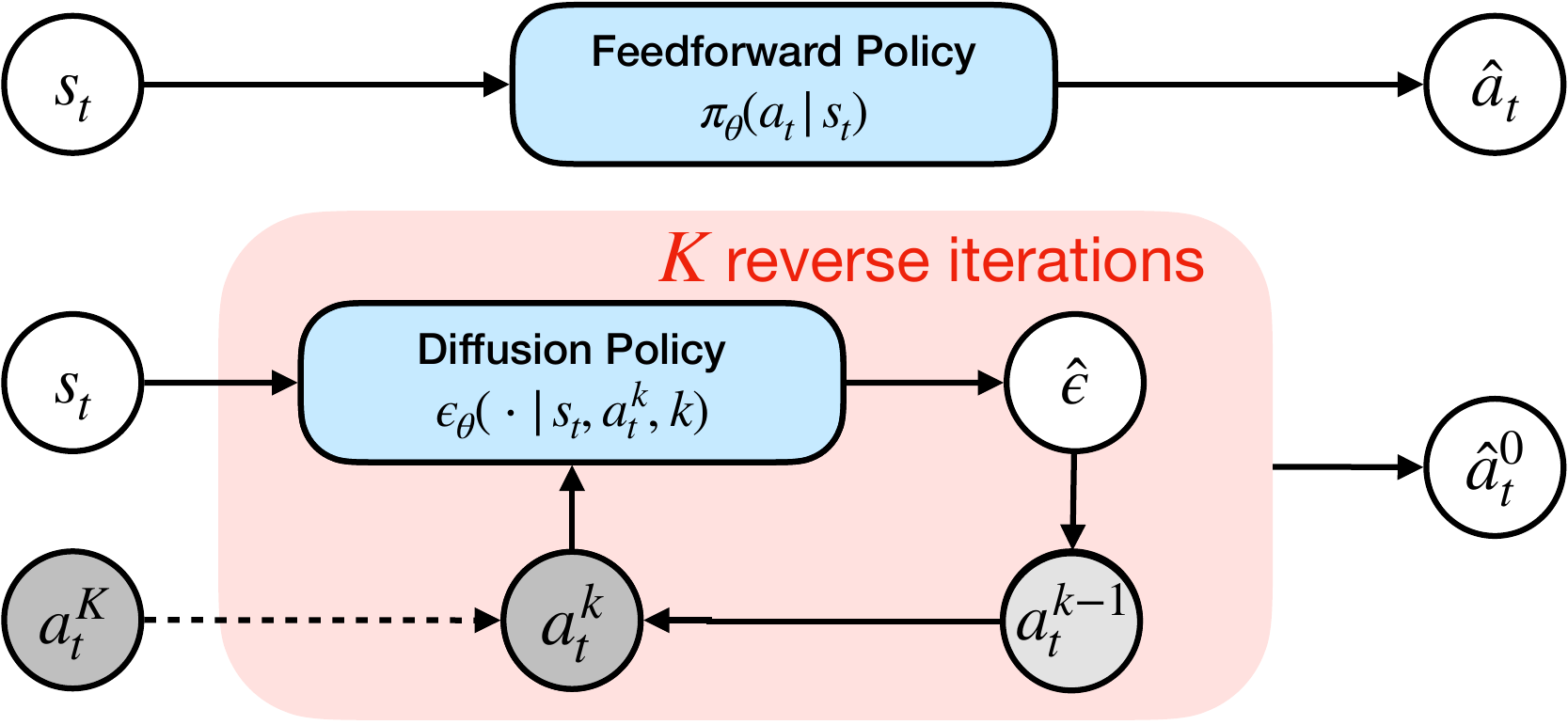}
        \caption{\textbf{Action sampling process of a diffusion policy.}
        Starting from Gaussian noise $a^K_t$, the diffusion policy iteratively denoises actions conditioned on the current state $s_t$, producing a clean action sample $\hat{a}^0_t$ through the learned reverse diffusion process. On the other hand, the feedforward policy predicts an action $\hat{a}_t$ with a single-step inference conditioned on the state.}
        \label{fig:dp_reverse_process}
    \end{minipage}
\end{figure}

\subsection{Reweighting factor derivation}
\label{appx:reweighting_factor_derive}

To apply \lang{} supervision to diffusion policies, we leverage one-step reconstruction to obtain a clean action estimate from the predicted noise and compute the \lcloss{} on the reconstructed action.
Concretely, although the diffusion policy is trained to predict noise, the predicted noise $\hat{\epsilon}\sim\epsilon_\theta$ can be converted into a reconstructed clean action $\hat{a}_t^0$ via one-step reconstruction. This allows us to reuse the same \lcloss{} formulation as in feedforward policies.

A straightforward approach is to directly apply the \lcloss{} on the reconstructed action without any additional design. However, as shown in \Cref{fig:reweighting_mse_inspect_perf}, this naive strategy yields limited improvement compared to feedforward policies, suggesting a mismatch between diffusion training dynamics and the action-level supervision induced by the \lcloss{}.

To better understand this behavior, we analyze the properties of one-step reconstruction across diffusion timesteps. Empirically, we observe that the quality of the reconstructed action $\hat{a}_t^0$ strongly depends on the diffusion timestep $k$. Reconstructions from small diffusion steps (small $k$) are generally accurate, while reconstructions from large diffusion steps (large $k$) are substantially noisier. This introduces an inherent imbalance: errors from large-$k$ steps dominate the action-level objective, while informative signals from small-$k$ steps are underrepresented.

The standard diffusion policy training objective optimizes noise prediction via the mean squared error
\begin{align}
    \mathcal{L}_\text{DP} =
    \mathbb{E}_{(s_t, a^E_t)\sim\mathcal{D}_E, \epsilon\sim \mathcal{N}(0,\mathbf{I}),k}
    \left[
        \left\lVert
           \epsilon - \epsilon_\theta(\cdot\mid s_t, a_t^k, k)
        \right\rVert^{2}
    \right]
\end{align}
as defined in \Cref{eq:dp_loss}. In contrast, when applying supervision on reconstructed actions, the objective becomes an action-level distance:
\begin{align}
    \mathcal{L}^a_\text{DP}=
    \mathbb{E}_{(s_t, a^E_t)\sim\mathcal{D}_E, \epsilon\sim \mathcal{N}(0,\mathbf{I}),k}
    \left[
        \left\lVert
            a_t^0 - \hat{a}_t^0
        \right\rVert^{2}
    \right]
\end{align}
where $\hat{a}_t^0$ is obtained via one-step reconstruction in \Cref{eq:dp_1step_reconstruct}. Expanding this action-level objective yields:

\begin{align}
    &\left\lVert
        a_t^0 - \hat{a}_t^0
    \right\rVert^{2}\\
    = &\left\lVert
        a_t^0 - \left(
            \frac{1}{\sqrt{\bar{\alpha}^k}}a^k_t 
            - \frac{\sqrt{1-\bar{\alpha}^k}}{\sqrt{\bar{\alpha}^k}}
            \epsilon_\theta(\cdot\mid s_t, a_t^k, k)
        \right)
    \right\rVert^{2}\\
    = &\left\lVert
        a_t^0 - \left(
            \frac{1}{\sqrt{\bar{\alpha}^k}}
                \left(
                    \sqrt{\bar{\alpha}^k}a^0_t 
                    + \sqrt{1-\bar{\alpha}^k} \epsilon 
                \right)
            - \frac{\sqrt{1-\bar{\alpha}^k}}{\sqrt{\bar{\alpha}^k}}
            \epsilon_\theta(\cdot\mid s_t, a_t^k, k)
        \right)
    \right\rVert^{2}\\
    = &\left\lVert
        a_t^0 - \left(
            a^0_t 
            + \frac{\sqrt{1-\bar{\alpha}^k}}{\sqrt{\bar{\alpha}^k}} \epsilon 
            - \frac{\sqrt{1-\bar{\alpha}^k}}{\sqrt{\bar{\alpha}^k}}
            \epsilon_\theta(\cdot\mid s_t, a_t^k, k)
        \right)
    \right\rVert^{2}\\
    = &\left\lVert
        a_t^0 - a^0_t 
        +  \frac{\sqrt{1-\bar{\alpha}^k}}{\sqrt{\bar{\alpha}^k}} \left(
            \epsilon 
            - \epsilon_\theta(\cdot\mid s_t, a_t^k, k)
        \right)
    \right\rVert^{2}\\
    = &\frac{1-\bar{\alpha}^k}{\bar{\alpha}^k}
        \left\lVert
            \epsilon 
            - \epsilon_\theta(\cdot\mid s_t, a_t^k, k)
        \right\rVert^{2}
    \label{eq:dp_action_loss}
\end{align}

showing that the action reconstruction error is equivalent to the original noise-prediction error weighted by a timestep-dependent factor $\frac{1-\bar{\alpha}^k}{\bar{\alpha}^k}$.

\input{figures/reweighting_factor_plot}

As illustrated in \Cref{fig:reweighting_factor_plot}, this weighting scheme disproportionately amplifies contributions from large diffusion $k$-steps at the expense of small-$k$ steps. Directly optimizing this action-level objective introduces unstable training loss convergence; as seen in Figure \ref{fig:reweighting_mse_inspect_loss}, the loss exhibits stronger fluctuation, leading to performance degradation. While the resulting policy remains functional, it ultimately underperforms compared to a standard diffusion policy trained with the original noise-prediction loss (Figure \ref{fig:reweighting_mse_inspect_perf}).

The above analysis reveals that the action-level objective $\mathcal{L}^a_\text{DP}$ implicitly reweights the original diffusion objective $\mathcal{L}_\text{DP}$. To recover the balanced training behavior of the standard diffusion loss with one-step reconstructed actions $\hat{a}^0_t$, we apply the reciprocal of this $k$-dependent weight $\frac{1-\bar{\alpha}^k}{\bar{\alpha}^k}$.Formally, we define the reweighting factor
\begin{align}
    \omega^k \coloneq \frac{\bar{\alpha}^k}{1-\bar{\alpha}^k}.
\end{align}
Applying this factor to the action-level objective yields the reweighted action loss
\begin{align}
    \tilde{\mathcal{L}}^a_\text{DP}&=
    \mathbb{E}_{(s_t, a^E_t)\sim\mathcal{D}_E, \epsilon\sim \mathcal{N}(0,\mathbf{I}),k}
    \left[
        \frac{\bar{\alpha}^k}{1-\bar{\alpha}^k}
        \left\lVert
            a_t^0 - \hat{a}_t^0
        \right\rVert^{2}
    \right]\\
    &=
    \mathbb{E}_{(s_t, a^E_t)\sim\mathcal{D}_E, \epsilon\sim \mathcal{N}(0,\mathbf{I}),k}
    \left[
        \left\lVert
           \epsilon - \epsilon_\theta(\cdot\mid s_t, a_t^k, k)
        \right\rVert^{2}
    \right]\\
    &=\mathcal{L}_\text{DP}
    \label{eq:dp_reweighted_action_loss}
\end{align}
showing that the reweighted action objective $\tilde{\mathcal{L}}^a_\text{DP}$ is exactly equivalent to the original noise-prediction objective $\mathcal{L}_\text{DP}$.

\input{figures/reweighting_mse_inspect}

We empirically validate this equivalence by training diffusion policies under three objectives: the standard noise-prediction loss $\mathcal{L}_\text{DP}$ (red), the unweighted action-level loss $\mathcal{L}^a_\text{DP}$ (gray), and the reweighted action-level loss $\tilde{\mathcal{L}}^a_\text{DP}$ (blue).
As shown in \Cref{fig:reweighting_mse_inspect_loss} and \Cref{fig:reweighting_mse_inspect_perf}, applying the reweighting factor $\omega^k$ aligns both the training loss trajectories and the final performance with those of $\mathcal{L}_{\text{DP}}$, whereas the unweighted action-level objective exhibits noticeable discrepancies.
These results confirm the effect and necessity of the proposed reweighting factor $\omega^k$.

With this derivation, we apply the same reweighting factor $\omega^k$ to the \lcloss{} when training diffusion policies. Intuitively, this reweighting compensates for the $k$-dependent reconstruction bias: without reweighting, inaccuracies at large diffusion steps are overemphasized, while informative gradients from small diffusion steps, where the signal-to-noise ratio is high, are diminished.

This behavior is visualized in \Cref{fig:reweighting_factor_plot_alphas} and \Cref{fig:reweighting_factor_plot_w&reci}. The original $k$-dependent weight $\frac{1-\bar{\alpha}^k}{\bar{\alpha}^k}$ (orange curve in \Cref{fig:reweighting_factor_plot_alphas}) exponentially suppresses contributions from small diffusion steps while amplifying those from large steps.
In contrast, its reciprocal $\omega^k$ (blue curve in \Cref{fig:reweighting_factor_plot_alphas}) restores balance by emphasizing small-$k$ steps—where the signal-to-noise ratio is high—and down-weighting large-$k$ steps.

%% file: figures/reweighting_factor_plot.tex
\begin{figure}[H]
\centering
    \begin{minipage}{0.9\textwidth}
    \centering
        \begin{subfigure}[b]{0.475\textwidth}
            \centering
            \includegraphics[width=\textwidth]{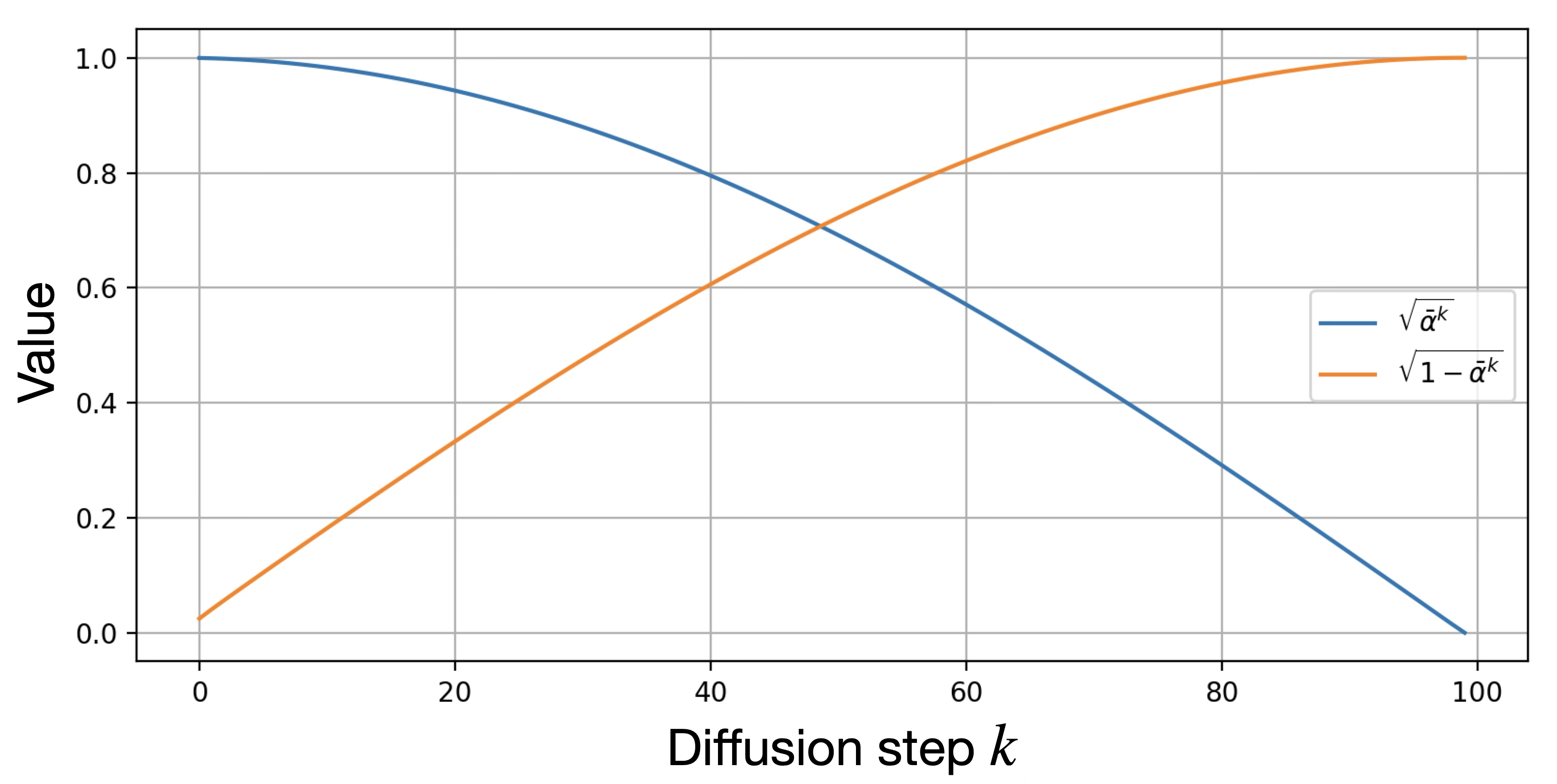}
            \caption{Diffusion schedule components over timesteps $k$: $\sqrt{\bar{\alpha}^k}$ (signal term) and $\sqrt{1-\bar{\alpha}^k}$ (noise term).}
            \label{fig:reweighting_factor_plot_alphas}
        \end{subfigure}
        \hfill
        \begin{subfigure}[b]{0.475\textwidth}
            \centering
            \includegraphics[width=\textwidth]{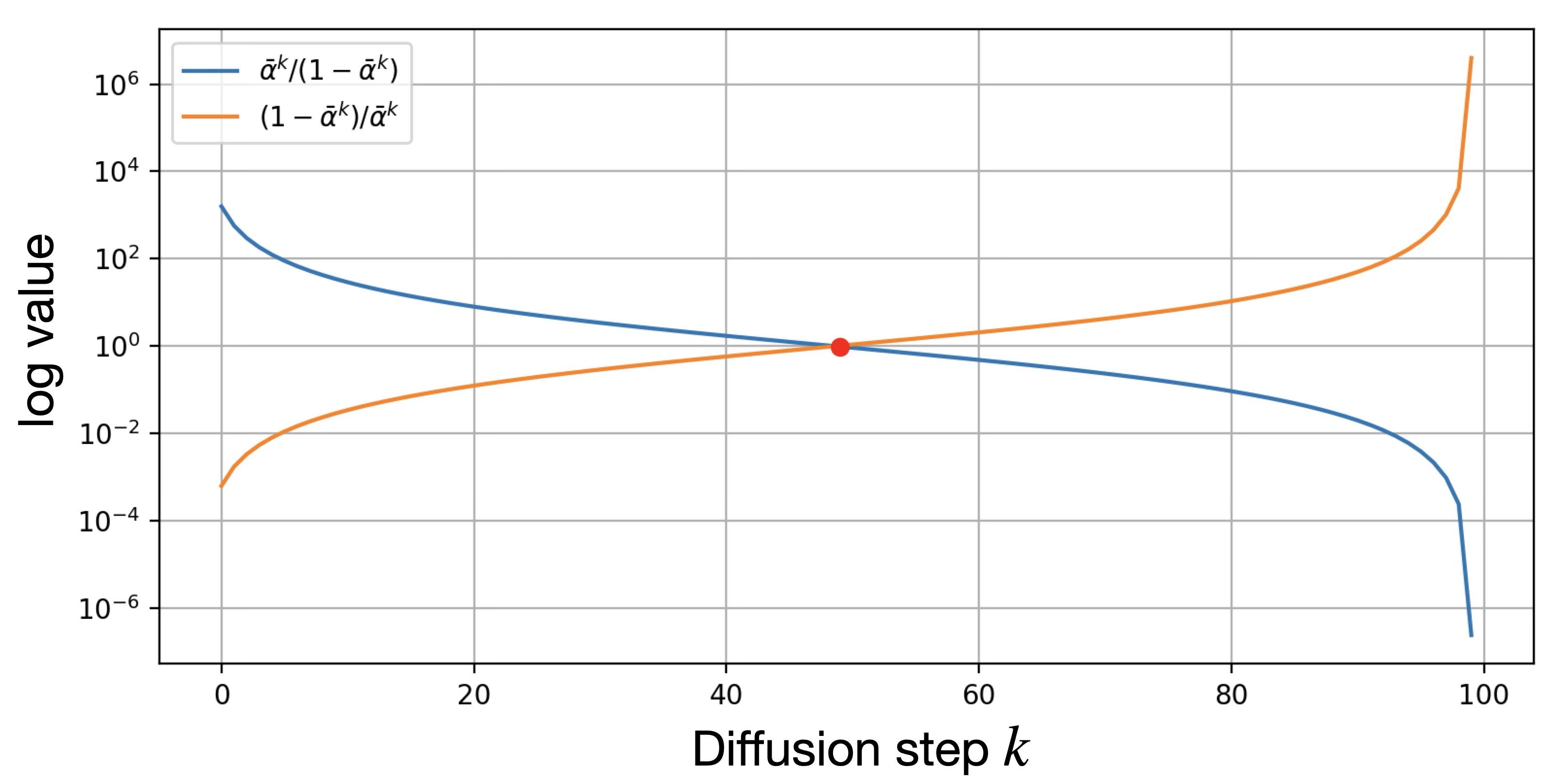}
            \caption{$k$-dependent weighting induced by one-step reconstruction: $\frac{1-\bar{\alpha}^k}{\bar{\alpha}^k}$, and its reciprocal reweighting factor 
            $\omega^k$.}            
            \label{fig:reweighting_factor_plot_w&reci} 
        \end{subfigure}
    \caption{\textbf{Diffusion schedules and $k$-dependent reweighting.}}
    \label{fig:reweighting_factor_plot}
    \end{minipage}
\end{figure}

%% file: figures/reweighting_mse_inspect.tex
\begin{figure}[H]
    \centering
    \begin{minipage}{0.9\textwidth}
            \begin{subfigure}[b]{0.495\textwidth}
                \centering
                \includegraphics[width=\textwidth]{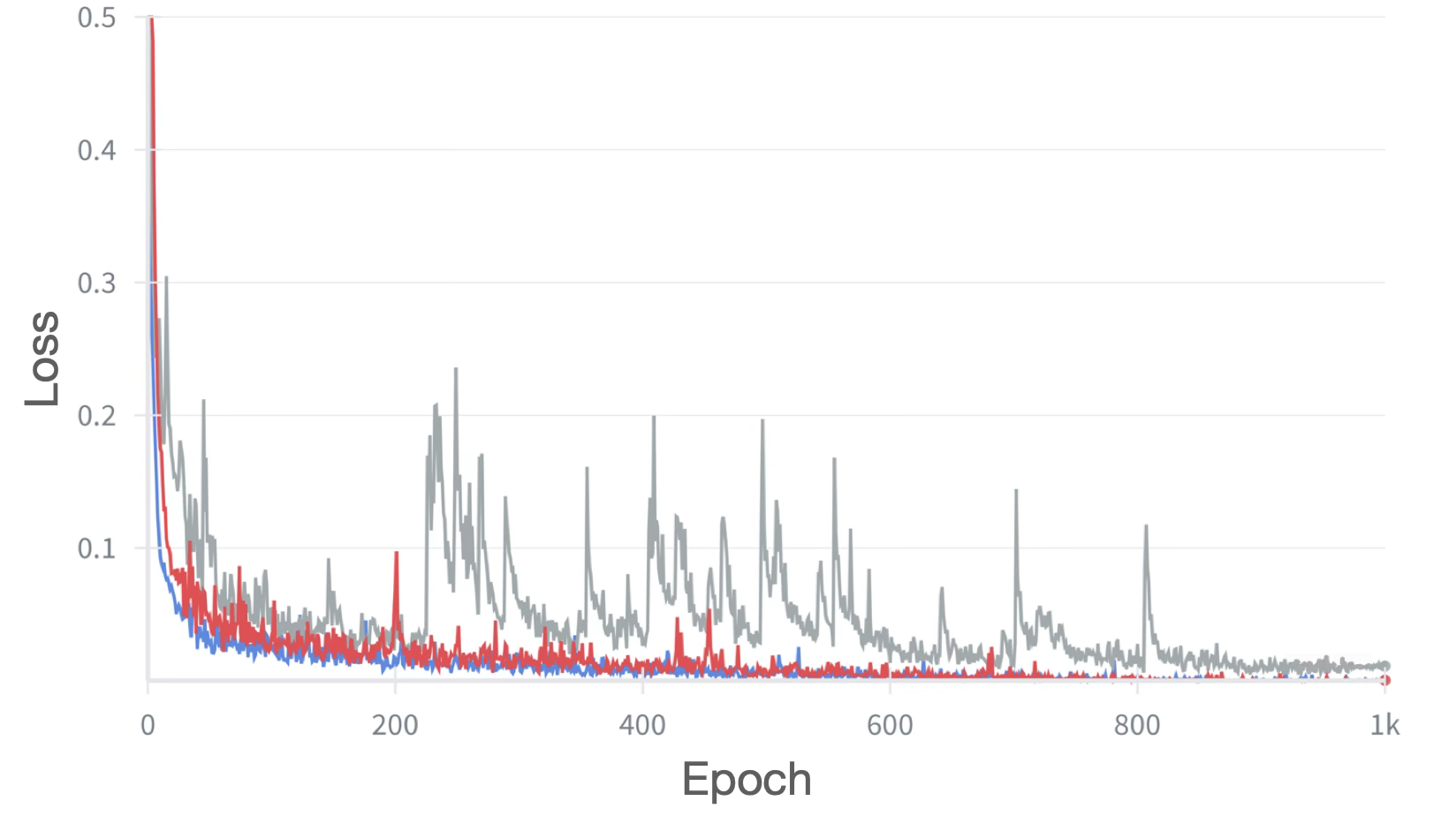}
                \caption{Training MSE loss curves}
                \label{fig:reweighting_mse_inspect_loss}
            \end{subfigure}
            \hfill
            \begin{subfigure}[b]{0.495\textwidth}
                \centering
                \includegraphics[width=\textwidth]{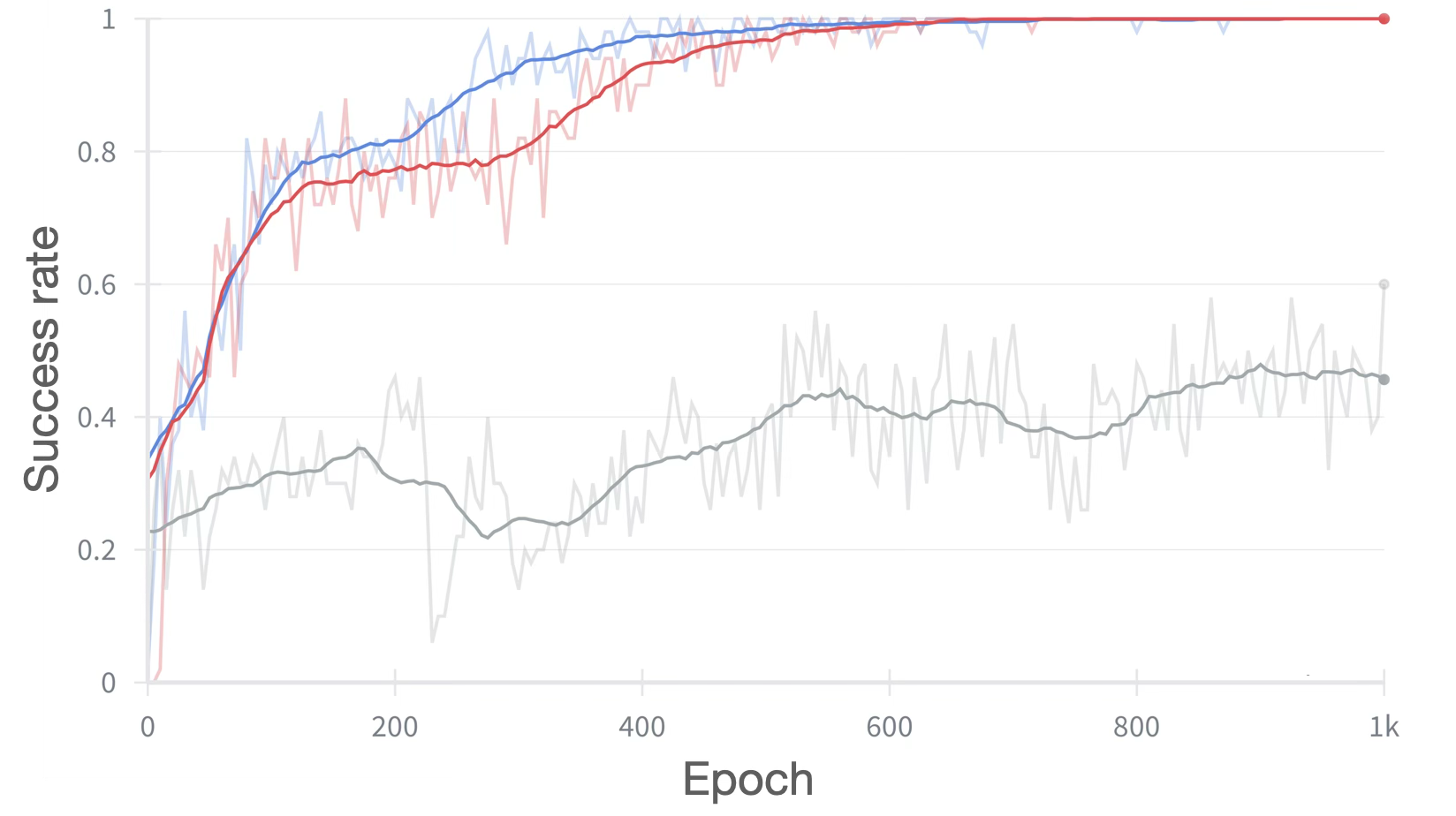}
                \caption{Success rate curves}
                \label{fig:reweighting_mse_inspect_perf} 
            \end{subfigure}
    
        \caption[]{\textbf{Diffusion policy training with different objectives. }
        \textcolor{red}{Red}: standard noise-prediction loss $\mathcal{L}_{\text{DP}}$; 
        \textcolor{gray}{gray}: unweighted action-level loss $\mathcal{L}^a_{\text{DP}}$; 
        \textcolor{blue}{blue}: reweighted action-level loss $\tilde{\mathcal{L}}^a_{\text{DP}}$.
        \textbf{Left}: training MSE, where $\mathcal{L}^a_{\text{DP}}$ exhibits large fluctuations due to timestep $k$ imbalance, while the reweighted and original objectives converge stably.
        \textbf{Right}: success rate (evaluated every 5 epochs), showing that reweighting restores near-optimal performance.}
        \label{fig:reweighting_mse_inspect}
    \end{minipage}
\end{figure}

%% file: tex/appendix/env_details.tex
\clearpage
\section{Environment and tasks details}
\label{appx:env_details}
We list the details of the task environments of Section \ref{exp} here. In Table \ref{tab:dataset}, we list the details of expert and general datasets we used for experiments. It comprises the number of trajectories, expert steps, general steps, and the collecting methods. We also provide the environment names, version numbers, the dimensions of state spaces, and action spaces of each task in table \ref{tab:environments}.

\input{tables/dataset}
\input{tables/env_table}

\paragraph{\textbf{\maze{}.}}
A 2D navigation task from \cite{fu2020d4rl} in which a point agent learns to navigate from an initial position to a goal position within the medium maze. The 2-DoF agent controls its motion by iteratively applying linear forces along $x$ and $y$ directions. The 6D states include the agent’s two-dimensional current position, velocity, and the goal position. The initial and goal positions of the agent are randomly sampled when an episode is initialized. The maximum episode length for this task is set at 250; termination occurs if the goal is reached earlier.

\paragraph{\textbf{\parking{}.}}
A goal-conditioned continuous control driving task from \cite{highway-env}, where the vehicle is asked to park in the designated target spot within a parking lot. The 2-DoF agent navigates the car by controlling the acceleration and steering angle. The 18D observation space includes the current position, the longitudinal velocity, the heading angle of the car, along with the achieved goal state, and the desired target configuration. The dynamics are kinematic bicycle constraints. The maximum episode length for this task is set at 80; termination occurs if the vehicle reaches the target position earlier.

\paragraph{\textbf{\sweep{}.}}

A manipulation task from Metaworld~\cite{yu2019meta} where the agent has to sweep an object off the table. It requires the agent to maintain continuous contact with the object over a long trajectory to the off-table goal. The agent controls the position of the end-effector and the gripper actuation via a 4D continuous action space. The 9D observation states provide the $x$, $y$, $z$ position of the robot arm, the manipulation block, and the goal target. The maximum episode length for this task is set at 30, and termination occurs if the object is swept off the table.

\paragraph{\textbf{\boxclose{}.}}
A contact-rich manipulation task from Metaworld~\cite{yu2019meta} in which the agent should grasp the cover and close the box with it. The robot arm should precisely align the position of the cover, grasp the handle, hold it towards the opened box, and put it in a closed position. The agent controls the position of the end-effector and the gripper actuation via a 4D continuous action space. The 9D observation states provide the $x$, $y$, $z$ position of the robot arm, the manipulation block, and the target. The maximum episode length for this task is set at 30, and termination occurs if the box is closed with the cover earlier.

\paragraph{\textbf{\blockpushing{}.}}
A multistage table-top manipulation task from \cite{florence2021implicit} where a 6-DoF robot arm should push two blocks into two target zones in any order. It is a multimodal task that allows a robot arm to first move one randomly selected block to one target zone, then move the second block to the other zone. The end-effector is controlled via absolute $x$ and $y$ pose on the 2D plane. The 16D observation space encapsulates the full system state, including the current end-effector pose, the position and orientation of both blocks and target zones. The maximum episode length for this task is set at 350, and termination occurs if two blocks are pushed into their corresponding target zones.

\paragraph{\textbf{\peginsertionside{}.}}
A high-precision manipulation task from ManiSkill-3~\cite{tao2025maniskill3} where a 7-DoF robot arm equipped with a gripper must grasp a peg from a table and insert it into a horizontal box hole. This task poses accurate control challenges due to the tight clearance between the peg and the hole. The agent operates within a 43D state space, which comprises joint positions and velocities of the robot arm, as well as the poses of both the peg and the target hole. The maximum episode length for this task is set at 100, and termination occurs if the end of the peg is successfully inserted within 0.015m of the center of the box.

\paragraph{\textbf{\hammer{}.}}
A high-dimensional dexterous manipulation task from Gymnasium-Robotics~\cite{gymnasium_robotics2023github} that requires a 28-DoF system, consisting of a 24-DoF ShadowHand and a 4-DoF arm, to pick up a hammer and drive a nail into a board. The environment features a randomized nail position with dry friction capable of absorbing up to 15N of force, requiring the agent to exert significant physical impact. The agent operates within a 26D continuous action space representing the scaled absolute angular positions of the joints, including the arm, wrist, and all finger segments. The 46D observation space tracks joint angles, palm and nail positions, hammer pose and velocities, nail displacement, and impact force. Success is achieved when the entire length of the nail is driven into the board, and the episode is truncated after a maximum of 200 steps.

\paragraph{\textbf{\relocate{}.}}
A complex manipulation task from Gymnasium-Robotics~\cite{gymnasium_robotics2023github} that utilizes a 30-DoF system, comprising a 24-DoF ShadowHand mounted on a 6-DoF robotic arm. The objective is to pick up a blue ball from a tabletop and move it to a randomly initialized green target in the 3D workspace. The agent utilizes a 30D action space to control arm translation, orientation, and hand joint positions. The 39D observation space tracks joint states and relative 3D vectors between the palm, ball, and target. This task is challenging due to the randomization of both the object and the target, requiring the agent to master a multi-stage behavior: approaching the object, achieving a stable grasp, and precisely transporting the object through free space to an aerial target. The maximum episode length for this task is set at 200, and termination occurs if the blue ball is successfully moved within an epsilon-ball of the target.

%% file: tables/dataset.tex
\begin{table}[b]
\centering
\caption{\textbf{Dataset.} Detailed information of expert dataset and general dataset in each task.}
\label{tab:dataset}
    \resizebox{\textwidth}{!}{
        \begin{tabular}{l c c c c}
        \toprule
        \textbf{Task} & \# of expert trajectories & \# of expert steps & \# of general steps & Algorithm/Retrieved Sources  \\
        \midrule
        \textbf{\maze{}}      
            & 200
            & \num{26037}     
            & \num{153177}
            &  D4RL controller \cite{fu2020d4rl}
            \\
        \textbf{\parking{}}       
            & 500
            & \num{10693}     
            & \num{73604}
            & SAC-trained policy expert ~\cite{haarnoja2018soft}
            \\
        \textbf{\sweep{}}     
            & 500
            & \num{5035}     
            & \num{73485}
            &  Metaworld controller \cite{yu2019meta}
            \\
        \textbf{\boxclose{}}         
            & 500
            &  \num{4214}     
            &  \num{76031}
            &  Metaworld controller \cite{yu2019meta}
            \\
        \textbf{\blockpushing{}}  
            & 500
            & \num{37994}     
            & \num{301876}
            &  default controller \cite{florence2021implicit}
            \\
        \textbf{\peginsertionside{}}     
            & 500
            & \num{18589}     
            & \num{246576}
            & PPO-trained policy~\cite{schulman2017proximal}
            \\
        \textbf{\hammer{}}     
            & 10
            & \num{439}     
            & \num{24752}
            & DAPG-trained policy~\cite{rajeswaran2018learning}
            \\
        \textbf{\relocate{}}     
            & 20
            & \num{1118}     
            & \num{30136}
            & DAPG-trained policy~\cite{rajeswaran2018learning}
            \\
        \bottomrule
        \end{tabular}
    }
\end{table}

%% file: tables/env_table.tex
\begin{table}
\centering
\caption{\textbf{Environments.} Detailed environment setting of each task.}
\label{tab:environments}
    \resizebox{\textwidth}{!}{
        \begin{tabular}{l l c c c}
        \toprule
        \textbf{Task} & Environment & ID & State space & Action space  \\
        \midrule
        \textbf{\maze{}}      
            & D4RL~\cite{fu2020d4rl}   
            & pointmaze-maze2d-medium-v0         
            & 6 
            & 2  \\
        \textbf{\parking{}}       
            & Highway-Env~\cite{highway-env}       
            & highway-parking-v0        
            & 18 
            & 2  \\
        \textbf{\sweep{}}     
            & Metaworld~\cite{yu2019meta}             
            & metaworld-sweep-v2            
            & 9
            & 4  \\
        \textbf{\boxclose{}}         
            & Metaworld~\cite{yu2019meta}                   
            & metaworld-box-close-v2           
            & 9 
            & 4  \\
        \textbf{\blockpushing{}}  
            & BlockPush~\cite{florence2021implicit}          
            & blockpushing-BlockPushMultimodal-v0         
            & 16
            & 2  \\
        \textbf{\peginsertionside{}}     
            & Maniskill~\cite{tao2025maniskill3}          
            & maniskill-PegInsertionSide-v1
            & 43  
            & 8 \\
        \textbf{\hammer{}}     
            & Gymnasium-Robotics~\cite{gymnasium_robotics2023github} 
            & adroit-hand-hammer-v1
            & 46  
            & 26 \\
        \textbf{\relocate{}}     
            & Gymnasium-Robotics~\cite{gymnasium_robotics2023github} 
            & adroit-hand-relocate-v1
            & 39  
            & 30 \\
        \bottomrule
        \end{tabular}
    }
\end{table}

%% file: tex/appendix/training_details.tex
\section{Training details}
\label{appx:training_details}

Our implementation is built on top of the Diffusion Policy codebase~\cite{chi2023diffusionpolicy}. 
In this section, we describe the model architectures used for all baselines and our proposed methods, along with the corresponding training hyperparameters and computational resources.

\subsection{Model architectures}
\label{appx:model_architecture}
Table~\ref{tab:model_architecture_part1} and \ref{tab:model_architecture_part2} summarize the detailed architectural configurations for all baselines.

\input{tables/model_architecture}

\clearpage
\subsection{Hyperparameters}
\label{appx:hyperparameters}

\mytable{tab:hyperparameters} lists the training hyperparameters used for all baselines and our proposed methods.
Unless otherwise specified, we set the gradient accumulation step to 1.
For \method{} and \methoddp{}, we explicitly increase the gradient accumulation steps to accommodate the additional memory cost introduced by LLM-based computations, allowing us to use smaller per-step batch sizes within our computational constraints.

\input{tables/hyperparameters}

\subsection{Computational resources}
\label{appx:comp_resource}

\mytable{tab:comp_resource} reports the computational resources we have used for our experiments, including the workstation configurations.

\input{tables/comp_resource}

%% file: tables/model_architecture.tex
\begin{table}[t]
\centering
\caption{\textbf{Model architectures (Part I):} \maze, \parking, \sweep, and \boxclose.}
\label{tab:model_architecture_part1}
\resizebox{0.89\textwidth}{!}{
\begin{tabular}{clcccc}
\toprule
\textbf{Method} & \textbf{Models}  & \textsc{Maze} & \textsc{Parking}  & \textsc{Sweep} & \textsc{Box-close} \\
\midrule
    \multirow{3}{*}{BC} 
         & Network Type              
             & MLP & MLP & MLP & MLP \\
         & Hidden Dimension      
             & [256, 256, 256, 256] & [256, 256] & [256, 256] & [256, 256] \\
         & Activation Function           
             & tanh & ReLU & ReLU & ReLU \\
\midrule
    \multirow{4}{*}{DWBC}
        & Network Type    & MLP & MLP & MLP & MLP \\
        & Hidden Dimension     & [256, 256, 256, 256] & [256, 256] & [256, 256] & [256, 256] \\
        & Activation Function & tanh & ReLU & ReLU & ReLU \\
        & Discriminator Hidden Dimension & 256 & 256 & 256 & 256 \\
\midrule

    \multirow{9}{*}{DemoDICE}
        & Network Type  
            & MLP 
            & MLP
            & MLP
            & MLP \\
        & Hidden Dimension  
            & [256, 256, 256, 256]
            & [256, 256]
            & [256, 256]
            & [256, 256]
            \\
        & Activation Function     
            & tanh
            & ReLU
            & ReLU
            & ReLU
        \\
    \cmidrule{2-6}
        & Discriminator Network Type
            & MLP
            & MLP
            & MLP
            & MLP
        \\
        & Discriminator Hidden Dimension 
            & [256, 256]
            & [256, 256]
            & [256, 256]
            & [256, 256]
        \\
        & Discriminator Activation Function 
            & ReLU 
            & ReLU 
            & ReLU 
            & ReLU 
        \\
    \cmidrule{2-6}
        & Nu Network Type
            & MLP
            & MLP
            & MLP
            & MLP
        \\
        & Nu Hidden Dimension 
            & [256, 256]
            & [256, 256]
            & [256, 256]
            & [256, 256]
        \\
        & Nu Activation Function 
            & ReLU 
            & ReLU 
            & ReLU 
            & ReLU 
        \\
\midrule

    \multirow{6}{*}{ILID}
        & Network Type  
            & MLP 
            & MLP
            & MLP
            & MLP \\
        & Hidden Dimension  
            & [256, 256, 256, 256]
            & [256, 256]
            & [256, 256]
            & [256, 256]
            \\
        & Activation Function     
            & tanh
            & ReLU
            & ReLU
            & ReLU
        \\
    \cmidrule{2-6}
        & Discriminator Network Type
            & MLP
            & MLP
            & MLP
            & MLP
        \\
        & Discriminator Hidden Dimension 
            & [256]
            & [256]
            & [256]
            & [256]
        \\
        & Discriminator Activation Function 
            & ReLU 
            & ReLU 
            & ReLU 
            & ReLU 
        \\
\midrule

    \multirow{6}{*}{\rbc{}} 
        & $R_\phi$ Model Type            
            & MLP 
            & MLP 
            & MLP 
            & MLP 
            \\
        & $R_\phi$ Hidden Dimension            
            & [256, 256, 256] 
            & [256, 256, 256] 
            & [256, 256, 256] 
            & [256, 256, 256] 
            \\
        &$R_\phi$ Activation Function            
            & ReLU 
            & ReLU 
            & ReLU 
            & ReLU 
            \\
\cmidrule{2-6}
        & Network Type            
             & MLP 
             & MLP 
             & MLP 
             & MLP 
            \\
         & Hidden Dimension      
             & [256, 256, 256, 256] 
             & [256, 256] 
             & [256, 256] 
             & [256, 256] 
             \\
         & Activation Function           
             & tanh 
             & ReLU 
             & ReLU 
             & ReLU \\
\midrule

    \multirow{6}{*}{\methodcls{}} 
        & $C_\phi$ Model Type            
            & MLP 
            & MLP 
            & MLP 
            & MLP 
            \\
        & $C_\phi$ Hidden Dimension            
            & [128, 128]
            & [256, 256]
            & [256, 256]
            & [256, 256]
            \\
        &$C_\phi$ Activation Function            
            & ReLU 
            & ReLU 
            & ReLU 
            & ReLU 
            \\
\cmidrule{2-6}
        & Network Type            
             & MLP 
             & MLP 
             & MLP 
             & MLP 
            \\
         & Hidden Dimension      
             & [256, 256, 256, 256] 
             & [256, 256]
             & [256, 256] 
             & [256, 256] 
             \\
         & Activation Function           
             & tanh 
             & ReLU
             & ReLU 
             & ReLU \\
\midrule

    \multirow{3}{*}{\begin{tabular}{l}
        \method{} \\(Ours) 
    \end{tabular}} 
        & Network Type            
             & MLP 
             & MLP 
             & MLP 
             & MLP\\
         & Hidden Dimension      
             & [256, 256, 256, 256] 
             & [256, 256] 
             & [256, 256] 
             & [256, 256]\\
         & Activation Function           
             & tanh 
             & ReLU 
             & ReLU 
             & ReLU \\

\bottomrule

    \multirow{2}{*}{DP} 
         & Network Type   
             & Unet 
             & Unet 
             & Unet 
             & Unet \\
         & Hidden Dimension      
             & [256, 512, 1024] 
             & [256, 512, 1024] 
             & [256, 512, 1024] 
             & [256, 512, 1024] \\
\midrule

    \multirow{3}{*}{LPB-Offline} 
        & Dynamics Network Type            
             & MLP 
             & MLP 
             & MLP 
             & MLP \\
         & Dynamics Hidden Dimension      
             & [256, 256] 
             & [256, 256] 
             & [256, 256] 
             & [256, 256] \\
         & Dynamics Activation Function           
             & ReLU
             & ReLU
             & ReLU
             & ReLU \\
\midrule

    \multirow{5}{*}{EDP} 
 
         & Critic Network Type
             & MLP 
             & MLP 
             & MLP 
             & MLP
             \\
        & Hidden Dimension      
             & [256, 256] 
             & [256, 256] 
             & [256, 256]
             & [256, 256] \\
         & Critic Activation Function           
             & ReLU 
             & ReLU
             & ReLU 
             & ReLU \\
 
\cmidrule{2-6}
         & Network Type   
             & Unet 
             & Unet
             & Unet 
             & Unet \\
         & Hidden Dimension      
             & [256, 512, 1024] 
             & [256, 512, 1024] 
             & [256, 512, 1024] 
             & [256, 512, 1024] \\
\midrule

    \multirow{5}{*}{\rdp{}} 
        & $R_\phi$ Model Type            
            & MLP 
            & MLP 
            & MLP
            & MLP  \\
        & $R_\phi$ Hidden Dimension            
            & [64] 
            & [64] 
            & [128, 128] 
            & [128, 128] \\
        &$R_\phi$ Activation Function            
            & ReLU 
            & ReLU 
            & ReLU 
            & ReLU\\
\cmidrule{2-6}
         & Network Type   
             & Unet 
             & Unet
             & Unet 
             & Unet \\
         & Hidden Dimension      
             & [256, 512, 1024] 
             & [256, 512, 1024] 
             & [256, 512, 1024] 
             & [256, 512, 1024] \\
\midrule

    \multirow{5}{*}{\methoddpcls{}} 
        & $C_\phi$ Model Type            
            & MLP 
            & MLP 
            & MLP
            & MLP  \\
        & $C_\phi$ Hidden Dimension            
            & [256, 256] 
            & [256, 256]
            & [256, 256] 
            & [256, 256] \\
        &$C_\phi$ Activation Function            
            & ReLU 
            & ReLU 
            & ReLU 
            & ReLU\\
\cmidrule{2-6}
         & Network Type   
             & Unet 
             & Unet
             & Unet 
             & Unet \\
         & Hidden Dimension      
             & [256, 512, 1024] 
             & [256, 512, 1024] 
             & [256, 512, 1024] 
             & [256, 512, 1024] \\
\midrule
    
    \multirow{2}{*}{\begin{tabular}{l}
        \methoddp{} \\(Ours) 
    \end{tabular}} 
        & Network Type   
             & Unet 
             & Unet 
             & Unet 
             & Unet \\
         & Hidden Dimension      
             & [256, 512, 1024]
             & [256, 512, 1024] 
             & [256, 512, 1024] 
             & [256, 512, 1024] \\
\bottomrule

    \multirow{6}{*}{CQL} 
        & Critic Network Type
             & MLP 
             & MLP 
             & MLP 
             & MLP \\
        & Critic Hidden Dimension
             & [256, 256] 
             & [256, 256] 
             & [256, 256]
             & [256, 256] \\
        & Critic Activation Function
             & ReLU
             & ReLU 
             & ReLU 
             & ReLU \\
    \cmidrule{2-6}
        & Network Type            
             & MLP 
             & MLP 
             & MLP 
             & MLP \\
         & Hidden Dimension      
             & [256, 256, 256, 256] 
             & [256, 256] 
             & [256, 256] 
             & [256, 256] \\
         & Activation Function           
             & tanh 
             & ReLU 
             & ReLU 
             & ReLU \\
\midrule

    \multirow{6}{*}{TD3+BC} 
        & Critic Network Type
             & MLP 
             & MLP 
             & MLP 
             & MLP \\
        & Critic Hidden Dimension
             & [256, 256] 
             & [256, 256] 
             & [256, 256] 
             & [256, 256]  \\
        & Critic Activation Function
             & tanh
             & ReLU 
             & ReLU 
             & ReLU \\
    \cmidrule{2-6}
        & Network Type            
             & MLP 
             & MLP 
             & MLP 
             & MLP\\
         & Hidden Dimension      
             & [256, 256, 256, 256] 
             & [256, 256] 
             & [256, 256] 
             & [256, 256]\\
         & Activation Function           
             & tanh 
             & ReLU 
             & ReLU 
             & ReLU  \\
\midrule

    \multirow{4}{*}{DT} 
     & Transformer   
         & GPT2 
         & GPT2 
         & GPT2 
         & GPT2 \\
     & Hidden Size     
         & 512 
         & 512 
         & 512 
         & 512\\
     & Number of Layers
         & 6 
         & 6 
         & 6 
         & 6 \\
     & Attention Heads    
         & 8 
         & 8 
         & 8 
         & 8\\

\bottomrule
\end{tabular}
}
\end{table}

\begin{table}[t]
\centering
\caption{\textbf{Model architectures (Part II):} \blockpushing, \peginsertionside, \hammer, \relocate }
\label{tab:model_architecture_part2}
\resizebox{0.89\textwidth}{!}{
\begin{tabular}{clcccc}
\toprule
\textbf{Method} & \textbf{Models}  & \textsc{BlockPush} & \textsc{PegInsert} & \textsc{Hammer} & \textsc{Relocate}\\
\midrule
    \multirow{3}{*}{BC} 
         & Network Type            
             & Unet 
             & MLP 
             & MLP 
             & MLP \\
         & Hidden Dimension      
             & [256, 512, 1024] 
             & [256, 256] 
             & [256, 256] 
             & [256, 256] \\
         & Activation Function           
             & - 
             & tanh 
             & ReLU 
             & ReLU \\
\midrule
    \multirow{4}{*}{DWBC}
        & Network Type   
            & Unet
            & MLP
            & MLP
            & MLP
        \\
        & Hidden Dimension 
            & [256, 512, 1024]
            & [256, 256]
            & [256, 256]
            & [256, 256]
        \\
        & Activation Function 
            & -
            & tanh
            & ReLU
            & ReLU
        \\
        & Discriminator Hidden Dimension
            & 256
            & 256
            & 256
            & 256
        \\

\midrule

    \multirow{9}{*}{DemoDICE}
        & Network Type  
            & Unet
            & MLP
            & MLP
            & MLP
        \\
        & Hidden Dimension  
            & [256, 512, 1024]
            & [256, 256]
            & [256, 256]
            & [256, 256]
        \\
        & Activation Function  
            & -
            & tanh
            & ReLU
            & ReLU
        \\
    \cmidrule{2-6}
        & Discriminator Network Type
            & MLP
            & MLP
            & MLP
            & MLP
        \\
        & Discriminator Hidden Dimension 
            & [128, 128]
            & [256, 256]
            & [256, 256]
            & [256, 256]
        \\
        & Discriminator Activation Function 
            & ReLU 
            & ReLU 
            & ReLU 
            & ReLU 
        \\
    \cmidrule{2-6}
        & Nu Network Type
            & MLP
            & MLP
            & MLP
            & MLP
        \\
        & Nu Hidden Dimension 
            & [128, 128]
            & [256, 256]
            & [256, 256]
            & [256, 256]
        \\
        & Nu Activation Function 
            & ReLU 
            & ReLU 
            & ReLU 
            & ReLU 
        \\
\midrule

    \multirow{6}{*}{ILID}
        & Network Type  
            & Unet 
            & MLP
            & MLP
            & MLP \\
        & Hidden Dimension  
            & [256, 512, 1024]
            & [256, 256]
            & [256, 256]
            & [256, 256]
            \\
        & Activation Function     
            & -
            & tanh
            & ReLU
            & ReLU
        \\
    \cmidrule{2-6}
        & Discriminator Network Type
            & MLP
            & MLP
            & MLP
            & MLP
        \\
        & Discriminator Hidden Dimension 
            & [256]
            & [256]
            & [256]
            & [256]
        \\
        & Discriminator Activation Function 
            & ReLU 
            & ReLU 
            & ReLU 
            & ReLU 
        \\
\midrule

    \multirow{6}{*}{\rbc{}} 
        & $R_\phi$ Model Type    
            & MLP 
            & MLP  
            & MLP 
            & MLP  
            \\
        & $R_\phi$ Hidden Dimension 
            & [256, 256] 
            & [256, 256]  
            & [128]
            & [128]
            \\
        &$R_\phi$ Activation Function   
            & ReLU 
            & ReLU  
            & ReLU 
            & ReLU  
            \\
\cmidrule{2-6}
        & Network Type   
             & Unet 
             & MLP 
             & MLP 
             & MLP \\
         & Hidden Dimension   
             & [256, 512, 1024] 
             & [256, 256]
             & [256, 256] 
             & [256, 256] 
             \\
         & Activation Function  
             & - 
             & tanh 
             & ReLU 
             & ReLU \\
\midrule

    \multirow{6}{*}{\methodcls{}} 
        & $C_\phi$ Model Type            
            & MLP 
            & MLP 
            & MLP 
            & MLP 
            \\
        & $C_\phi$ Hidden Dimension            
            & [128]
            & [256, 256]
            & [256]
            & [256]
            \\
        &$C_\phi$ Activation Function            
            & ReLU 
            & ReLU 
            & ReLU 
            & ReLU 
            \\
\cmidrule{2-6}
        & Network Type            
             & Unet 
             & MLP 
             & MLP 
             & MLP 
            \\
         & Hidden Dimension      
             & [256, 512, 1024] 
             & [256, 256]
             & [256, 256]
             & [256, 256]
             \\
         & Activation Function           
             & - 
             & tanh
             & ReLU
             & ReLU \\
\midrule

    \multirow{3}{*}{\begin{tabular}{l}
        \method{} \\(Ours) 
    \end{tabular}} 
        & Network Type 
             & Unet 
             & MLP 
             & MLP 
             & MLP \\
         & Hidden Dimension    
             & [256, 512, 1024] 
             & [256, 256] 
             & [256, 256] 
             & [256, 256] \\
         & Activation Function  
             & - 
             & tanh 
             & ReLU 
             & ReLU \\

\bottomrule

    \multirow{2}{*}{DP} 
         & Network Type 
             & Unet 
             & Unet 
             & Unet 
             & Unet \\
         & Hidden Dimension      
             & [256, 512, 1024] 
             & [256, 512, 1024] 
             & [256, 512, 1024] 
             & [256, 512, 1024] \\
\midrule

    \multirow{3}{*}{LPB-Offline} 
        & Dynamics Network Type  
             & MLP
             & MLP 
             & MLP
             & MLP \\
         & Dynamics Hidden Dimension  
             & [256, 256] 
             & [256, 256] 
             & [256, 256] 
             & [256, 256] \\
         & Dynamics Activation Function  
             & ReLU
             & ReLU 
             & ReLU
             & ReLU \\
\midrule

    \multirow{5}{*}{EDP} 
 
         & Critic Network Type
             & Unet 
             & MLP 
             & MLP 
             & MLP 
             \\
        & Hidden Dimension      
             & [256, 256] 
             & [256, 256] 
             & [256, 256] 
             & [256, 256] \\
         & Critic Activation Function
             & ReLU 
             & ReLU 
             & ReLU 
             & ReLU \\
 
\cmidrule{2-6}
         & Network Type   
             & Unet 
             & Unet 
             & Unet 
             & Unet \\
         & Hidden Dimension      
             & [256, 512, 1024] 
             & [256, 512, 1024] 
             & [256, 512, 1024] 
             & [256, 512, 1024] \\
\midrule

    \multirow{5}{*}{\rdp{}} 
        & $R_\phi$ Model Type    
            & MLP 
            & MLP  
            & MLP 
            & MLP  \\
        & $R_\phi$ Hidden Dimension  
            & [64] 
            & [128]
            & [128] 
            & [64] \\
        &$R_\phi$ Activation Function  
            & ReLU 
            & ReLU 
            & ReLU 
            & ReLU \\
\cmidrule{2-6}
         & Network Type   
             & Unet 
             & Unet 
             & Unet 
             & Unet \\
         & Hidden Dimension      
             & [256, 512, 1024] 
             & [256, 512, 1024] 
             & [256, 512, 1024] 
             & [256, 512, 1024] \\
\midrule

    \multirow{5}{*}{\methoddpcls{}} 
        & $C_\phi$ Model Type    
            & MLP 
            & MLP  
            & MLP 
            & MLP  \\
        & $C_\phi$ Hidden Dimension  
            & [128] 
            & [256, 256]
            & [128] 
            & [128] \\
        &$C_\phi$ Activation Function  
            & ReLU 
            & ReLU 
            & ReLU 
            & ReLU \\
\cmidrule{2-6}
         & Network Type   
             & Unet 
             & Unet 
             & Unet 
             & Unet \\
         & Hidden Dimension      
             & [256, 512, 1024] 
             & [256, 512, 1024] 
             & [256, 512, 1024] 
             & [256, 512, 1024] \\
\midrule
    
    \multirow{2}{*}{\begin{tabular}{l}
        \methoddp{} \\(Ours) 
    \end{tabular}} 
        & Network Type   
             & Unet 
             & Unet 
             & Unet 
             & Unet \\
         & Hidden Dimension      
             & [256, 512, 1024] 
             & [256, 512, 1024] 
             & [256, 512, 1024] 
             & [256, 512, 1024] \\
\bottomrule

    \multirow{6}{*}{CQL} 
        & Critic Network Type
             & MLP 
             & MLP 
             & MLP 
             & MLP \\
        & Critic Hidden Dimension
             & [256, 256]
             & [256, 256] 
             & [256, 256]
             & [256, 256] \\
        & Critic Activation Function
             & ReLU 
             & ReLU 
             & ReLU 
             & ReLU \\
    \cmidrule{2-6}
        & Network Type   
             & Unet 
             & MLP 
             & MLP 
             & MLP \\
         & Hidden Dimension      
             & [256, 512, 1024] 
             & [256, 256] 
             & [256, 256] 
             & [256, 256] \\
         & Activation Function 
             & - 
             & tanh 
             & ReLU 
             & ReLU \\
\midrule

    \multirow{6}{*}{TD3+BC} 
        & Critic Network Type
             & MLP 
             & MLP 
             & MLP 
             & MLP \\
        & Critic Hidden Dimension
             & [256, 256]
             & [256, 256] 
             & [256, 256]
             & [256, 256] \\
        & Critic Activation Function
             & ReLU 
             & ReLU 
             & ReLU 
             & ReLU \\
    \cmidrule{2-6}
        & Network Type        
             & Unet 
             & MLP 
             & MLP 
             & MLP \\
         & Hidden Dimension    
             & [256, 512, 1024] 
             & [256, 256] 
             & [256, 256] 
             & [256, 256]\\
         & Activation Function  
             & - 
             & ReLU 
             & ReLU 
             & ReLU \\
\midrule

    \multirow{4}{*}{DT} 
     & Transformer   
         & GPT2 
         & GPT2 
         & GPT2 
         & GPT2 \\
     & Hidden Size   
         & 1024 
         & 512  
         & 128
         & 128\\
     & Number of Layers
         & 6 
         & 6  
         & 2 
         & 4\\
     & Attention Heads  
         & 8 
         & 8 
         & 4
         & 4\\

\bottomrule
\end{tabular}
}
\end{table}

%% file: tables/hyperparameters.tex
{
\tiny
\setlength{\tabcolsep}{2pt}

\begin{longtable}{clcccccccc}
\caption{\textbf{Hyperparameters.} The overview of the hyperparameters used for all the methods in every task. We abbreviate "Gradient Accumulation" as "Grad Acc." and "Finetune" as "FT" in this table.} 
\vspace{0.1in}
\label{tab:hyperparameters}\\

\toprule
\textbf{Method} & \textbf{Hyperparameter} & \textsc{Maze} & \textsc{Parking} 
 & \textsc{Sweep} & \textsc{Box-Close} & \textsc{BlockPush} & \textsc{PegInsert} & \textsc{Hammer} & \textsc{Relocate} \\
\midrule
\endfirsthead

\caption[]{Hyperparameters (Continued)}\\
\toprule
\textbf{Method} & \textbf{Hyperparameter} & \textsc{Maze} & \textsc{Parking} 
 & \textsc{Sweep} & \textsc{Box-close} & \textsc{BlockPush} & \textsc{PegInsert} & \textsc{Hammer} & \textsc{Relocate} \\
\midrule
\endhead

\bottomrule
\endlastfoot

\multirow{3}{*}{BC} 
 & Policy Batch Size            
 & 128 & 128  & 128 & 128 & 256 & 128 &128 &128\\
 & Policy Learning Rate         
 & 1e-4 & 1e-4  & 1e-2 & 1e-2 & 2e-4 & 1e-3 & 1e-3 & 1e-3 \\
 & Policy Epochs                
 & 1001 & 1001 & 1001 & 1001 & 1001 & 1001 & 1001 & 1001 \\
\midrule

\multirow{6}{*}{DWBC} 
 & Policy Batch Size  
 &128 &128 &128 &128 &256 &128 &128 &128 \\
 & Policy Learning Rate         
 & 1e-4 & 3e-4 & 3e-4 & 3e-4 & 2e-4 & 1e-4 & 3e-4  &3e-5\\
 & $\alpha$
 & 7.5 & 10 & 7.5 & 7.5 & 15 & 12 & 7.5 &10 \\
 & Discriminator Hidden Dim. logpi       
 &128 &128 &128 &128 &128 &128 &128 &128 \\
 & Discriminator Hidden Dim. sa               
 &128 &128 &128 &128 &128 &128 &128 &128 \\
 & Policy Epochs                
 & 1001 & 1001 & 1001 & 1001 & 401 & 401 & 1001 & 1001 \\
\midrule

\multirow{4}{*}{DemoDICE} 
 & Policy Batch Size            
 & 128 &128 &128 &128 &256 &128 &128 &128 \\
 & Policy Learning Rate         
 & 1e-4 & 3e-4 & 1e-4 & 5e-4 & 1e-4 &1e-3  & 1e-4 & 5e-5 \\
 & $\alpha$
 & 0.1 & 0.1 & 0.1 & 0.05 & 0.01 & 0.1  & 0.1 & 0.1 \\
 & Policy Epochs                
 & 1001 & 1001 & 1001 & 1001 & 1001 & 401 & 1001 & 1001 \\
\midrule

\multirow{6}{*}{ILID} 
 & Policy Batch Size  
 & 128 & 128 & 128 & 128 & 256 & 128 & 128 & 128 \\
 & Policy Learning Rate         
 & 1e-4 & 1e-4 & 1e-2 & 1e-2 & 2e-4 & 1e-3 & 1e-3 & 1e-3 \\
 & Discriminator Learning Rate         
 & 1e-4 & 1e-4 & 1e-4 & 1e-4 & 1e-4 & 1e-4 & 1e-4 & 1e-4 \\
 & Rollback steps       
 & 5 & 3 & 3 & 5 & 5 & 3 & 5 & 3 \\
 & Rollback Decay Rate              
 & 0.9 & 0.9 & 0.9 & 0.9 & 0.9 & 0.9 & 0.9 & 0.9 \\
 & Select Threshold    
 & 0.89 & 0.89 & 0.89 & 0.89 & 0.89 & 0.89 & 0.89 & 0.89 \\
\midrule

\multirow{7}{*}{\rbc{}} 
 & Reward Model Batch Size            
 & 256 & 256  & 256 & 256 & 512 & 512 & 256 & 256 \\
 & Reward Model Learning Rate         
 & 1e-4 & 1e-4  & 1e-4 & 1e-4 & 1e-4 & 1e-4 & 1e-4 & 1e-4 \\
 & Reward Model Epochs                
 & 2001 & 2000  & 1001 & 1001 & 501  & 501 & 200 & 500 \\
\cmidrule{2-10}
 & Policy Batch Size            
 & 128 & 128  & 128 & 128 & 256 & 128 &128 &128 \\
 & Policy Reward Loss Weight  
 & 1e-2 & 1e-2 & 1e-3 & 1e-3 & 1e-4 & 1e-2 & 1e-2 & 1e-2  \\
 & Policy Learning Rate         
 & 1e-4 & 1e-4  & 1e-2 & 1e-2 & 2e-4 & 1e-3 & 1e-3 & 1e-3 \\
 & Policy Epochs                
 & 1001 & 1001 & 1001 & 1001 & 1001 & 1001 & 1001 & 1001\\
\midrule

\multirow{9}{*}{\methodcls{}}  
 & Classifier Model Batch Size            
 & 256 & 256 & 256 & 256 & 512 & 256 & 256 & 256 \\
 & Classifier Model Learning Rate         
 & 1e-4 & 1e-4 & 1e-4 & 1e-4 & 5e-5 & 1e-4 & 1e-4 & 1e-4 \\
 & Classifier Model Epochs                
 & 1000 & 1000 & 1000 & 1000 & 1000 & 1000 & 1000 & 1000 \\
 & Classifier Heads                
 & 6 & 7 & 6 & 6 & 4 & 10 & 7 & 6 \\
 & Classifier Classes                
 & 8 & 3 & 3 & 3 & 4 & 11 & 4 & 3 \\
\cmidrule{2-10}
 & Policy Batch Size            
 & 128 & 128 & 128 & 128 & 256 & 128 & 128 & 128 \\
 & Policy Class Loss Weight  
 & 1e-4 & 1e-4 & 1e-4 & 1e-4 & 2e-4 & 1e-4 & 1e-1 & 1e-4 \\
 & Policy Learning Rate         
 & 1e-4 & 1e-2 & 1e-2 & 1e-2 & 1e-3 & 1e-3 & 1e-2 & 1e-3 \\
 & Policy Epochs                
 & 1001 & 1001 & 1001 & 1001 & 1001 & 1001 & 1001 & 1001 \\
\midrule

\multirow{14}{*}{
\begin{tabular}{c}
\method{} \\
(Ours)
\end{tabular}}
 & MLP Projector Batch Size 
 & 128 & 128 & 128 & 128 & 128 & 64 & 128 & 128 \\
 & MLP Projector Grad Acc.
 & 4 & 2 & 2 & 2 & 4 & 8 & 2 & 2 \\
 & MLP Projector Learning Rate 
 & 1e-4 & 1e-4 & 1e-4 & 1e-4 & 1e-4 & 1e-4 & 1e-4 & 1e-4 \\
 & MLP Projector Epochs 
 & 100 & 100 & 100 & 100 & 100 & 100 & 100 & 100 \\
 \cmidrule{2-10}
 & $\mu_\phi$ FT Batch Size
 & 128 & 128  & 128 & 128 & 128 & 64 & 128 & 128 \\
 & $\mu_\phi$ FT Grad Acc. 
 & 1 & 1 & 1 & 1 & 1 & 2 & 1 & 1 \\
 & \llm{} FT Learning Rate 
 & 1e-5 & 5e-5 & 5e-5 & 1e-4 & 1e-5 & 1e-5 & 1e-4 & 1e-4 \\
 & MLP Projector FT Learning Rate 
 & 1e-5 & 5e-5 & 5e-5 & 1e-4 & 1e-5 & 1e-5 & 5e-5 & 5e-5 \\
 & $\mu_\phi$ FT Epochs 
 & 10 & 10 & 10 & 10 & 10 & 10 & 10 & 10 \\
 \cmidrule{2-10}
 & Policy Batch Size 
 & 16 & 16  & 16 & 16 & 16 & 16 & 32 & 16\\
 & Policy Grad Acc. 
 & 8 & 8 & 8 & 8 & 16 & 8 & 4 & 8 \\
 & \lcloss{} Weight
 & 1e-2 & 1e-2 & 1e-2 & 1e-2 & 1e-4 & 1e-2 & 1e-2 & 1e-2\\
 & Policy Learning Rate 
 & 1e-4 & 1e-4 & 1e-2 & 1e-2 & 2e-4 & 1e-3 & 1e-3 & 1e-3\\
 & Policy Epochs 
 & 1001 & 1001 & 1001 & 1001 & 1001 & 1001 & 1001 & 1001\\

\bottomrule

\multirow{3}{*}{DP} 
 & Policy Batch Size            
 & 256 & 256 & 256 & 256 & 256 & 256 & 256 & 256\\
 & Policy Learning Rate         
 & 1e-4 & 1e-4 & 1e-4 & 1e-4 & 1e-4 & 1e-4 & 1e-4 & 1e-4\\
 & Policy Epochs                
 & 1001 & 1001 & 1001 & 1001 & 1001 & 1001 & 1001 & 1001\\
\midrule

\multirow{3}{*}{LPB} 
 & Dynamic model Batch Size            
 & 256 & 256 & 256 & 256 & 256 & 256 & 256 & 256 \\
 & Dynamic model Learning Rate         
 & 1e-4 & 1e-4 & 1e-4 & 1e-4 & 1e-4 & 1e-5 & 1e-4 & 1e-4 \\
 & Dynamic model Epochs                
 & 1001 & 1001 & 1001 & 1001 & 1001 & 1001 & 1001 & 1001 \\
\midrule

\multirow{8}{*}{EDP} 
 & Pretrain Policy Batch Size            
 & 256 & 256 & 256 & 256 & 256 & 256 & 256 & 256\\
 & Pretrain Policy Alpha
 & 1e-2 & 1e-2 & 1e-3 & 1e-3 & 1e-3 & 1e-3 & 1e-3 & 1e-3\\
 & Pretrain Policy Learning Rate         
 & 1e-4 & 1e-4 & 1e-4 & 1e-4 & 1e-5 & 1e-4 & 1e-4 & 1e-4 \\
 & Pretrain Policy Epochs                
 & 1001 & 1001 & 1001 & 1001 & 501 & 1001 & 1001 & 1001 \\

 \cmidrule{2-10}
 & Policy Batch Size            
 & 256 & 256 & 256 & 256 & 256 & 256 & 256 & 256\\
 & Policy Alpha
 & 1e-2 & 1e-2 & 1e-3 & 1e-3 & 1e-3 & 1e-3 & 1e-3 & 1e-3\\
 & Policy Learning Rate         
 & 1e-4 & 1e-4 & 1e-4 & 1e-4 & 1e-4 & 1e-4 & 1e-4 & 1e-4 \\
 & Policy Epochs                
 & 1001 & 1001 & 1001 & 1001 & 1001 & 1001 & 1001 & 1001 \\
\midrule

\multirow{7}{*}{\rdp{}} 
 & Reward Model Batch Size            
 & 256 & 256 & 256 & 256 & 256 & 512 & 256 & 256 \\
 & Reward Model Learning Rate         
 & 1e-6 & 1e-5 & 1e-5 & 1e-5 & 1e-6 & 1e-6 & 1e-4 & 1e-4 \\
 & Reward Model Epochs                
 & 1000 & 1000 & 500 & 500 & 100 & 500 & 200 & 500\\
\cmidrule{2-10}
 & Policy Batch Size            
 & 256 & 256 & 256 & 256 & 256 & 256 & 256 & 256\\
 & Policy Reward Loss Weight    
 & 1e-3 & 1e-3 & 1e-2 & 1e-3 & 1e-4 & 1e-3 & 1e-2 & 1e2 \\
 & Policy Learning Rate         
 & 1e-4 & 1e-4 & 1e-4 & 1e-4 & 1e-4 & 1e-4 & 1e-4 & 1e-4 \\
 & Policy Epochs                
 & 1001 & 1001 & 1001 & 1001 & 1001 & 1001 & 1001 & 1001 \\
\midrule

\multirow{9}{*}{\methoddpcls{}}  
 & Classifier Model Batch Size            
 & 256 & 256 & 256 & 256 & 512 & 256 & 256 & 256 \\
 & Classifier Model Learning Rate         
 & 1e-4 & 1e-4 & 1e-4 & 1e-4 & 1e-5 & 1e-4 & 1e-4 & 1e-4 \\
 & Classifier Model Epochs                
 & 1000 & 1000 & 1000 & 1001 & 1000 & 1000 & 1000 & 1000 \\
 & Classifier Heads                
 & 6 & 7 & 6 & 6 & 4 & 10 & 7 & 6 \\
 & Classifier Classes                
 & 8 & 3 & 3 & 3 & 4 & 11 & 4 & 3 \\
\cmidrule{2-10}
 & Policy Batch Size            
 & 256 & 256 & 256 & 256 & 256 & 256 & 256 & 256 \\
 & Policy Class Loss Weight  
 & 1e-4 & 1e-3 & 1e-4 & 1e-4 & 1e-3 & 1e-4 & 1e-4 & 1e-3 \\
 & Policy Learning Rate         
 & 1e-4 & 1e-4 & 1e-4 & 1e-4 & 1e-4 & 1e-4 & 1e-4 & 1e-4 \\
 & Policy Epochs                
 & 1001 & 1001 & 1001 & 1001 & 1001 & 1001 & 1001 & 1001 \\
\midrule

\multirow{14}{*}{
\begin{tabular}{l}
\methoddp{} \\
(Ours)
\end{tabular}}
 & MLP Projector Batch Size 
 & 128 & 128 & 128 & 128 & 128 & 64 & 128 & 128 \\
 & MLP Projector Grad Acc.
 & 4 & 2 & 2 & 2 & 4 & 8 & 2 & 2 \\
 & MLP Projector Learning Rate 
 & 1e-4 & 1e-4 & 1e-4 & 1e-4 & 1e-4 & 1e-4 & 1e-4 & 1e-4 \\
 & MLP Projector Epochs 
 & 100 & 100 & 100 & 100 & 100 & 100 & 100 & 100 \\
 \cmidrule{2-10}
 & $\mu_\phi$ FT Batch Size 
 & 128 & 128& 128 & 128 & 128 & 64 & 128 & 128 \\
 & $\mu_\phi$ FT Grad Acc. 
 & 1 & 1 & 1 & 1 & 1 & 2 & 1 & 1 \\
 & \llm{} FT Learning Rate 
 & 1e-5 & 5e-5 & 5e-5 & 5e-5 & 1e-5 & 1e-5 & 1e-4 & 1e-4 \\
 & MLP Projector FT Learning Rate 
 & 1e-5 & 5e-5 & 5e-5 & 5e-5 & 1e-5 & 1e-5 & 5e-5 & 5e-5 \\
 & $\mu_\phi$ FT Epochs 
 & 10 & 10 & 10 & 10 & 10 & 10 & 10 & 10 \\
 \cmidrule{2-10}
 
 & Policy Batch Size            
 & 16 & 16 & 16 & 16 & 16 & 16 & 16 & 16 \\
 & Policy Grad Acc. 
 & 16 & 16  & 16 & 16 & 16 & 16 & 16 & 16 \\
 & \lcloss{} Weight              
 & 1e-2 & 1e-4 & 1e-3 & 1e-4 & 1e-5 & 1e-2 & 5e-2 & 1e-3 \\
 & Policy Learning Rate         
 & 1e-4 & 1e-4 & 1e-4 & 1e-4 & 1e-4 & 1e-4 & 1e-4 & 1e-4 \\
 & Policy Epochs                
 & 1001 & 1001 & 1001 & 1001 & 1001 & 1001 & 1001 & 1001 \\
\bottomrule

 \multirow{7}{*}{CQL} 
 & Policy Batch Size            
 & 128 & 128 & 128 & 128 & 256 & 128 & 128 & 128 \\
 & Critic Learning Rate
 & 3e-4 & 1e-4 & 1e-4 & 1e-4 & 1e-4 & 1e-5 & 1e-4 & 1e-4 \\
 & Policy Learning Rate         
 & 1e-4 & 1e-4 & 1e-4 & 1e-4 & 1e-4 & 1e-4 & 1e-4 & 1e-4 \\
 & Policy Epochs                
 & 3001 & 3001 & 1001 & 2001 & 1001 & 1001 & 1001 & 1001 \\
 & BC Training Epochs                
 & 1000 & 200 & 200 & 200 & 1000 & 200 & 200 & 200 \\
 & CQL $\alpha'$
 & 5.0 & 10.0  & 5.0 & 10.0 & 10.0 & 5.0 & 5.0 & 5.0 \\
 & CQL action samples 
 & 64 & 64 & 64 & 64 & 64 & 32  & 64 & 64 \\
\midrule

\multirow{8}{*}{TD3+BC} 
 & Pretrain Batch Size            
 & 128 & 128& 128 & 128 & 256 & 128 & 128 & 128 \\
 & Pretrain Alpha                 
 & 1e-3 & 1e-4 & 1e-3 & 1e-3 & 1e-3 & 1e-3 & 1e-3 & 1e-3\\
 & Pretrain Learning Rate        
 & 1e-4 & 1e-4 & 1e-3 & 1e-3 & 1e-5 & 1e-4 & 1e-3 & 1e-3 \\
 & Pretrain Epochs                
 & 1001 & 1001 & 1001 & 1001 & 1001 & 1001 & 1001 & 1001 \\
\cmidrule{2-10}
 & FT Batch Size            
 & 128 & 128 & 128 & 128 & 256 & 128 & 128 & 128 \\
 & FT Alpha                 
  & 1e-3 & 1e-4 & 1e-3 & 1e-3 & 1e-3 & 1e-3 & 1e-3 & 1e-3\\
 & FT Learning Rate         
 & 1e-3 & 1e-4 & 1e-3 & 1e-3 & 2e-4 & 1e-3 & 1e-3 & 1e-3\\
 & FT Epochs                
 & 1001 & 1001 & 1001 & 1001 & 1001 & 1001 & 1001 & 1001 \\
\midrule

\multirow{4}{*}{DT} 
 & Policy Batch Size            
 & 256 & 256 & 256 & 256 & 256 & 256 & 256 & 256 \\
 & Policy Learning Rate         
 & 1e-4 & 1e-4 & 1e-4 & 1e-4 & 1e-4 & 1e-4 & 5e-5 & 5e-5 \\
 & Policy Epochs                
 & 1001 & 1001 & 1001 & 1001 & 1001 & 1001 & 1001 & 1001 \\
 & Target Return               
 & 100.45082 & 18.32615 & 6.21332 &  5.72269 & 36.94823 & 22.09803 & 29.38271  & 27.49282\\
\end{longtable}

}

%% file: tables/comp_resource.tex
\begin{table}[H]
    \centering
    \small
    \caption{\textbf{Computation resources.}}
    \resizebox{\textwidth}{!}{
    \begin{tabular}{cccc}
        \toprule
        \textbf{Workstation} & CPU & GPU & RAM \\
        \midrule
        1 
        & Intel Xeon W-2255 
        & NVIDIA GeForce RTX 3090, NVIDIA GeForce RTX 3080
        & 125G \\
        2 
        & Intel Xeon W-2255 
        & NVIDIA GeForce RTX 3090, NVIDIA GeForce RTX 3080
        & 125G \\
        3 
        & Intel Xeon W-2255 
        & NVIDIA GeForce RTX 3080 Ti $\times$ 2
        & 125G \\
        4 
        & Intel Xeon W-2255 
        & NVIDIA GeForce RTX 4070 Ti $\times$ 2
        & 125G \\
        5 
        & Intel Xeon W-2255 
        & NVIDIA GeForce RTX 4070 Ti $\times$ 2
        & 125G \\
        6 
        & Intel Xeon W-2255 
        & NVIDIA GeForce RTX 4070 Ti $\times$ 2
        & 125G \\
        7 
        & Intel Xeon W-2255 
        & NVIDIA GeForce RTX 4070 Ti $\times$ 2
        & 125G \\
        8 
        & Intel Xeon W-2255 
        & NVIDIA GeForce RTX 4070 Ti $\times$ 2
        & 125G \\
        9 
        & Intel Xeon w7-2475X
        & NVIDIA GeForce RTX 4090 $\times$ 2
        & 125G \\
        10 
        & Intel Xeon w7-2475X
        & NVIDIA GeForce RTX 4090 $\times$ 2
        & 125G \\
        11 
        & Intel Xeon w5-2455X
        & NVIDIA RTX A6000 $\times$ 2
        & 125G \\
        12 
        & Intel Xeon Platinum 8480+
        & NVIDIA H100 $\times$ 2
        & 500G \\
        \bottomrule
    \end{tabular}
    \label{tab:comp_resource}
    }
\end{table}

%% file: tex/appendix/baseline_details.tex
\section{Baseline details}
\label{appx:baseline_details}

\subsection{\rbc{} and \rdp{}}
\label{appx:rbc_rdp_details}

We design two reward-based ablation baselines, \rbc{} and \rdp{}, as scalar-reward counterparts to \method{} and \methoddp{}, respectively. 
These baselines test whether the gains of language-critique imitation arise merely from learning an auxiliary scalar reward model, or from preserving the structured information contained in \lang{} supervision.
Both baselines follow the same overall pipeline as our language-based methods: first, a reward predictor is trained on the general dataset; second, the reward predictor is frozen and used to provide auxiliary supervision during policy learning on the expert dataset.
Thus, \rbc{} and \rdp{} replace the language model $\mu_\phi$ and the \lcloss{} with a scalar reward model $R_\phi$ and a reward-critique loss.

\paragraph{Reward model.}
Given a reward-labeled general dataset $\mathcal{D}^r_G=\{(s_t,a_t,r_t)\}$, we train a reward model $R_\phi$ to predict the scalar reward associated with each state-action pair. 
The reward model is optimized with a mean squared error objective:
\begin{align}
    \mathcal{L}_R^\phi
    =
    \mathbb{E}_{(s_t,a_t,r_t)\sim\mathcal{D}^r_G}
    \left[
        \left\| R_\phi(s_t,a_t) - r_t \right\|_2^2
    \right].
    \label{eq:reward_model_objective}
\end{align}
After training, $R_\phi$ is frozen and used for downstream policy optimization.

\paragraph{Reward-critique loss.}
Given the frozen reward model $R_\phi$, we define a reward-based analogue of the \lcloss{} on the expert dataset 
$\mathcal{D}^r_E=\{(s_t,a^E_t,r_t)\}$.
For a policy action $\hat{a}_t\sim\pi_\theta(\cdot\mid s_t)$, we define the policy reward-prediction loss and the expert reward-prediction loss as
\begin{align}
    \mathcal{L}_{R}(\pi_\theta)
    &=
    \mathbb{E}_{
        (s_t,r_t)\sim\mathcal{D}^r_E,
        \hat{a}_t\sim\pi_\theta(\cdot\mid s_t)
    }
    \left[
        \left\| R_\phi(s_t,\hat{a}_t) - r_t \right\|_2^2
    \right],
    \label{eq:policy_reward_loss}
    \\
    \mathcal{L}_{R}(\pi_E)
    &=
    \mathbb{E}_{
        (s_t,a^E_t,r_t)\sim\mathcal{D}^r_E
    }
    \left[
        \left\| R_\phi(s_t,a^E_t) - r_t \right\|_2^2
    \right].
    \label{eq:expert_reward_loss}
\end{align}
The reward-critique loss clips the policy loss at the expert loss:
\begin{align}
    \mathcal{L}_\text{RC}(\pi_\theta,\pi_E)
    =
    \left[
        \mathcal{L}_{R}(\pi_\theta)
        -
        \mathrm{sg}\!\left(\mathcal{L}_{R}(\pi_E)\right)
    \right]_+,
    \label{eq:reward_critique_loss}
\end{align}
where $[x]_+=\max(x,0)$ and $\mathrm{sg}(\cdot)$ denotes stop-gradient.
This objective penalizes policy actions whose predicted rewards are less consistent with the expert reward labels than the corresponding expert actions, and the penalty vanishes once the policy matches the expert under the frozen reward model.

\paragraph{Reward-critique behavior cloning.}
\rbc{} applies the reward-critique loss to a feedforward behavior cloning policy.
The standard behavior cloning objective is
\begin{align}
    \mathcal{L}_\text{BC}
    =
    \mathbb{E}_{(s_t,a^E_t)\sim\mathcal{D}_E,
    \hat{a}_t\sim\pi_\theta(\cdot\mid s_t)}
    \left[
        \left\| a^E_t-\hat{a}_t \right\|_2^2
    \right].
    \label{eq:rbc_bc_loss}
\end{align}
\rbc{} jointly optimizes the imitation loss and the reward-critique loss:
\begin{align}
    \mathcal{L}_\text{BC}
    +
    \lambda \mathcal{L}_\text{RC}(\pi_\theta,\pi_E),
    \label{eq:rbc_objective}
\end{align}
where $\lambda$ controls the strength of reward guidance.

\paragraph{Reward-critique diffusion policy.}
\rdp{} applies the same reward-critique principle to diffusion policy.
The diffusion policy $\epsilon_\theta$ is trained with the standard noise-prediction objective
\begin{align}
    \mathcal{L}_\text{DP}
    =
    \mathbb{E}_{(s_t,a^E_t)\sim\mathcal{D}_E,
    \epsilon\sim\mathcal{N}(0,\mathbf{I}),k}
    \left[
        \left\|
            \epsilon-\hat{\epsilon}
        \right\|_2^2
    \right],
    \qquad
    \hat{\epsilon}\sim\epsilon_\theta(\cdot\mid s_t,a_t^k,k).
    \label{eq:rdp_dp_loss}
\end{align}
Since the diffusion policy predicts noise rather than actions, we apply the reward-critique loss to the one-step reconstructed action
\begin{align}
    \hat{a}_t^0
    =
    \frac{1}{\sqrt{\bar{\alpha}^k}}a_t^k
    -
    \frac{\sqrt{1-\bar{\alpha}^k}}{\sqrt{\bar{\alpha}^k}}\hat{\epsilon},
    \qquad
    \hat{\epsilon}\sim\epsilon_\theta(\cdot\mid s_t,a_t^k,k).
    \label{eq:rdp_1step_reconstruct}
\end{align}
Following \Cref{appx:reweighting_factor_derive}, we use the timestep-dependent weight $\omega^k=\frac{\bar{\alpha}^k}{1-\bar{\alpha}^k}$ to reduce the imbalance caused by noisier reconstructions at larger diffusion steps.
The reweighted reward-prediction losses are
\begin{align}
    \tilde{\mathcal{L}}_{R}(\pi_\theta)
    &=
    \mathbb{E}_{
        (s_t,r_t)\sim\mathcal{D}^r_E,
        \hat{a}_t^0\sim\epsilon_\theta(s_t),
        k
    }
    \left[
        \omega^k
        \left\| R_\phi(s_t,\hat{a}_t^0) - r_t \right\|_2^2
    \right],
    \label{eq:reweight_policy_reward_loss}
    \\
    \tilde{\mathcal{L}}_{R}(\pi_E)
    &=
    \mathbb{E}_{
        (s_t,a^E_t,r_t)\sim\mathcal{D}^r_E,
        k
    }
    \left[
        \omega^k
        \left\| R_\phi(s_t,a^E_t) - r_t \right\|_2^2
    \right].
    \label{eq:reweight_expert_reward_loss}
\end{align}
The reweighted reward-critique loss is then
\begin{align}
    \tilde{\mathcal{L}}_\text{RC}(\pi_\theta,\pi_E)
    =
    \left[
        \tilde{\mathcal{L}}_{R}(\pi_\theta)
        -
        \mathrm{sg}\!\left(\tilde{\mathcal{L}}_{R}(\pi_E)\right)
    \right]_+.
    \label{eq:reweight_reward_critique_loss}
\end{align}
The final \rdp{} objective is
\begin{align}
    \mathcal{L}_\text{DP}
    +
    \lambda \tilde{\mathcal{L}}_\text{RC}(\pi_\theta,\pi_E).
    \label{eq:rdp_objective}
\end{align}
\rdp{} therefore trains the diffusion policy to predict noise such that the reconstructed actions both remain close to expert actions and match the expert scalar rewards under the frozen reward model.
Unlike \methoddp{}, however, \rdp{} compresses behavioral supervision into scalar rewards and therefore cannot preserve the structured behavioral distinctions provided by \lang{}.

\subsection{\methodcls{} and \methoddpcls{}}
\label{appx:mhcbc_mhcdp_details}

We design two multi-head categorical baselines, \methodcls{} and \methoddpcls{}, as discrete-classification variants to \method{} and \methoddp{}, respectively.
These baselines convert each \lang{} $l_t$ into a tuple of categorical indices $c_t=(c_t^T, c_t^A, c_t^M)$ corresponding to the task progress \texttt{<T>}, action optimality \texttt{<A>}, and movement guidance \texttt{<M>}, discarding the natural-language form.

We test whether the gains of language-critique imitation arise from the expressiveness of natural language and the capacity of a pretrained language model to capture inter-label dependencies, or the independent categorical predictions over the same label dimensions.
Both baselines follow the same two-stage pipeline: first, a multi-head classifier is trained on the general dataset with categorical labels $c_t$; second, the classifier is frozen and used for auxiliary supervision during policy learning on the expert dataset.
Thus, \methodcls{} and \methoddpcls{} replace the language model $\mu_\phi$ and the \lcloss{} with a multi-head classifier $C_\phi$ and a classifier-critique loss.

\paragraph{Classifier model.}
Each \lang{} is decomposed into $H$ independent categorical dimensions corresponding to task progress \texttt{<T>}, action optimality \texttt{<A>}, and movement guidance \texttt{<M>}, with $C_h$ classes for each dimension $h\in\{1,\dots, H\}$.
Given a categorically labeled general dataset $\mathcal{D}^c_G=\{(s_t,a_t,c_t)\}$, where $c_t=(c_t^1,\ldots,c_t^H)\in\{1,\ldots,C_1\}\times\cdots\times\{1,\ldots,C_H\}$ encodes the class index for each dimension, we train a multi-head classifier $C_\phi$ that maps each state-action pair to per-head logits.
The classifier consists of a shared MLP encoder followed by $H$ parallel linear heads, where head $h$ outputs logits $C_\phi^h(s_t,a_t)\in\mathbb{R}^{C_h}$.
The classifier is optimized with a cross-entropy objective averaged over heads:
\begin{align}
    \mathcal{L}_C^\phi
    =
    \mathbb{E}_{(s_t,a_t,c_t)\sim\mathcal{D}^c_G}
    \left[
        \frac{1}{H}\sum_{h=1}^{H}
        \mathrm{CE}\!\left(C_\phi^h(s_t,a_t),\; c_t^h\right)
    \right],
    \label{eq:classifier_model_objective}
\end{align}
where $\mathrm{CE}(\cdot,\cdot)$ denotes the standard cross-entropy loss.
After training, $C_\phi$ is frozen and used for downstream policy optimization.

\paragraph{Classifier-critique loss.}
Given the frozen classifier $C_\phi$, we define a classification-based analogue of the \lcloss{} on the expert dataset
$\mathcal{D}^c_E=\{(s_t,a^E_t,c_t)\}$.
For a policy action $\hat{a}_t\sim\pi_\theta(\cdot\mid s_t)$, the policy classification loss and the expert classification loss are
\begin{align}
    \mathcal{L}_{C}(\pi_\theta)
    &=
    \mathbb{E}_{
        (s_t,c_t)\sim\mathcal{D}^c_E,\;
        \hat{a}_t\sim\pi_\theta(\cdot\mid s_t)
    }
    \left[
        \frac{1}{H}\sum_{h=1}^{H}
        \mathrm{CE}\!\left(C_\phi^h(s_t,\hat{a}_t),\; c_t^h\right)
    \right],
    \label{eq:policy_classifier_loss}
    \\
    \mathcal{L}_{C}(\pi_E)
    &=
    \mathbb{E}_{
        (s_t,a^E_t,c_t)\sim\mathcal{D}^c_E
    }
    \left[
        \frac{1}{H}\sum_{h=1}^{H}
        \mathrm{CE}\!\left(C_\phi^h(s_t,a^E_t),\; c_t^h\right)
    \right].
    \label{eq:expert_classifier_loss}
\end{align}
The classifier-critique loss clips the policy loss at the expert loss:
\begin{align}
    \mathcal{L}_\text{CC}(\pi_\theta,\pi_E)
    =
    \left[
        \mathcal{L}_{C}(\pi_\theta)
        -
        \mathrm{sg}\!\left(\mathcal{L}_{C}(\pi_E)\right)
    \right]_+.
    \label{eq:classifier_critique_loss}
\end{align}
This objective penalizes policy actions whose predicted class distributions are less consistent with the expert categorical labels than the corresponding expert actions.

\paragraph{Classifier-critique behavior cloning.}
\methodcls{} jointly optimizes the behavior cloning loss (\Cref{eq:rbc_bc_loss}) and the classifier-critique loss:
\begin{align}
    \mathcal{L}_\text{BC}
    +
    \lambda \mathcal{L}_\text{CC}(\pi_\theta,\pi_E),
    \label{eq:mhcbc_objective}
\end{align}
where $\lambda$ controls the strength of classifier guidance.

\paragraph{Classifier-critique diffusion policy.}
\methoddpcls{} applies the classifier-critique principle to diffusion policy.
As in \rdp{}, the classifier-critique loss is applied to the one-step reconstructed action $\hat{a}_t^0$ (\Cref{eq:rdp_1step_reconstruct}).
Following \Cref{appx:reweighting_factor_derive}, we use the timestep-dependent weight $\omega^k=\frac{\bar{\alpha}^k}{1-\bar{\alpha}^k}$.
The reweighted classification losses are
\begin{align}
    \tilde{\mathcal{L}}_{C}(\pi_\theta)
    &=
    \mathbb{E}_{
        (s_t,c_t)\sim\mathcal{D}^c_E,\;
        \hat{a}_t^0\sim\epsilon_\theta(s_t),\;
        k
    }
    \left[
        \frac{\omega^k}{H}\sum_{h=1}^{H}
        \mathrm{CE}\!\left(C_\phi^h(s_t,\hat{a}_t^0),\; c_t^h\right)
    \right],
    \label{eq:reweight_policy_classifier_loss}
    \\
    \tilde{\mathcal{L}}_{C}(\pi_E)
    &=
    \mathbb{E}_{
        (s_t,a^E_t,c_t)\sim\mathcal{D}^c_E,\;
        k
    }
    \left[
        \frac{\omega^k}{H}\sum_{h=1}^{H}
        \mathrm{CE}\!\left(C_\phi^h(s_t,a^E_t),\; c_t^h\right)
    \right].
    \label{eq:reweight_expert_classifier_loss}
\end{align}
The reweighted classifier-critique loss is then
\begin{align}
    \tilde{\mathcal{L}}_\text{CC}(\pi_\theta,\pi_E)
    =
    \left[
        \tilde{\mathcal{L}}_{C}(\pi_\theta)
        -
        \mathrm{sg}\!\left(\tilde{\mathcal{L}}_{C}(\pi_E)\right)
    \right]_+.
    \label{eq:reweight_classifier_critique_loss}
\end{align}
The final \methoddpcls{} objective is
\begin{align}
    \mathcal{L}_\text{DP}
    +
    \lambda \tilde{\mathcal{L}}_\text{CC}(\pi_\theta,\pi_E).
    \label{eq:mhcdp_objective}
\end{align}

\paragraph{Comparison with \method{}/\methoddp{}.} 
\methodcls{} and \methoddpcls{} preserve the same structured label dimensions as \lang{} (\texttt{<T>}, \texttt{<A>}, \texttt{<M>}) but discard two properties of language-based supervision: (i) the inter-label dependencies captured by the autoregressive pretrained language model, and (ii) the grounding of categorical distinctions in natural-language semantics through pretrained token embeddings.
By reducing language labels to independent categorical predictions, these baselines isolate whether the structured label dimensions alone drive the gains, or whether the expressiveness of natural language supervision is essential.

%% file: tex/appendix/add_experiments.tex
\newpage

\section{Additional experimental results}
\label{appx:add_exp}

\subsection{VLM-generated \lang{}s.}
\label{appx:vlm_labels}

In \Cref{tab:lang_ablation}, we study whether general-purpose VLMs can serve as an alternative to our proposed language generator $\mu_g$ on the task \boxclose{}. Specifically, we replace the generator $\mu_g$ in \Cref{approach:lang_design} with \texttt{o4-mini} prompting, using the template in \Cref{fig:vlm_prompt_temp}. However, directly using \texttt{o4-mini}-generated \lang{}s with the small \texttt{SmolLM2-135M-Instruct} \llm{} yields limited gains: \method{} reaches only $55.2\%$, which is comparable to shuffled $\mu_g$ labels ($55.6\%$), and \methoddp{} reaches $63.6\%$. This suggests that open-ended VLM labels are not automatically useful for language-critique imitation. One possible reason is that VLM labels are often verbose, diverse, and redundant, which makes their distribution harder for $\mu_\phi$ to model and may introduce noisy supervision during policy training.

We therefore evaluate two ways to improve VLM-based supervision. First, we increase the capacity of $\mu_\phi$ from \texttt{SmolLM2-135M-Instruct} to \texttt{SmolLM2-360M-Instruct}, testing whether a stronger captioner can better model the more diverse \texttt{o4-mini} label distribution. This improves both methods, with \method{} increasing from $55.2\%$ to $61.6\%$ and \methoddp{} increasing from $63.6\%$ to $71.2\%$, even surpassing the $\mu_g$ result for \methoddp{}. Second, we post-process \texttt{o4-mini} labels to reduce linguistic redundancy. Asking an LLM to make the labels concise improves \method{} to $63.4\%$ but does not improve \methoddp{}. In contrast, using an LLM to rewrite \texttt{o4-mini} labels into a $\mu_g$-style structured format performs worse, reaching only $56.4\%$ for \method{} and $62.4\%$ for \methoddp{}. This indicates that surface-level structure or style alone is insufficient.

To understand this failure, we compare \texttt{o4-mini} labels against $\mu_g$ labels by treating $\mu_g$ as a feature-aligned reference. The match rates are low: \texttt{o4-mini} agrees with $\mu_g$ on only $51.8\%$ of \texttt{<T>} descriptions, $46.1\%$ of \texttt{<A>} descriptions, and $18.8\%$ of \texttt{<M>} descriptions. 
Specifically, accuracy is particularly low for fine-grained movement guidance ($18.8\%$ for <M> labels) compared to higher-level task progress ($51.8\%$ for <T>). This suggests that current VLMs struggle with the precise spatial reasoning required to describe state-action effects in continuous control.
Overall, these results suggest that the degraded performance of VLM-generated labels comes from two sources: linguistic verbosity and semantic inaccuracy. The latter is especially severe for fine-grained movement guidance, where accurate \texttt{<M>} labels require precise spatial reasoning over state-action effects.
We further isolate the effects of linguistic form versus semantic content using two controlled $\mu_g$ variants:
\begin{itemize}
    \item $\mu_g$-shuffled: Preserves the concise, structured form but destroys semantic alignment.
    \item $\mu_g$-verbose: Preserves semantic correctness but rewrites labels into diverse, verbose, and human-like language.
\end{itemize}
The $\mu_g$-shuffled baseline preserves concise and structured label form but destroys semantic accuracy, while the $\mu_g$-verbose variant preserves semantic correctness but rewrites labels into more diverse, verbose, and human-like language. For \method{}, $\mu_g$-verbose substantially improves over shuffled labels ($68.0\%$ vs. $55.6\%$), suggesting that semantic accuracy is more important than concise surface form for feedforward policies. Although $\mu_g$-verbose still underperforms the original concise $\mu_g$ labels ($80.4\%$), it remains clearly beneficial when the label content is accurate. For \methoddp{}, however, $\mu_g$-verbose does not improve over shuffled labels and remains below the best \texttt{o4-mini} result with the larger \llm{}. This suggests that diffusion-policy training is more sensitive to label distribution complexity and benefits more from concise or well-modeled language supervision. This is aligned with the observations from \texttt{o4-mini}-generated labels plus the larger \llm{} backbone \texttt{SmolLM2-360M-Instruct}.

Overall, VLM-generated \lang{}s can be useful when paired with a stronger \llm{}, but $\mu_g$ remains the most reliable source of supervision because it provides both semantic consistency and feature-aligned structure to exploit useful behaviors to optimize policy from expert and suboptimal demonstrations.

\subsection{\llm{} ablation}
\label{appx:llm_ablation}

\input{tables/llm_captioner_ablation}

Table~\ref{tab:llm_captioner_ablation} ablates the choice of \llm{} backbone and the degree to which it is finetuned. We vary the backbone across \texttt{SmolLM2-135M-Instruct}~\citep{allal2025smollm2}, \texttt{SmolLM2-360M-Instruct}~\citep{allal2025smollm2}, \texttt{Qwen2-0.5B-Instruct}~\citep{yang2024qwen2}, and \texttt{Mistral-7B-Instruct-v0.3}~\citep{jiang2023mistral}, and consider three finetuning regimes: full-model finetuning, projector-only finetuning (with the \llm{} backbone frozen), and a pretrained baseline that uses the \llm{} off-the-shelf without any task-specific adaptation. Please refer to \Cref{appx:llm_captioner_details} for architectural details.

First, adaptation matters far more than scale: the smallest backbone, \texttt{SmolLM2-135M-Instruct}, achieves the best \method{} score $80.4\%$ when fully finetuned, and degrades substantially when only the projector is trained $62.0\%$ or when the pretrained model is used $53.6\%$. Projector-only training on the much larger \texttt{Mistral-7B-Instruct-v0.3} ($61.2\%$) does not recover this gap, suggesting that simply increasing parameter count without exposing the language model to the task is insufficient. 

Second, scaling the backbone under full-model finetuning does not yield consistent gains—\texttt{SmolLM2-360M-Instruct} $55.2\%$ and \texttt{Qwen2-0.5B-Instruct} 67.2\% both underperform the 135M variant and exhibit higher variance across seeds. Results on \methoddp{} are noisier and show a smaller spread across configurations, but the same qualitative ordering holds, with full finetuning of the 135M backbone remaining competitive. 
We therefore adopt \texttt{SmolLM2-135M-Instruct} with full-model finetuning as our default \llm{}: it is the smallest, fastest option and also the most accurate, indicating that the captioner's backbone is not the main bottleneck for our methods.

\subsection{\llm{} training data analysis}
\label{appx:llm_data_analysis}

Our framework uses the general dataset $\mathcal{D}_G$ and the expert dataset $\mathcal{D}_E$, following the typical problem setup of imitation learning from suboptimal demonstrations. The policy is trained on $\mathcal{D}_E$ (with the standard BC or diffusion objective on expert actions and the \lcloss{} evaluated on expert language labels), while the \llm{} is fine-tuned on the full $\mathcal{D}_G^{\text{lang}}$. Since the policy training objective does not directly consume suboptimal trajectories, it is important to verify that the general dataset provides a meaningful contribution through the \llm{}. To this end, we train an expert-only captioner $\mu_\phi^E$ on $\mathcal{D}_E^{\text{lang}}$ alone and use it for downstream \method{} and \methoddp{} training. Table~\ref{tab:captioner_data_ablation} reports results on \boxclose{} over 50 environment rollouts and 5 training seeds.

\input{tables/llm_training_data_analysis}

Removing $\mathcal{D}_G$ from captioner training degrades \method{} from $80.4\%$ to $59.2\%$--recovering only $4.4\%$ over plain BC ($54.8\%$). For \methoddp{}, the effect is more severe: the expert-only captioner drops performance to $56.0\%$, below the plain DP baseline ($64.8\%$), indicating that a captioner without contrastive supervision can actively harm policy learning.

When $\mu_\phi$ is trained on the full $\mathcal{D}_G^{\text{lang}}$, it observes both expert and suboptimal behaviors paired with their structured labels and learns a discriminative mapping that distinguishes good from bad behaviors. The \lcloss{} then transmits this discrimination to the policy at expert states. When $\mu_\phi$ is trained only on $\mathcal{D}_E^{\text{lang}}$, it never observes negative or corrective labels, so its likelihood landscape over \lang{}s becomes unable to distinguish action quality. The resulting \lcloss{} provides weak, non-discriminative gradients for \method{} and noisy gradients that destabilize diffusion training for \methoddp{}. These results confirm that the general dataset is essential despite never entering the policy objective directly. $\mathcal{D}_G$ provides the contrastive supervision that shapes $\mu_\phi$, which in turn transmits behavioral-quality distinctions to the policy via the \lcloss{}. Without it, the framework collapses toward standard expert-only imitation.

\subsection{$\lambda$ sensitivity}
\label{appx:lambda_sensitivity}

We study the effect of the coefficient $\lambda$, which balances the behavior cloning (BC) objective and the \lcloss{} for \method{}, and the diffusion objective and the \lcloss{} for \methoddp{}. In \Cref{fig:lambda_sensitivity}, we evaluate $\lambda \in \{1.0, 0.1, 0.01, 0.001, 0.0001\}$. The results show that feedforward policies improve consistently by a large margin, but are sensitive to larger values of $\lambda$, with $\lambda \ge 1.0$ leading to degraded performance, whereas diffusion policies improve over $\lambda \in \{1.0, 0.1, 0.0001\}$, but degrade at moderate $\lambda$.

\input{figures/lambda_sensitivity}

%% file: tables/llm_captioner_ablation.tex
\begin{table}
  \caption{\textbf{\llm{} ablation.} We ablate the choice of \llm{} backbone and finetuning strategy, reporting mean and standard deviation across 50 environment rollouts and 5 seeds for both \method{} and \methoddp{} on the task \boxclose{}. (\% omitted.)}
  \label{tab:llm_captioner_ablation}
  \begin{center}
    \begin{small}
      \scalebox{1.0}{
        \begin{tabular}{cccc}
          \toprule
 Backbone & Finetune mode & \method{} & \methoddp{} \\
\midrule
    \texttt{SmolLM2-135M-Instruct}
    & Full-model
    & \textbf{80.4} $\pm$ 4.6 & \textbf{69.2} $\pm$ 6.7 \\

    \texttt{SmolLM2-135M-Instruct}
    & Projector-only
    & 62.0 $\pm$ 14.3 & \underline{66.4} $\pm$ 7.1 \\

    \texttt{SmolLM2-135M-Instruct}
    & Pretrained
    & 53.6 $\pm$ 8.2 & 60.0 $\pm$ 15.0 \\
    
    \texttt{SmolLM2-360M-Instruct}
    & Full-model
    & 55.2 $\pm$ 6.6 & 66.0 $\pm$ 4.9 \\
    
    \texttt{Qwen2-0.5B-Instruct}
    & Full-model
    & \underline{67.2} $\pm$ 17.1 & 62.4 $\pm$ 6.1 \\

    \texttt{Mistral-7B-Instruct-v0.3}
    & Projector-only
    & 61.2 $\pm$ 7.9 & 58.8 $\pm$ 5.4 \\
          \bottomrule
        \end{tabular}
      }
    \end{small}
  \end{center}
\end{table}

%% file: tables/llm_training_data_analysis.tex
\begin{table}[h]
\centering
\caption{\textbf{LLM-Captioner training data analysis.} We compare \llm{} training on the full general dataset $\mathcal{D}_G$ against training on the expert subset $\mathcal{D}_E$ only, evaluated on \boxclose{} over 50 rollouts and 5 seeds (\% omitted). Removing $\mathcal{D}_G$ from captioner training degrades \method{} by $21.2\%$ and \methoddp{} by $13.2\%$, driving \methoddp{} below the plain DP baseline.}

\label{tab:captioner_data_ablation}
    \begin{tabular}{llc}
        \toprule
        Method & $\mu_\phi$ training data & \boxclose{} \\
        \midrule
        BC & -- & $54.8 \pm 8.4$ \\
        \method{} & $\mathcal{D}_E$ only & $59.2 \pm 10.4$ \\
        \method{} & $\mathcal{D}_G$ (ours) & $\mathbf{80.4 \pm 4.6}$ \\
        \midrule
        DP & -- & $64.8 \pm 5.0$ \\
        \methoddp{} & $\mathcal{D}_E$ only & $56.0 \pm 9.1$ \\
        \methoddp{} & $\mathcal{D}_G$ (ours) & $\mathbf{69.2 \pm 6.7}$ \\
        \bottomrule
    \end{tabular}
\end{table}

%% file: figures/lambda_sensitivity.tex
\begin{figure}[H]
    \centering
        \begin{subfigure}[b]{0.495\textwidth}
            \centering
            \includegraphics[width=\textwidth]{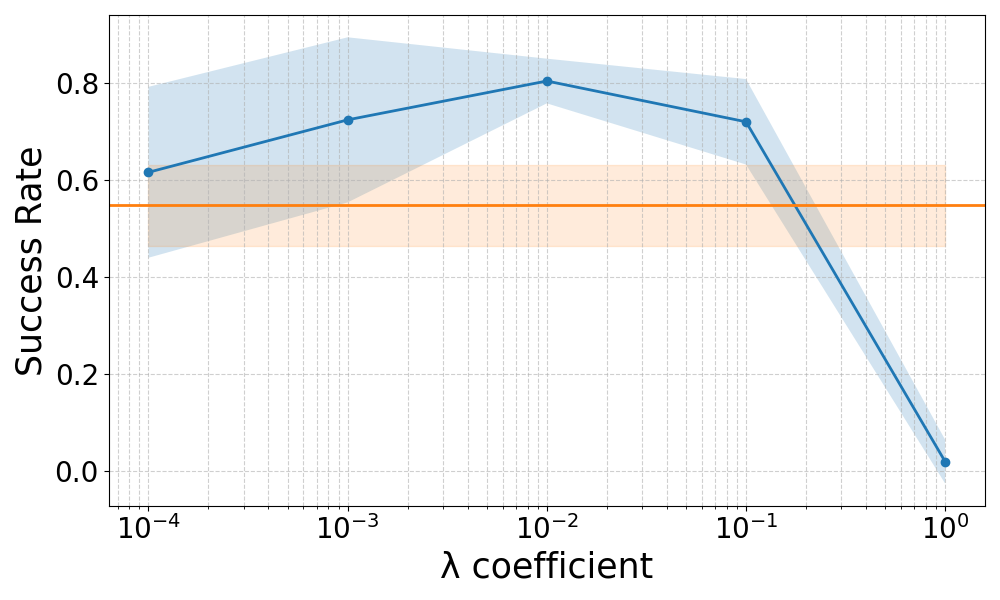}
            \caption{\method{} $\lambda$ sensitivity}
            \label{fig:lambda_sensitivity_lcbc}
        \end{subfigure}
        \hfill
        \begin{subfigure}[b]{0.495\textwidth}
            \centering
            \includegraphics[width=\textwidth]{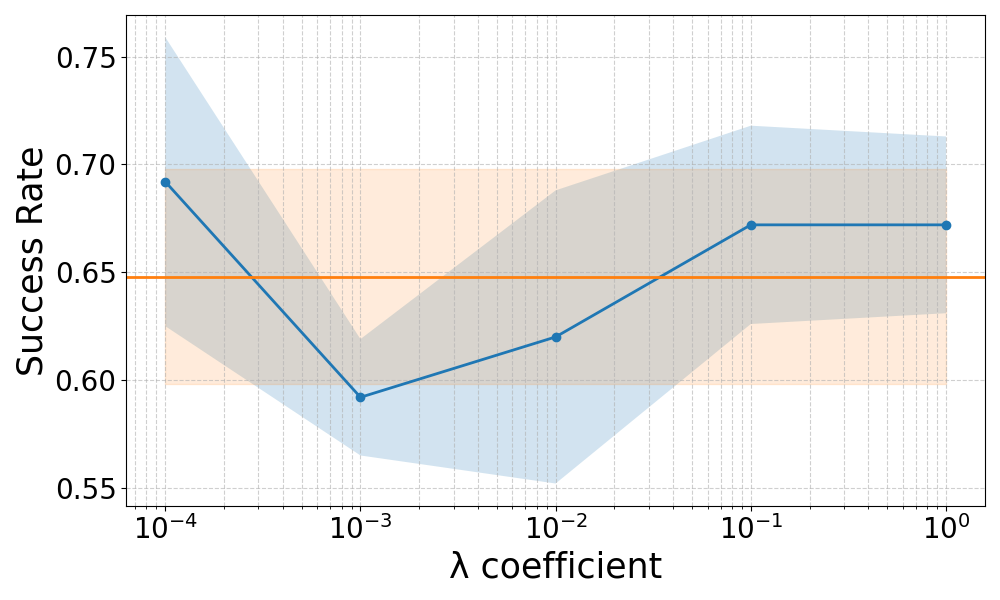}
            \caption{\methoddp{} $\lambda$ sensitivity}
            \label{fig:lambda_sensitivity_lcdp} 
        \end{subfigure}
    \caption[]{\textbf{$\lambda$ sensitivity.}
    Blue curves represent \method{} and \methoddp{}, while orange horizontal lines represent BC and DP performance. We vary the balancing coefficient $\lambda \in \{1.0, 0.1, 0.01, 0.001, 0.0001\}$.}
    \label{fig:lambda_sensitivity}
\end{figure}

%% file: tex/appendix/lc_loss_rollouts.tex
\section{Qualitative results of \lcloss{} and \llm{}}
\label{appx:lc_loss_rollouts}

To evaluate the qualitative performance, we visualize the execution of \peginsertionside{} and \boxclose{} tasks. We display six key frames for each, accompanied by the instantaneous \lcloss{} and the generated \lang{}s. The results demonstrate that \method{} achieves significantly lower \lcloss{} compared to BC due to the direct optimization of this objective. This stability allows \method{} to generate \lang{}s that offer clear, corrective semantic guidance even when the agent faces potential misalignment. This capability to detect and correct errors via language ensures robust performance and task success.

In the \peginsertionside{} task (\Cref{fig:pegInsertion_rollout}), the \method{} agent successfully aligns and inserts the peg and maintains a perfect \lcloss{} of 0.0. The generated \lang{}s provide accurate, actionable corrections, and generate precise instructions like, \textit{"Yet the action is not good for the state, as the peg misaligns with the hole. Also, you should yaw toward the left softly."} allowing it to identify the misalignment of the peg and the hole and include accurate correction, which leads to the success of the task. In contrast, the BC policy achieves a relatively low \lcloss{}, but the peg remains outside the hole; when the observation deviates from the desired state, the policy fails to recover.

\begin{figure}[htp]
    \centering
    \begin{minipage}{\textwidth}
        \centering
        \includegraphics[width=\linewidth]{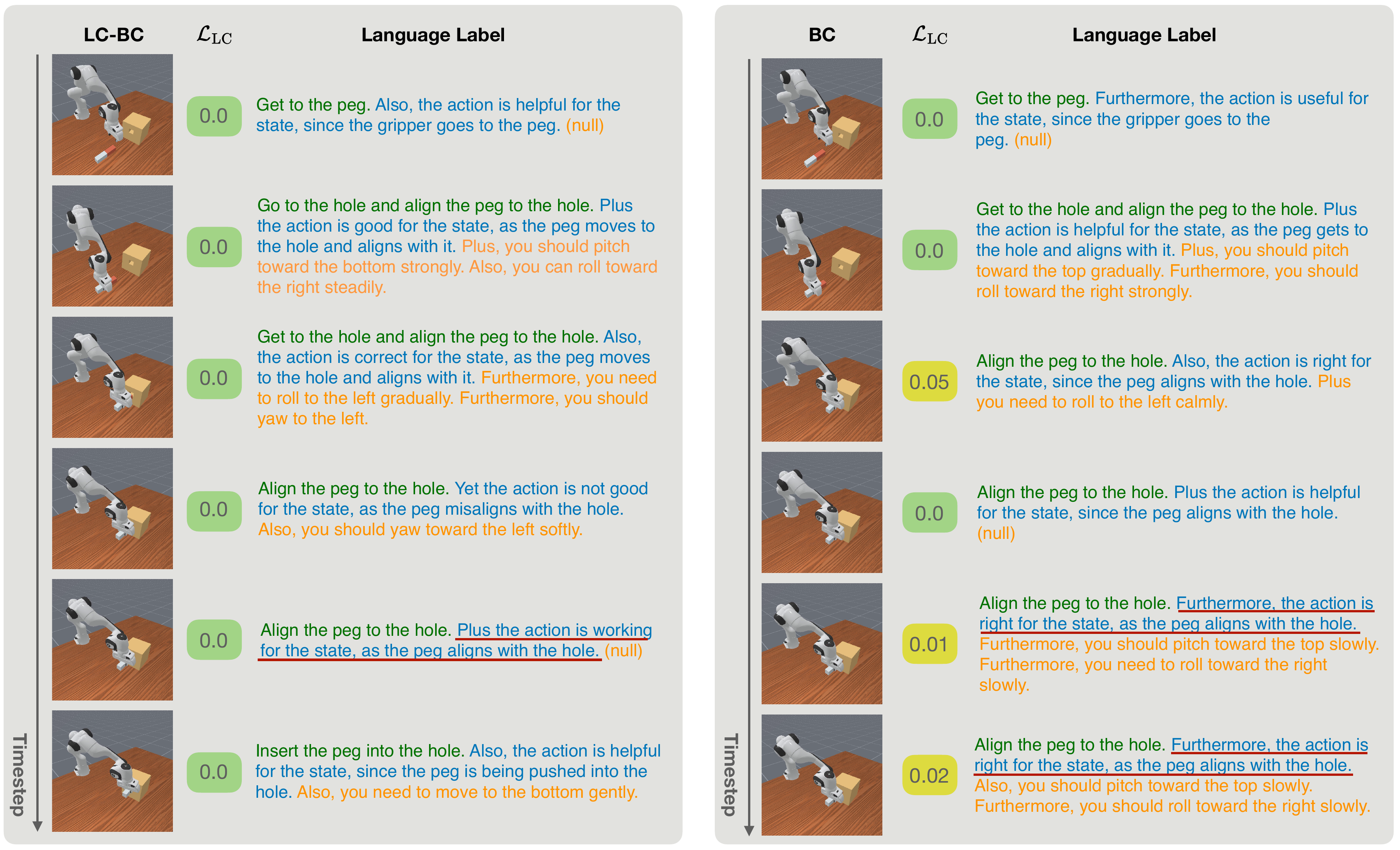}
        \caption{\textbf{Qualitative result of \peginsertionside{}.}}
        \label{fig:pegInsertion_rollout}
    \end{minipage}
\end{figure}

\begin{figure}[htp]
    \centering
    \begin{minipage}{\textwidth}
        \centering
        \includegraphics[width=\linewidth]{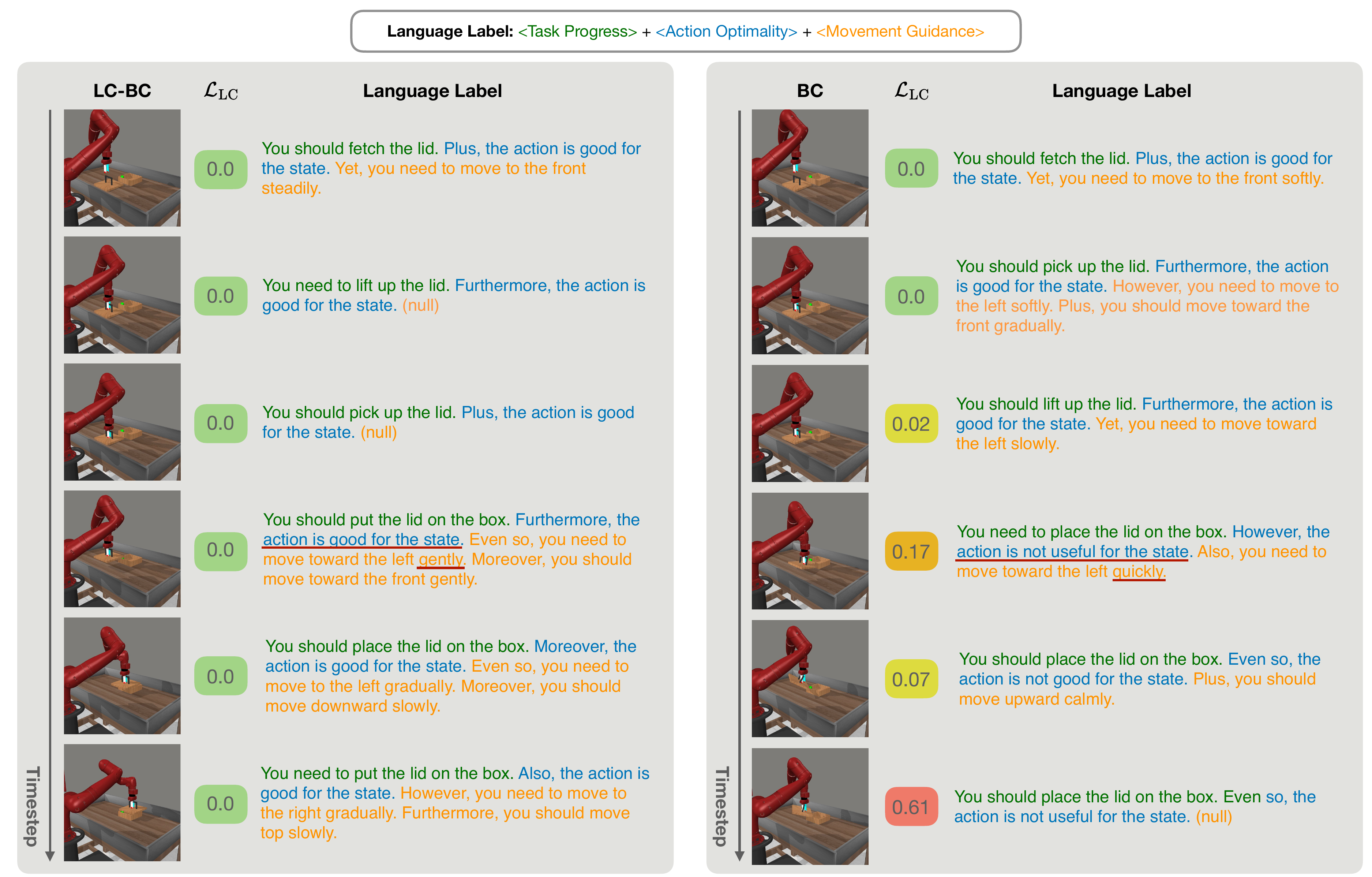}
        \caption{\textbf{Qualitative result of \boxclose{}.}}
        \label{fig:box-close_rollout}
    \end{minipage}
\end{figure}

For \boxclose{} task (\Cref{fig:box-close_rollout}), the \method{} agent maintains a consistently low \lcloss{} throughout the episode; this stability is reflected in the generated language labels. From the fourth key frame of \method{}, it shows that instructions like \textit{"the action is good for the state."} and \textit{"you need to move toward the left gently"} provide assessment of action and movement guidance. As a result, the \method{} agent successfully aligns the lid and closes the box. On the other hand, the BC agent struggles as the task progresses, its \lcloss{} spikes significantly (reaching 0.17 and 0.61) in the later stages, and the BC agent cannot recover from errors once the observation drifts from the training distribution, thus resulting in failure of the task.

%% file: tex/appendix/impact.tex
\section{Impact statement}
\label{appx:impact}

This work studies the use of natural-language feedback as a supervision signal for offline imitation learning in continuous-control settings. By providing structured guidance, our approach improves learning from suboptimal datasets. Our approach directly advances the potential applications of continuous-control systems, with promising applications in robotic control. We do not foresee immediate negative societal impacts beyond these considerations.

%% file: tex/appendix/limitation.tex
\section{Limitations}
\label{appx:limitation}
While our proposed framework demonstrates strong performance on continuous-control benchmarks, it has several limitations.
The language feedback used as supervision may reflect implicit biases derived from its source, and structured language critiques are most naturally suited to tasks with semantically interpretable motions; for tasks such as locomotion, where behavioral distinctions are difficult to articulate in natural language, \lang{}s may be less informative.
Additionally, the two-stage pipeline introduces computational overhead relative to standard behavior cloning, as training and evaluating the language model $\mu_\phi$ adds cost despite being frozen during policy learning.

Our current label generator $\mu_g$ relies on designed heuristics over privileged state information; replacing it with a vision-language model would broaden applicability to image-based settings, but existing VLMs do not yet produce the fine-grained, action-level critiques our framework requires, making this a promising direction for future work.
More broadly, our empirical evaluations are restricted to offline continuous-control benchmarks, and it remains an open question how well the approach generalizes to more complex and dynamic real-world robotic settings.